\documentclass{article}

\usepackage{microtype}
\usepackage{graphicx}
\usepackage{subcaption}
\usepackage{booktabs} 
\usepackage{multirow}
\usepackage{hyperref}

\usepackage[accepted]{icml2026}

\usepackage{amsmath}
\usepackage{amssymb}
\usepackage{mathtools}
\usepackage{amsthm}

\usepackage[capitalize,noabbrev]{cleveref}

\theoremstyle{plain}
\newtheorem{theorem}{Theorem}[section]
\newtheorem{proposition}[theorem]{Proposition}
\newtheorem{lemma}[theorem]{Lemma}
\newtheorem{corollary}[theorem]{Corollary}
\theoremstyle{definition}
\newtheorem{definition}[theorem]{Definition}
\newtheorem{assumption}[theorem]{Assumption}
\theoremstyle{remark}

\usepackage[textsize=tiny]{todonotes}

\icmltitlerunning{Theory of Continual Learning Against Data Poisoning Attacks}

\begin{document}

\twocolumn[
  \icmltitle{Theory of Continual Learning Against Data Poisoning Attacks}



\icmlsetsymbol{equal}{*}

\begin{icmlauthorlist}
  \icmlauthor{Yiting Hu}{sutd,hkustgz}
  \icmlauthor{Lingjie Duan}{hkustgz}
\end{icmlauthorlist}

\icmlaffiliation{sutd}{Singapore University of Technology and Design, Singapore}
\icmlaffiliation{hkustgz}{The Hong Kong University of Science and Technology (Guangzhou), Guangzhou, China}

\icmlcorrespondingauthor{Lingjie Duan}{lingjieduan@hkust-gz.edu.cn}

  \icmlkeywords{Machine Learning, ICML}

  \vskip 0.3in
]



\printAffiliationsAndNotice{}  

\begin{abstract}
Continual learning (CL), where a model is trained on a sequence of data tasks, is increasingly being adopted across key fields such as large language models and image recognition, yet it remains highly vulnerable to data poisoning that triggers learning divergence or severe excess risk. Despite these threats, a principled theoretical foundation in CL for understanding attack and defense remains lacking. In this paper, we develop a theoretical framework to analyze strategic attacks and defenses in regularization-based CL, a cornerstone of recent CL theory. By framing the adversary-defender interaction as an online zero-sum game, we first establish a fundamental performance limit: no defense succeeds when an adversary poisons a linear proportion of tasks by injecting unbounded noise or pattern shifts in regularization-based CL. We then analyze two possibly defensible scenarios: infrequent attacks and bounded noise per attack. For the former regime, we propose a task-to-task verification mechanism to detect data poisoning and reduce cumulative bias for learning convergence. For the latter regime, we derive a robust defense that minimizes the model’s sensitivity to poisoned features, provably accelerating the convergence rate. Extensive experiments on realistic tasks further validate our theoretical results.
\end{abstract}

\section{Introduction}

Recently, continual learning (CL) has become prominent in machine learning as models increasingly need to learn from sequentially arriving data tasks. The major challenge in CL is managing the stability-plasticity trade-off: adapting to new data distributions while preventing the catastrophic forgetting of previous knowledge. Numerous algorithms employing empirical and experimental methods have been proposed to manage knowledge transfer \cite{doi:10.1073/pnas.1611835114, Aljundi_2018_ECCV, liu2022continual, chaudhry2018efficient, NEURIPS2021_f45a1078}. To move beyond heuristic approaches, a growing body of literature focuses on regularization-based CL to provide rigorous generalization bounds \cite{10.5555/3618408.3619556,Peng-ICLR2025}. For example, \citet{evron2022catastrophic} and \citet{lin2023theory} analyze forgetting loss in the minimum norm estimator for continual linear regression, while \citet{zhao2024statistical} analyze regularization-based CL, introducing a provably optimal regularization-based algorithm for continual linear regression. Another line of work analyzes CL in the Neural Tangent Kernel (NTK) regime, bridging between nonlinear and linear CL models \cite{bennani2020generalisation, doan2021theoretical}. These studies underscore the potential of regularization-based CL to bridge the gap between theory and practice.

However, prior theoretical studies overlook potential security threats in CL. Recent literature provides empirical evidence through experimental evaluation that CL models are susceptible to various adversarial exploits. These experimental findings range from targeted attacks, such as backdoors designed to trigger specific misclassifications \cite{9206809, 10.5555/3766078.3766406}, to availability attacks that demonstrably degrade overall generalization performance via data poisoning. In particular, \citet{han2023data}, \citet{li2022targeted}, and \citet{abbasi2024brainwash} demonstrate how data poisoning can erase learned knowledge and induce catastrophic forgetting, revealing the inherent susceptibility of CL models. Nevertheless, a principled theoretical foundation remains elusive, leaving the underlying mechanisms governing adversarial dynamics in the CL regime largely uncharacterized.

A unique vulnerability of CL is the capacity for persistent adversarial manipulation throughout the learning trajectory. By targeting multiple tasks over time, an adversary can sequentially bias the model, steering its long-term convergence toward an adversarial goal. We illustrate these attacks in Fig. \ref{P1}, where tasks $t$ and $t'$ are poisoned within a specific budget to prevent the convergence of excess risk. 

\begin{figure}
    \centering
    \includegraphics[width=0.7\linewidth]{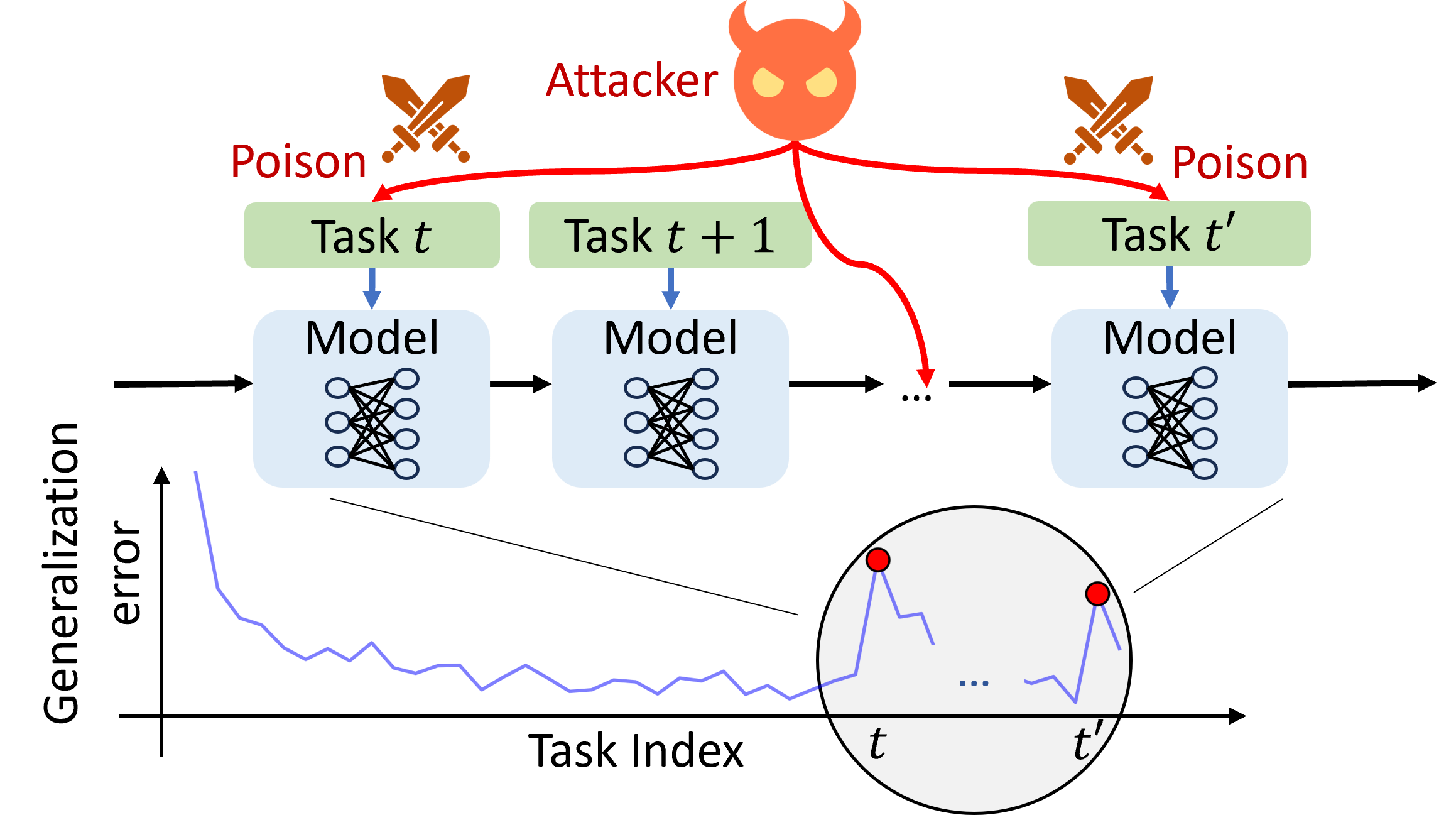}
    \caption{In CL, a new task arrives at each time step, and the model is trained on this task with the expectation of gradual convergence over time. Meanwhile, an adversary can poison specific tasks (e.g., at time $t$ and $t'$) in CL to boost the excess risk.}
    \label{P1}
\end{figure}
\begin{table} 
  \centering
  \caption{Provable Defensibility of Data Poisoning Attacks in Continual Learning}
  \label{table1}
  \resizebox{\columnwidth}{!}{ 
    \begin{tabular}{ccc c} 
      \toprule
      \multicolumn{3}{c}{\textbf{Data Poisoning Characteristics}} & \multirow{2}{*}{\textbf{Our Analysis Results}} \\
      \cmidrule(lr){1-3}
      \textbf{Frequency} & \textbf{Magnitude} & \textbf{Pattern} & \\
      \midrule
      Frequent & Bounded & Shifted & No provable convergence (Thm. \ref{TD1}) \\
      Frequent & Unbounded & Shifted/Non-shifted & No provable convergence (Thm. \ref{TD1}) \\
      Infrequent & Unbounded & Shifted/Non-shifted & Guaranteed convergence (Thm. \ref{T2}) \\
      Frequent & Bounded & Non-shifted & Fast convergence (Thm. \ref{T4}) \\
      \bottomrule
    \end{tabular}
  }
\end{table}
To address these theoretical deficiencies, this paper examines data poisoning as a fundamental vector for availability attacks. Given its capacity to globally degrade model performance, we leverage this attack class to derive a unified analytical framework for understanding adversarial resilience in CL. We formally characterize the adversarial design space in data poisoning across three pivotal dimensions, as summarized in Table \ref{table1}, to develop a rigorous evaluation of attack dynamics: (i) the temporal frequency of poisoning across sequential tasks, (ii) the bounded/unbounded magnitude of the poisoning budget per attack, and (iii) the pattern of the poisoning (i.e., whether it induces a distributional shift).

To mitigate the impact of such attacks, regularization-based robust learning has emerged as a prominent paradigm for bolstering algorithmic resilience against noise and perturbations. For instance, \citet{yan2018deep} integrated adversarial perturbation-based regularizers into classification objectives, while \citet{li2021towards} applied regularization on embedding spaces. Motivated by the empirical efficacy of regularization-based methods against attacks and the recent theoretical shift toward such strategies in CL, we ground our defense framework within this paradigm, allowing us to leverage its analytical tractability to establish the first formal foundations for adversarial resilience in the CL regime.

Another significant class of CL algorithms comprises memorization-based methods \cite{hurley2009statistical, rebuffi2017icarl, 8489495, kim2021continual, wang2023metamix}. However, these approaches are often heuristic compared to regularization-based paradigms, rendering the derivation of a unified theoretical framework analytically arduous. Consequently, memorization-based methods serve primarily as empirical benchmarks for comparison with the theoretical results of this study.

We aim to answer two key questions in this paper: 

\textit{(i) To what extent do there exist fundamental limits where data poisoning attacks are {provably} indefensible within the CL regime?}

\textit{(ii) Within feasible defense regimes, how can regularization-based CL objectives be optimized to ensure convergence?}

Our novelty and contributions are summarized as follows.
\vspace{-0.1cm}\begin{itemize}
    \item \textit{New theory of robust CL against data poisoning attacks}: We formulate the adversary and system's attack-defense problem as an online zero-sum game, and systematically characterize the adversary's capability in Section \ref{S2}. In Section \ref{S3}, we establish a fundamental performance limit, proving that no provable convergence guarantee can be established if the adversary poisons a linear fraction of tasks via unbounded noise or data pattern shifts.
    \item \textit{Task-to-task verification defense against infrequent and unbounded attacks}: Section \ref{S4} explores a defense regime against finite sequences of unbounded adversarial attacks. To mitigate cumulative bias along the learning trajectory, we propose a Task-to-Task (T2T) verification mechanism. By decoupling historical updates from the current model state, T2T identifies attacks in recent tasks without being affected by the propagation of adversarial influence.
    \item \textit{Robust feature defense against frequent, bounded and non-shifted attacks}: Section \ref{S5} investigates widespread, bounded, non-shifted attacks, where the adversary strategically allocates its poisoning budget to inject noise and maximally degrade convergence. To address this, we solve the online zero-sum game to derive optimal regularization parameters, yielding a robust defense that mitigates feature sensitivity to attacks and accelerates the convergence rate.
    \item \textit{Experiments on realistic datasets}: In Section \ref{S6}, we conduct empirical evaluations on real-world datasets to validate our theoretical insights. We compare our approach against existing regularization and memorization-based methods. The experimental results closely align with our theoretical findings and demonstrate superior performance over existing baselines.
\end{itemize}

\section{System Model and Problem Formulation}\label{S2}
In this section, we begin by introducing the CL model. We then formulate the adversary-defender framework and model the adversary’s data poisoning attacks along three key dimensions: frequency, magnitude and pattern. Finally, we present our preliminary formulation of the tractable adversary's problem within the regularization-based defense framework, to be further developed in subsequent sections.
\subsection{Continual Learning Model}\label{model}
In a standard CL setup, we encounter a sequence of tasks indexed by $t = 1, \dots, T$, arriving sequentially. Each task $t$ is associated with a dataset $D_t = \{ ( x^i_{t}, y^i_{t} ) \in \mathcal{X} \times \mathcal{Y} \}_{i=1}^{n_t}$, where $n_t$ represents the sample size for task $t$. $\mathcal{X} \subseteq \mathbb{R}^{p_1}$ is the feature space and $\mathcal{Y}\subseteq \mathbb{R}^{p_2}$ is the label space. We assume that all data samples in $D_t$ are drawn from a distribution $\mathcal{D}_t$. The distribution $\mathcal{D}_t$ may remain stationary or may vary across different tasks $t$, where the latter case also corresponds to domain-incremental learning \cite{van2019three}. The objective in CL is to learn a global model $f_w: \mathcal{X} \rightarrow \mathcal{Y}$, parameterized by $w \in \mathbb{R}^p$, that minimizes the global excess risk across all $T$ tasks:
\begin{equation}
    \mathcal{R}_{T}(w) = \sum_{t=1}^T \mathbb{E}_{z \sim \mathcal{D}_t} [\ell(w, z)]-\min_{w^*}\sum_{t=1}^T \mathbb{E}_{z \sim \mathcal{D}_t} [\ell(w^*, z)],\label{generalization_loss}
\end{equation}
where $\ell(w, z)$ is a loss function that quantifies the discrepancy between the model's prediction and the ground-truth label $y$, with $z=(x,y)$.


Any regularization-based strategy can be formulated as the following update under a generalized $\ell_2$-regularization framework, which has been proven to be equivalent to a broad class of methods, including the minimum-norm estimator and ridge regression \cite{zhao2024statistical}. At any time $t\in[T]$:
\begin{equation}
    \!w_t \!\in \!\arg\min_{w'} \bigg\{ \!\frac{1}{n_t}\! \sum_{z \in D_t} \ell(w', z) \!+ \!\frac{1}{2} \| w' - w_{t-1} \|^2_{H_t}\! \bigg\}.
    \label{alg:generalized_l2}
\end{equation}
In this framework, the first term on the right-hand side of $w_t$ above quantifies the estimation error of the updated model on the current task \(t\), while the second term controls the update distance from the previous model \(w_{t-1}\) using the regularization matrix \(H_t \in \mathbb{R}^{p \times p}\).


\subsection{Data Poisoning Attack Model in CL}
We assume training-phase poisoning attacks by a strong adversary who knows the current CL model state and dataset to deploy targeted and strategically crafted perturbations.

When the adversary targets task $t$, it introduces a perturbation $\eta_t=(\eta_{x,t},\eta_{y,t})$ into the clean dataset $D_t$, subject to the poisoning budget constraint $\mathbb{E}[\|\eta_t\|^2_F]\leq M$. As a result, the system receives a poisoned training dataset upon each attack:
    \begin{align}
        {D}_t=\{(x_t^i+\eta_{x,t}^i,y_t^i+\eta_{y,t}^i) \in \mathcal{X} \times \mathcal{Y} \}.\label{tilde{y}_t}
    \end{align} 


We can further decompose the perturbation as
\( \eta_t = \hat{\eta}_t + \tilde{\eta}_t \), where
\( \hat{\eta}_t \) represents the conditional mean shift, and
\( \tilde{\eta}_t \) denotes a non-shifted stochastic component satisfying
\(\mathbb{E}[\tilde{\eta}_t \mid \mathcal I_t]=0,\)
with \(\mathcal I_t\) denoting the information available when the perturbation is chosen,
including the history and the current dataset.

We characterize the adversary’s capability along three dimensions: (1) the number of data tasks $K$ the adversary can poison over time, (2) the magnitude of the poisoning budget $M$ per attack, and (3) the poisoning pattern in each attack (to have the shifted part \( \hat{\eta}_t \) or not). Specifically, we classify adversarial capabilities as follows:

\begin{definition}\label{D1}
\textit{Frequent or infrequent attack type}: an attack is frequent if $K$ scales linearly with time, i.e., $K = \Theta(T)$. Otherwise, an attack is infrequent when $K = O(1)$, meaning the adversary targets only a finite number of tasks regardless of the total learning horizon $T$. \footnote{We can also allow \(K\) to scale sublinearly with \(T\) when the noise \(\tilde \eta_t\) is sub-Gaussian. Under the concentration properties of sub-Gaussian noise, we can further limit the adversary's poisoning in the task-to-task verification defense introduced in Section \ref{S4}, ensuring convergence under sublinear attack tasks as stated in Theorem \ref{T2}.}
\end{definition}
\begin{definition}\label{D5}
    \textit{Unbounded or bounded attack type}: an attack is unbounded if $M$ scales linearly or superlinearly with time, i.e., $M = \Omega(T)$. Otherwise, an attack is bounded when $M = o(T)$, meaning \( M \) scales sublinearly relative to \( T \).
\end{definition}
\begin{definition}\label{D3}
\textit{Shifted or non-shifted attack type}: an attack is shifted if the shifted component \( \hat{\eta}_t \neq \textbf{0} \). Otherwise, if \( \hat{\eta}_t =\textbf{0} \), it is a non-shifted attack.
\end{definition}

This classification implies that an adversary will strategically allocate the poisoning budget \( M \) to either increase the mean of \( \eta_t \) or increase its variance. 


When the adversary poisons the dataset $D_t$, the objective is to perturb the resulting model $w_t$ to maximize the cumulative excess risk across all tasks encountered up to time $t$. Conversely, the defender employs a regularization-based strategy, designing the regularizers $H_t$ to minimize this loss under adversarial influence. We formalize this strategic interaction as the following online minimax zero-sum game:
\begin{align}
    \min_{H_t}\max_{\hat{\eta}_t, \tilde{\eta}_t} \; & \mathcal{R}_{t}(w_t) \;\; \text{s.t.} \; \mathbb{E}[\|\hat{\eta}_t+\tilde{\eta}_t\|_F^2] \leq M,\;t\in [T],\label{eq:de_adversarial_obj}
\end{align}
where $M$ can be replaced by any of its upper bounds, without loss of generality. While heuristic defenses like adversarial training \cite{43405} provide empirical utility, they generally lack formal performance guarantees. Conversely, we investigate the intrinsic robustness of regularization-based CL to derive rigorous limits on its efficacy against strategically crafted attacks, which can be integrated with these general defense mechanisms to further enhance robustness against poisoning attacks.


The objective in \eqref{eq:de_adversarial_obj} is practically intractable, as neither the adversary nor the system has access to the ground-truth distributions $\mathcal D_\tau$ for tasks $\tau=1,\dots,t$. Moreover, even constructing an empirical counterpart of this objective is infeasible in CL: at each time $t$, both parties observe only the current dataset $D_t$, while the datasets from previous tasks are no longer accessible, making it impossible to form the cumulative empirical objective over tasks $1,\dots,t$. To address this, we derive a surrogate for the inner adversarial optimization problem under the regularization-based framework in \eqref{alg:generalized_l2}, as established in the following proposition.
\begin{proposition}\label{proposition2.5}
Define $Z_t$ as the matrix formed by concatenating all samples in the dataset, and define training loss $\mathcal{L}(w,Z_t)=\sum_{z\in D_t}\ell(w,z)/n_t$. Let $w_t^*$ be a minimizer of $\mathcal{L}(w,Z_t)$, i.e., $w_t^* \in \arg\min_w \mathcal{L}(w, Z_t)$. Under a non-shifted attack $\tilde{\eta}_t$, the adversarial objective in \eqref{eq:de_adversarial_obj} can be characterized via the following approximation:
\begin{align}
    &\max_{\tilde{\eta}_t}  \;
    \frac{1}{2}\mathbb{E}\bigl[
    \|(\nabla_{ww}^2\mathcal{L}_t+H_t)^+
    \nabla_{wZ}^2\mathcal{L}_t
    \operatorname{vec}(\tilde{\eta}_t)\|_{P_t}^2
    \bigr] \nonumber\\
    &+\mathbb{E}\left[\!O\!\left(\|\tilde{\eta}_t\|_F^2\!+\!\|w_t-w_t^*\|_2^2\right)
    \!+\!O\!\left(\sum_{s=1}^t\|w_t-w_s^*\|_2^3\right)\right],\nonumber\\
    &\text{s.t.} \; 
    \mathbb{E}[\|\tilde{\eta}_t\|_F^2] \leq M,\;t\in[T],
    \label{eq:app_adversarial_obj}
\end{align}
where we use the shorthand $\mathcal{L}_s = \mathcal{L}(w_s^*, Z_s)$, and $P_t := \sum_{s=1}^t \nabla_{ww}^2\mathcal{L}_s$ represents the cumulative Hessian. 
\end{proposition}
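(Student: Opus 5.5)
The plan is to combine a second-order Taylor expansion of the cumulative excess risk around the per-task minimizers with an implicit-function (sensitivity) analysis of the regularized update \eqref{alg:generalized_l2}.

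\textbf{Step 1: replace the population risk.} Replace each population term in $\mathcal{R}_t(w_t)$ by its empirical counterpart $\mathcal{L}(w_t,Z_s)$. Then, for each $s\le t$, Taylor expand $\mathcal{L}(w_t,Z_s)$ around $w_s^*$. Since $\nabla_w\mathcal{L}(w_s^*,Z_s)=0$, we get
\begin{equation*}
\mathcal{L}(w_t,Z_s)-\mathcal{L}(w_s^*,Z_s)=\tfrac12\|w_t-w_s^*\|^2_{\nabla^2_{ww}\mathcal{L}_s}+O(\|w_t-w_s^*\|_2^3).
\end{equation*}
Summing over $s$ yields the cubic remainder $O(\sum_s\|w_t-w_s^*\|^3)$. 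The term subtracted in the excess risk, $\min_{w^*}$, does not depend on $\tilde\eta_t$, so it drops from the maximization. The generalization gap, the difference between the population and empirical objectives, is absorbed into the stated $O(\cdot)$ terms; I treat this as part of what ``approximation'' means here.

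\textbf{Step 2: sensitivity of $w_t$ to the perturbation.} The optimality condition of \eqref{alg:generalized_l2} is
\begin{equation*}
\nabla_w\mathcal{L}(w_t,Z_t+\tilde\eta_t)+H_t(w_t-w_{t-1})=0.
\end{equation*}
Differentiate this in $\operatorname{vec}(Z)$ at $\tilde\eta_t=0$, evaluating Hessians at $w_t^*$ at the cost of $O(\|w_t-w_t^*\|)$ corrections. This gives the first-order perturbation
\begin{equation*}
\delta w_t=-(\nabla^2_{ww}\mathcal{L}_t+H_t)^+\nabla^2_{wZ}\mathcal{L}_t\operatorname{vec}(\tilde\eta_t)+O(\|\tilde\eta_t\|_F^2+\|w_t-w_t^*\|^2).
\end{equation*}
The pseudo-inverse handles a possibly singular matrix, with the update restricted to its range.

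\textbf{Step 3: substitute and take expectations.} Write $w_t=\bar w_t+\delta w_t$, where $\bar w_t$ is the clean update, and plug into $\tfrac12\sum_s\|w_t-w_s^*\|^2_{\nabla^2\mathcal{L}_s}$. The quadratic-in-$\delta w_t$ part is exactly $\tfrac12\|\delta w_t\|^2_{P_t}$. The part independent of $\tilde\eta_t$ does not affect the argmax. The cross terms are linear in $\tilde\eta_t$ to leading order and vanish in expectation because $\mathbb{E}[\tilde\eta_t\mid\mathcal I_t]=0$, with $\bar w_t$ being $\mathcal I_t$-measurable. Only their higher-order parts survive, and these go into $O(\|\tilde\eta_t\|_F^2+\|w_t-w_t^*\|^2)$.

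\textbf{Main obstacle.} The hardest step is controlling the cross terms, i.e., showing that the non-shifted assumption kills them up to the stated orders. The Taylor coefficients are evaluated at $w_t^*$, not at $\bar w_t$, which is why the error terms $\|w_t-w_t^*\|^2$ appear. I would handle this by bounding the Hessian Lipschitz differences with a smoothness assumption (a bounded third derivative of $\ell$). The objective is thus approximated only up to these remainders, which matches the form in \eqref{eq:app_adversarial_obj}.
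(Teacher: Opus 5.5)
Your proposal is correct at the paper's level of rigor and follows essentially the same route as the paper's proof. You replace the risk by $\sum_{\tau\le t}\mathcal{L}(w_t,Z_\tau)$ and Taylor-expand around each $w_\tau^*$ using first-order optimality. You then linearize the optimality condition of \eqref{alg:generalized_l2} around $(w_t^*,Z_t)$ to obtain the sensitivity $(\nabla^2_{ww}\mathcal{L}_t+H_t)^+\nabla^2_{wZ}\mathcal{L}_t\operatorname{vec}(\tilde\eta_t)$, and you kill the linear cross term via $\mathbb{E}[\tilde\eta_t\mid\mathcal I_t]=0$. Your split $w_t=\bar w_t+\delta w_t$ is just a repackaging of the paper's explicit affine formula for $w_t-w_\tau^*$.

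The only quibble is that the population-to-empirical replacement is not absorbed into the stated $O(\cdot)$ remainders, since the generalization gap does not scale like $\|\tilde\eta_t\|_F^2$ or the cubic terms. The paper treats it as a separate, unquantified modeling step, and you should present it that way too.
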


The extension to the deterministic shift $\hat{\eta}_t$, along with its proof, is provided in Proposition \ref{complete_proposition2.5} of Appendix \ref{appproposition2.5}. It is easy to see that an adversary's approximately optimal solutions to problem \eqref{eq:app_adversarial_obj} allocate its covariance budget along the top right singular vector of \( 
P_t^{1/2}\!\left(\nabla_{ww}^2 \mathcal{L}_t + H_t\right)^+ \nabla_{wZ}^2 \mathcal{L}_t
\), strategically exploiting the sensitive features of the dataset to maximize the perturbation's impact on the model. 

In~\eqref{eq:app_adversarial_obj}, the remainder terms arise from terms beyond second order in the Taylor expansion of the empirical loss. When the loss is at most quadratic, all remainder terms vanish, and the approximation becomes exact for the empirical loss over all observed tasks. Moreover, in modern CL training, where models are typically highly overparameterized, the remainder terms in Proposition~\ref{proposition2.5} are expected to be small. Consequently, the solution that maximizes the leading-order objective in~\eqref{eq:app_adversarial_obj} serves as a high-fidelity proxy for the true empirical optimum. This is supported by the literature on lazy training, which suggests that the task-specific optima $\{w_\tau^*\}$ and the iterates generated by~\eqref{alg:generalized_l2} remain within a small neighborhood of the initialization~\cite{NEURIPS2018_5a4be1fa}. In this regime, the Taylor residuals become negligible, and the approximation is asymptotically accurate.

To implement the attack in \eqref{eq:app_adversarial_obj}, an adversary can estimate the required parameters using the known model state $w_{t-1}$, together with efficient Hessian approximation techniques such as K-FAC \cite{eschenhagen2023kroneckerfactored} or the Gauss-Newton method \cite{nocedal2006numerical} to effectively calculate the required Hessian matrix.



\section{Fundamental Limits for Defending Against Data Poisoning in CL}\label{S3}
We next explore the fundamental boundaries of regularization-based robustness by evaluating the system's resilience to attacks in Definitions \ref{D1}–\ref{D3}.

To theoretically quantify the robustness of regularization-based methods, we adopt the provable convergence of the excess risk \eqref{generalization_loss} as the primary metric for defensibility in CL. We analyze a tractable setting with a uniform task distribution, i.e., $\mathcal D_1=\cdots=\mathcal D_T$, to investigate defense against poisoning attacks. In Theorem \ref{TD1}, we demonstrate that convergence is not guaranteed under the proposed attack scenarios; by extension, this result precludes convergence in more general, non-uniform scenarios.

\begin{theorem}\label{TD1}
There exist problem instances where, for any regularization-based method in \eqref{alg:generalized_l2}, no provable model convergence can be ensured in the uniform task distribution setting, under either of the following two attack scenarios:

(a) Frequent and shifted attacks;

(b) Frequent, unbounded, and non-shifted attacks.
\end{theorem}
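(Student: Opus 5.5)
We will prove Theorem \ref{TD1} by an explicit lower-bound construction in the simplest instance where \eqref{alg:generalized_l2} has a closed form. Take scalar linear regression ($p=p_1=p_2=1$) with squared loss $\ell(w,z)=\tfrac12(y-wx)^2$. Let the common distribution $\mathcal D_t\equiv\mathcal D$ have $x\equiv 1$ and $y=w^\star+\epsilon$ with $\mathbb E[\epsilon]=0$, and let each task contain a single sample. Then the excess risk \eqref{generalization_loss} is $\mathcal R_T(w)=\tfrac{T}{2}(w-w^\star)^2$, so we normalize and call the model convergent if $(w_T-w^\star)^2\to 0$ in expectation. For this loss and any scalar $H_t=h_t\ge 0$, the update \eqref{alg:generalized_l2} reads
\begin{equation*}
w_t=(1-\alpha_t)w_{t-1}+\alpha_t\tilde y_t,\qquad \alpha_t=\frac{1}{1+h_t}\in(0,1].
\end{equation*}
Here $\tilde y_t$ is the possibly poisoned label. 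Every regularization-based defense is therefore just a choice of step sizes $\{\alpha_t\}$. Unrolling gives $w_T=\sum_{t}\beta_t\tilde y_t+\beta_0w_0$, with weights $\beta_t=\alpha_t\prod_{s>t}(1-\alpha_s)\ge0$ that sum to one. The point of the construction is that the defender chooses $\beta$ and the adversary chooses which tasks to corrupt, and neither can escape the other.

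For case (a), the adversary poisons a linear fraction $K=cT$ of tasks with a constant label shift $\hat\eta_{y,t}=\delta\neq 0$, which has bounded budget $M=\delta^2$. We will argue that a strong adversary, who knows the state and hence the defense, makes the bias unavoidable by randomizing the attacked set. Pick the poisoned tasks as a uniformly random $cT$-subset, or alternatively as a random block. The bias is then $\mathbb E[w_T]-w^\star=\delta\sum_{t\in\text{poisoned}}\beta_t$, whose expectation over the subset is $c\delta(1-\beta_0)$. Meanwhile $\beta_0\to 0$ is needed anyway for convergence without attack. This yields $\mathbb E[(w_T-w^\star)^2]\ge c^2\delta^2/2$ for large $T$, for every choice of $\{\alpha_t\}$.

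A cleaner alternative is an indistinguishability argument, which I will also write down. Consider two instances: instance A has truth $w^\star$ with poisoning $+\delta$ on a $c$-fraction of tasks; instance B has truth $w^\star+c\delta$ and no poisoning, but its label noise is chosen so that the marginal label distribution matches the mixture in A. Since the data laws coincide, any estimator has $\max_{A,B}\mathbb E(w_T-w^\star_{\cdot})^2\ge (c\delta/2)^2$. Hence no defense converges, and the zero-sum game value \eqref{eq:de_adversarial_obj} stays $\Omega(T)$. Making B's noise law exactly match A's mixture, for example by taking $\epsilon$ to be a two-point mixture, is a short check.

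For case (b), use zero-mean noise $\tilde\eta_{y,t}$ with variance $M=\Omega(T)$ on $cT$ tasks, again with a random attacked set. The variance is $\mathrm{Var}(w_T)\ge M\,\mathbb E\sum_{t\in\text{poisoned}}\beta_t^2$. By Cauchy--Schwarz, for a random $cT$-subset $S$ we have $\mathbb E\sum_{t\in S}\beta_t^2=c\sum_t\beta_t^2\ge c(1-\beta_0)^2/T$. This gives $\mathrm{Var}(w_T)\ge cM/(2T)=\Omega(1)$, which does not vanish, so again there is no convergence. This also matches the leading term of Proposition \ref{proposition2.5}, where $\|\tilde\eta_t\|^2$ is amplified through $(\nabla^2_{ww}\mathcal L_t+H_t)^{+}$.

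The main obstacle is justifying that the adversary may randomize, or equivalently be oblivious, against an adaptive defender who picks $H_t$ after seeing data. If the defender could locate the poisoned tasks, it would simply set $\alpha_t\approx 0$ there. In (b) the noise is indistinguishable in law from heavy clean noise. In (a) I would rely on the two-instance argument, since it does not depend on the defender's knowledge of where the attacks fall.
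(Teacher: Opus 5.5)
Your core argument is the paper's: the same scalar instance with $x\equiv 1$, the update written as the convex combination $w_t=(1-\alpha_t)w_{t-1}+\alpha_t\tilde y_t$, and the unrolled convex weights. It also uses the same dichotomy on $\beta_0=\prod_t(1-\alpha_t)$: if $\beta_0$ does not vanish, the initial bias survives even without attack. Then Jensen handles (a) and $\sum_t\beta_t^2\ge(1-\beta_0)^2/T$ handles (b). For deterministic $\{h_t\}$ your proof is correct, and that is all the paper's computation covers, since it passes $\mathbb{E}$ through $c_t$. Against fixed $\{h_t\}$ the adversary need not even randomize: the $cT$ largest weights already sum to at least $c(1-\beta_0)$. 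You dropped the term $\beta_0(w_0-w^\star)$ from the bias, which is harmless since $\beta_0\to0$. The one real difference is that the paper poisons \emph{every} task ($K=T$, which is $\Theta(T)$), using $\eta_t=\Delta$ in (a) and i.i.d.\ zero-mean noise of variance $M_T$ in (b). So no randomized attacked set is needed.

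Poisoning every task is also what removes the obstacle you flag, whereas your proposed fixes for $c<1$ do not quite work.

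With $c=1$, both cases go through even for data-adaptive $H_t$:
\begin{itemize}
\item In (a), the attacked stream is \emph{exactly} a clean stream from truth $w^\star+\delta$ with the same noise law. Any method that converges on clean instances must therefore send $w_T\to w^\star+\delta$, however adaptive its $H_t$.
\item In (b), take $\varepsilon_t$ and the attack both Gaussian. The stream is then a Gaussian location problem with variance $\sigma^2+M_T$, whose minimax risk over $\|w^\star\|^2\le W$ is at least of order $\min\{W,M_T/T\}=\Omega(1)$ for every estimator.
\end{itemize}

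For $c<1$, by contrast, both fixes have problems:
\begin{itemize}
\item In (a), matching marginals is not enough; the joint laws must coincide. That requires independent Bernoulli$(c)$ poisoning rather than a fixed $cT$-subset. Even then, instance B has noise variance $\sigma^2+c(1-c)\delta^2\neq\sigma^2$, so the two-point argument breaks for methods that use the known $\sigma^2$, as \eqref{H_t} does.
\item In (b), being indistinguishable from heavy clean noise gives no lower bound. A $(1-c)$ fraction of labels carry only variance-$\sigma^2$ noise. For symmetric attack noise, consider the clipped update $w_t=w_{t-1}+\mathrm{clip}(\tilde y_t-w_{t-1},\gamma_t)$ with $\gamma_t\propto 1/t$. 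It is realized by the data-dependent choice $h_t=\max\{0,|\tilde y_t-w_{t-1}|/\gamma_t-1\}$. This is a Robbins--Monro median iteration, and it converges to $w^\star$ regardless of $M$.
\end{itemize}

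So simply take $c=1$.
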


The proof of Theorem \ref{TD1} is detailed in Appendix \ref{appTD1}, with the core intuition summarized as follows. For the first case, widespread distribution-shifting attacks induce an identifiability crisis, where malicious perturbations become statistically indistinguishable from legitimate task shifts. Consequently, designing the regularizer $H_t$ to ensure convergence guarantees comes into fundamental conflict with designing it to filter out adversarial attacks. For the second case, even with an optimal defense, excessive non-shifted poisoning can overwhelm the dataset's intrinsic features, preventing the excess risk from converging to zero.

Moving beyond these indefensible regimes, we focus our design and analysis on defense strategies against two possibly defensible classes of attacks: infrequent unbounded shifted attacks and frequent bounded non-shifted attacks. Notably, even in these scenarios, conventional regularization-based methods exhibit significant vulnerabilities, and a rigorous theoretical framework for defense remains elusive. Consequently, there is a critical need for advanced defense mechanisms that offer provable performance guarantees under attack.

\section{Task-to-task Verification Defense Against Infrequent and Unbounded Attacks}\label{S4}
In this section, we design and analyze the defense approach against the first possibly defensible case of infrequent yet unbounded poisoning attacks, where the attack can shift its data pattern using an unbounded poisoning budget $M$ in (\ref{eq:app_adversarial_obj}).
\subsection{Design of Task-to-task Verification Defense}


Defending against infrequent yet unbounded attacks presents two primary challenges. First, because the defender does not know in advance which tasks are attacked, a detection-based defense is necessary. In the regularization-based framework, detection acts as extreme regularization, with \(H_t\to\infty\) freezing the update and rejecting the flagged task, thereby preventing unbounded shifted poisoning from entering the learning trajectory. This approach is grounded on the theoretical premise that the sequence of task-specific minimizers $\{w_t^*\}_{t=1}^T$ typically remains within a bounded proximity in the parameter space \cite{chizat2019lazy, guo2021new, bedi2019nonstationary}. Such stability implies that malicious interventions can be identified via the model update $\|w_t - w_{t-1}\|$. However, when a significant deviation occurs, the system faces an attribution dilemma: it cannot determine whether the large $\|w_t-w_{t-1}\|$ comes from a benign task correcting a model corrupted by consecutive past attacks or from a malicious task driving an otherwise benign update away from the clean learning trajectory.

To resolve the attribution dilemma, we design a Task-to-Task (T2T) verification defense. This mechanism leverages the dynamics of two consecutive parameter updates, $w_t - w_{t-1}$ and $w_{t-1} - w_{t-2}$ to compute a specialized detection score $d_t$ that eliminates the influence of all tasks prior to $t-1$, isolating the deviations introduced exclusively by tasks $t-1$ and $t$. This allows the system to detect the attack within the latest two tasks independently of the interference from earlier model states. Our dynamic design of $d_t$ is as follows:
\begin{equation} \label{dt}
    d_t = \|D_{t}^1(w_t - w_{t-1}) - D_{t}^2(w_{t-1} - w_{t-2})\|,
\end{equation}
where the projection matrices are given by $D_t^1 = (I-B_t)(A_t^+ - C_t A_t^\top)$ and $D_t^2 = (I-B_t)(B_t^+ + C_t B_t^\top)$. The coupling matrix $C_t$ is defined as:
\begin{equation} \label{DD}
    C_t = (A_t^+ A_t - B_t^+ B_t)(A_t^\top A_t + B_t^\top B_t)^+,
\end{equation}
with $A_t$ and $B_t$ constructed from the Hessian and regularizers:
\begin{align}
        \!A_t\!=&\bigl({\nabla^2_{ww} \mathcal{L}_t}\!+\!H_{t}\bigl)^{-1}\nabla^2_{ww} \mathcal{L}_t\bigl({\nabla^2_{ww} \mathcal{L}_{t-1}}\!+\!H_{t-1}\bigl)^{-1}\!H_{t-1}\nonumber\\
    \!B_t\!=&\bigl({\nabla^2_{ww} \mathcal{L}_{t-1}}\!+\!H_{t-1}\!\bigl)^{-1}\nabla^2_{ww} \mathcal{L}_{t-1}.\label{AB}
\end{align}

Geometrically, $d_t$ measures the projected updates after projecting $w_t-w_{t-1}$ and $w_{t-1}-w_{t-2}$ onto the largest common subspace of the data features of tasks $t-1$ and $t$ and removing the influence of the historical state $w_{t-2}$. Consequently, the score reflects only the local dynamics of tasks $t-1$ and $t$, which we formally establish in the following lemma.
\begin{lemma}\label{iso}
The dynamic detection score $d_t$ computed from \eqref{dt} satisfies
\begin{align*}
    &D_t^1(w_t-w_{t-1})-D_t^2(w_{t-1}-w_{t-2}) =\\
    &
    E_t^1(w_t^*-w_{t-1}^*)
    +E_t^2\operatorname{vec}(\eta_{t-1})
    +E_t^3\operatorname{vec}(\eta_t)
    +\mathcal{E}_t,
\end{align*}
where the remainder $\mathcal{E}_t$ satisfies
\begin{align*}
    \mathcal{E}_t
    =&
    O\!\left(\|w_t-w_t^*\|^2\right)
    +O\!\left(\|w_{t-1}-w_{t-1}^*\|^2\right)
    \\
    &+O\!\left(\|\eta_t\|^2+\|\eta_{t-1}\|^2\right).
\end{align*}
Here, $\eta_s=\mathbf{0}$ if the adversary does not attack task $s$, and the explicit forms of the matrices $\{E_t^i\}_{i=1}^3$ are provided in Appendix~\ref{AAiso}.
\end{lemma}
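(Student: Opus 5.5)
The approach is to linearize each update \eqref{alg:generalized_l2} around the task minimizer and then check that the matrices $D_t^1,D_t^2$ are built precisely to cancel the dependence on $w_{t-2}$. First, write the first-order optimality condition of \eqref{alg:generalized_l2} for the poisoned data: $\nabla_w\mathcal{L}(w_t,Z_t+\eta_t)+H_t(w_t-w_{t-1})=0$. Taylor-expand $\nabla_w\mathcal{L}$ jointly in $(w,Z)$ around $(w_t^*,Z_t)$. Since $\nabla_w\mathcal{L}(w_t^*,Z_t)=0$, this gives
\begin{equation*}
\nabla^2_{ww}\mathcal{L}_t(w_t-w_t^*)+\nabla^2_{wZ}\mathcal{L}_t\operatorname{vec}(\eta_t)+H_t(w_t-w_{t-1})=O(\|w_t-w_t^*\|^2+\|\eta_t\|^2).
\end{equation*}
This is the same second-order expansion that underlies Proposition~\ref{proposition2.5}. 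Writing $G_s=\nabla^2_{ww}\mathcal{L}_s+H_s$ and solving, we obtain the affine recursion
\begin{equation*}
w_s=G_s^{-1}H_sw_{s-1}+G_s^{-1}\nabla^2_{ww}\mathcal{L}_s\,w_s^*-G_s^{-1}\nabla^2_{wZ}\mathcal{L}_s\operatorname{vec}(\eta_s)+r_s,
\end{equation*}
for $s=t-1,t$, with remainder $r_s=O(\|w_s-w_s^*\|^2+\|\eta_s\|^2)$.

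Next, express both increments in terms of $w_{t-2}$ alone. Using $I-G_s^{-1}H_s=G_s^{-1}\nabla^2_{ww}\mathcal{L}_s$, we get $w_{t-1}-w_{t-2}=B_t(w_{t-1}^*-w_{t-2})+(\text{noise}_{t-1})+r_{t-1}$. Substituting the $(t-1)$-recursion into the $t$-recursion gives $w_t-w_{t-1}=G_t^{-1}\nabla^2_{ww}\mathcal{L}_t(w_t^*-w_{t-1})+\dots$. Expanding $w_{t-1}$ once more, its $w_{t-2}$-dependent part becomes $-A_tw_{t-2}$ plus terms in $w_t^*$, $w_{t-1}^*$ and the noises. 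So, modulo data and noise terms, the two increments are linear in $w_{t-2}$ with coefficient matrices $-A_t$ and $-B_t$.

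The core step is to verify that $D_t^1A_t=D_t^2B_t$ on the relevant range. Then $D_t^1(w_t-w_{t-1})-D_t^2(w_{t-1}-w_{t-2})$ has no $w_{t-2}$ term. Plugging in the definitions, we need to show
\begin{equation*}
(I-B_t)\bigl[(A_t^+-C_tA_t^\top)A_t-(B_t^++C_tB_t^\top)B_t\bigr]=(I-B_t)\bigl[A_t^+A_t-B_t^+B_t-C_t(A_t^\top A_t+B_t^\top B_t)\bigr].
\end{equation*}
By \eqref{DD}, the bracket equals $(A_t^+A_t-B_t^+B_t)\bigl(I-\Pi\bigr)$, where $\Pi$ is the projector onto $\operatorname{range}(A_t^\top A_t+B_t^\top B_t)$. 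Since $\operatorname{range}(A_t^+A_t)$ and $\operatorname{range}(B_t^+B_t)$ both lie in that range, $(A_t^+A_t-B_t^+B_t)(I-\Pi)=0$, so the $w_{t-2}$ terms vanish exactly. I expect this pseudoinverse bookkeeping to be the main obstacle. If the transposes do not line up, I would first try adding the factor $(I-B_t)$ or restricting to the common feature subspace, which is where the ``largest common subspace'' interpretation enters.

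Finally, collect what remains. The $w_t^*$ and $w_{t-1}^*$ terms should combine into $E_t^1(w_t^*-w_{t-1}^*)$, because the coefficients of $w_t^*$ and $w_{t-1}^*$ must sum to zero: a constant trajectory $w_s\equiv w^*$ gives zero increments. The noise terms give $E_t^2\operatorname{vec}(\eta_{t-1})$ and $E_t^3\operatorname{vec}(\eta_t)$, with explicit matrices of the form $D_t^i$ times products of $G_s^{-1}\nabla^2_{wZ}\mathcal{L}_s$. The remainders $r_{t-1},r_t$, multiplied by bounded matrices, give $\mathcal{E}_t$ of the stated order.
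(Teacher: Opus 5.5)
Your proposal is correct and follows essentially the same route as the paper: linearize each regularized update, write the two increments with $w_{t-2}$-coefficients $-A_t$ and $-B_t$, and cancel them using $D_t^1A_t=D_t^2B_t$. The differences are minor. You verify that identity directly through the projector computation $(A_t^+A_t-B_t^+B_t)(I-\Pi)=0$, while the paper obtains $D_t^1,D_t^2$ as the closed-form solution of a Frobenius least-squares problem constrained by $D_t^1A_t=D_t^2B_t$. You also get the $(w_t^*-w_{t-1}^*)$ grouping from a constant-trajectory argument, whereas the paper writes the increments as $-A_t(w_{t-2}-w_t^*)$ and $-B_t(w_{t-2}-w_{t-1}^*)$, so the cancellation directly gives the explicit form $E_t^1=D_t^1A_t+D_t^1S_t^+Q_tS_{t-1}^+Q_{t-1}$.
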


The proof of Lemma \ref{iso} is provided in Appendix \ref{AAiso}. In general, the eigenvalues of $\{E_t^i\}_{i=1}^3$ and the model similarity metric $\|w_t^* - w_{t-1}^*\|$ remain stable under standard dataset normalization. Consequently, $d_t$ stays stable without attack ($\eta_{t-1}, \eta_t = \mathbf{0}$), whereas high-magnitude attacks of $\eta_{t-1}$ or $ \eta_t$ trigger anomalous peaks on $d_t$. We operationalize this by flagging scores that significantly deviate from a rolling-window mean of length $h$. To resolve the attribution dilemma of identifying whether task $t-1$ or $t$ (or both) was poisoned, the system adopts a conservative recovery strategy by discarding both $w_t$ and $w_{t-1}$ upon detection. This T2T verification procedure is formalized in Alg.~\ref{M1}.


\begin{algorithm}
\caption{Task-to-task Verification Defense}
\label{M1}
\begin{algorithmic}[1]
\STATE Initialize the parameter $w_0 = 0$ and $H_0=0$.
\FOR{$t = 1$ to $T$}
    \STATE Update $w_t$ by \eqref{alg:generalized_l2}.
    \IF{the system has stored $w_{t-2}$ and $w_{t-1}$}
        \STATE Compute dynamic detection score $d_t$ in \eqref{dt}.
        \IF{$d_t \geq r \cdot \mathrm{mean}(d_{\max(1,t-h)}, \dots, d_{t-1})$}
            \STATE Set $w_t = w_{t-2}$ and $H_t = H_{t-2}$.
        \ELSE
            \STATE Keep $w_{t-1}$, $w_t$, $H_{t-1}$, $H_t$, and $\nabla^2_{ww}\mathcal{L}_t$.
        \ENDIF
    \ELSE
        \STATE Keep $w_{t-1}$, $w_t$, $H_{t-1}$, $H_t$, and $\nabla^2_{ww}\mathcal{L}_t$.
    \ENDIF
    \STATE Discard information that is not kept.
\ENDFOR
\end{algorithmic}
\end{algorithm}

As detailed in Alg. \ref{M1}, the system maintains three consecutive models and their corresponding regularization terms, $\{w_s, H_s\}_{s=t-1}^t$ and $\nabla^2_{ww}\mathcal{L}_{t}$. This introduces a constant storage overhead of $3p^2 + 2p$ to maintain data for the three most recent tasks, where $p$ denotes the parameter dimensionality. The memory associated with the Hessians can be further mitigated by employing approximation, such as diagonal estimation, which reduces the overhead from $O(p^2)$ to $O(p)$.

\subsection{Theoretical Analysis of Task-to-task Verification Defense}
We next establish rigorous performance guarantees for our T2T verification defense in Alg. \ref{M1} under the {continual linear regression setting}. This typical setting has gained significant attention and is widely studied in recent theoretical CL literature to facilitate the derivation of a closed-form theoretical framework \cite{evron2022catastrophic, lin2023theory, li2023fixed, swartworth2023nearly, zhao2024statistical}. The continual linear regression problem offers valuable insights into general non-linear CL models because, under the Neural Tangent Kernel (NTK) regime, general non-linear models behave like linear models \cite{NEURIPS2018_5a4be1fa, lee2019wide}. Furthermore, many modern CL methods utilize pre-trained models and optimize only an additional linear layer \cite{Peng-ICLR2025}, which simplifies the setting to an equivalent linear problem. The detailed model setup is provided in Assumption \ref{assum1}.

\begin{assumption}\label{assum1}
For each task $t$, the clean observations follow a multi-output linear model
\[
    Y_t = X_t w^* + \varepsilon_t,
\]
where $X_t = [x_t^1, \dots, x_t^{n_t}]^\top \in \mathbb{R}^{n_t \times p_1}$ is the feature matrix,
$Y_t = [y_t^1, \dots, y_t^{n_t}]^\top \in \mathbb{R}^{n_t \times p_2}$ is the response matrix, and
$w^* \in \mathbb{R}^{p_1 \times p_2}$ is the shared ground-truth parameter with $\|w^*\|_F^2 \le W$.
The noise matrix
$\varepsilon_t = [\varepsilon_t^1, \dots, \varepsilon_t^{n_t}]^\top \in \mathbb{R}^{n_t \times p_2}$
has independent rows satisfying
\(
    \mathbb{E}[\varepsilon_t^i \mid x_t^i]=0,
    \;
    \mathbb{E}[\varepsilon_t^i(\varepsilon_t^i)^\top \mid x_t^i]
    = \sigma^2 I_{p_2}.
\) The adversary conducts label poisoning only, so when task \(t\) is attacked the
training responses become \(X_t w^*+\varepsilon_t+\eta_t\), while the feature
matrix \(X_t\) remains unchanged.
We use the squared loss
$\ell(w,(x,y))=\|x^\top w-y\|_2^2/2$.
The parameter-space excess risk is defined as
\[
    \mathcal R(w_t)
    :=\mathbb{E}[\|w_t-w^*\|_F^2].
\]
\end{assumption}

%


Under the model in Assumption \ref{assum1}, we establish a theoretical threshold $\theta_t$ for the detection score $d_t$ defined in \eqref{dt} to ensure a more theoretically grounded detection mechanism. This threshold ensures two critical properties: first, a benign task will not be erroneously filtered out with high probability; second, any attack that remains below $\theta_t$ is guaranteed not to compromise the convergence of the CL process. This theoretical foundation allows us to replace the heuristic detection condition in Alg. \ref{M1} with the formal condition $d_t \geq \theta_t$, thereby providing a more reliable detection mechanism and facilitating a rigorous theoretical analysis of the method's performance. We first derive a high-probability upper bound for the detection score $d_t$ in the absence of adversarial intervention to serve as the detection threshold.
\begin{lemma}\label{L4}
Under Assumption~\ref{assum1}, when tasks $t-1$ and $t$ are benign, the detection score $d_t$ satisfies $d_t \le \theta_t$ for all $t =2,\dots, T$ with probability at least $1-\epsilon$, where:
\begin{align*}
\theta_t = \sqrt{\frac{\sigma^2 T}{\epsilon} \left[ \mathrm{tr}\left( (E^2_t)^\top E^2_t \right) + \mathrm{tr}\left( (E^3_t)^\top E^3_t \right) \right]}.
\end{align*}
\end{lemma}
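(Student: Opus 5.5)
We will prove Lemma~\ref{L4} by specializing Lemma~\ref{iso} to the linear model of Assumption~\ref{assum1}, then applying Markov's inequality to the second moment of $d_t$, and finally taking a union bound over $t$.

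\textbf{Step 1: exact decomposition.} Under Assumption~\ref{assum1} with squared loss, the update \eqref{alg:generalized_l2} is an exact affine recursion, since every Taylor remainder vanishes. The task minimizers $w_t^*$ of the empirical loss then take the form $w^*+X_t^+\varepsilon_t$. The clean parameter $w^*$ is shared across tasks, so in the term $E_t^1(w_t^*-w_{t-1}^*)$ of Lemma~\ref{iso} the $w^*$ contributions cancel. What remains is linear in the noise. We will redo the algebra of Lemma~\ref{iso} with the label perturbation replaced by the sampling noise: the label noise $\varepsilon_s$ enters $w_s$ through exactly the same channel $(\nabla^2_{ww}\mathcal{L}_s+H_s)^{-1}X_s^\top(\cdot)/n_s$ as $\eta_s$ does. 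With $\eta_{t-1}=\eta_t=\mathbf 0$, this should give
\[
D_t^1(w_t-w_{t-1})-D_t^2(w_{t-1}-w_{t-2}) = E_t^2\operatorname{vec}(\varepsilon_{t-1})+E_t^3\operatorname{vec}(\varepsilon_t),
\]
with no remainder. This identification is the main obstacle. We must check that the $w^*$ and $w_{t-2}$ contributions are annihilated by the choice of $C_t$ and the projector $(I-B_t)$ in $D_t^1,D_t^2$. Concretely, the identity $A_t^+A_t-B_t^+B_t$ built into $C_t$ has to cancel the $H_{t-1}$-propagated history term. We would verify this first by direct substitution of the closed-form updates.

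\textbf{Step 2: second moment.} Condition on the features. The rows of $\varepsilon_{t-1}$ and $\varepsilon_t$ are independent, have mean zero, and have covariance $\sigma^2 I$, so $\operatorname{Cov}(\operatorname{vec}\varepsilon_s)=\sigma^2 I$. The matrices $E_t^i$ depend only on features and regularizers; we assume $H_t$ is chosen from past information or the features, so that it is independent of the current noise. Under these conditions the cross term vanishes, and
\[
\mathbb{E}[d_t^2]=\sigma^2\bigl[\mathrm{tr}((E_t^2)^\top E_t^2)+\mathrm{tr}((E_t^3)^\top E_t^3)\bigr].
\]

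\textbf{Step 3: Markov and union bound.} Markov's inequality applied to $d_t^2$ gives
\[
\Pr(d_t>\theta_t)\le \frac{\mathbb{E}[d_t^2]}{\theta_t^2}=\frac{\epsilon}{T}.
\]
A union bound over $t=2,\dots,T$ then yields total failure probability at most $\epsilon$, which is exactly the form of $\theta_t$ in the statement.
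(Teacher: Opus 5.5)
Your proposal is correct and matches the paper's proof step for step. The paper also expands the benign linear recursion around the common $w^*$, with $G_s=-\mathbf X_s^\top/n_s$ so that $\mathbf e_s$ enters exactly where $\eta_s$ did. It cancels the $w_{t-2}-w^*$ term to get $d_t=\|E_t^2\mathbf e_{t-1}+E_t^3\mathbf e_t\|$ with no remainder. It then computes the second moment conditional on the features, implicitly using what you state explicitly: that the $E_t^i$ do not depend on $\mathbf e_{t-1},\mathbf e_t$. It concludes with Markov's inequality and a union bound. The cancellation you leave to verify is exactly the identity $D_t^1A_t=D_t^2B_t$, which holds because $C_t$ is the Lagrange-multiplier solution of $\min\|\bar D^1-A_t^+\|_F^2+\|\bar D^2-B_t^+\|_F^2$ subject to $\bar D^1A_t=\bar D^2B_t$; the common left factor $(I-B_t)$ plays no role in it.
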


Lemma \ref{L4} establishes a distribution-free guarantee that remains robust under arbitrary distributions of the inherent noise $\varepsilon_t$. When the noise exhibits stronger concentration, such as sub-Gaussianity, Corollary \ref{CoL4} in Appendix \ref{appendix3} demonstrates that the detection threshold can be significantly tightened. Proofs for both results are deferred to Appendix \ref{appendix3}.


We now formally characterize the generalization performance of Alg. \ref{M1}. The following theorem establishes that under the proposed T2T verification defense, the final model error will converge to zero as the number of tasks $T$ increases, in the presence of $K$ adversarial tasks.
\begin{theorem}\label{T2}Suppose Alg. \ref{M1} replaces its heuristic detection condition with $d_t>\theta_t$, where $\theta_t$ is defined in Lemma~\ref{L4}, and uses the regularization-based update in \eqref{alg:generalized_l2} with the following regularizer $H_t$:
    {\begin{align}H_t\! = \!\frac{1}{n_t} \left( \frac{\sigma^2}{W} I\! + \!\sum_{s=1}^{t-1} \mathbf{X}_s^\top \mathbf{X}_s \right),\;\mathbf{X}_s\!=\!I_{p_2}\otimes X_s. \label{H_t}\end{align}}Then, the final excess risk $\mathcal{R}(w_T)=O\left( \frac{K^2 }{\epsilon T} \right)$ if the datasets for all tasks are informative \footnote{We
define a dataset as informative if, for every dimension, a
linear proportion of the total tasks possesses a non-zero
eigenvalue. Without this assumption, the excess risk
fails to converge even in the absence of an attack.}. Let $\tilde {\mathbf{X}}_t=\left( \frac{\sigma^2}{W} I\! + \!\sum_{s=1}^{t-1} \mathbf{X}_s^\top \mathbf{X}_s \right)$, the explicit upper bound is given by:{\begin{align}
    \mathcal{R}(w_T)\!\leq&K\sum_{i=1}^K\frac{\sigma^2T}{\epsilon}\bigl\|\tilde {\mathbf{X}}_{T+1}^{-1}(\mathbf{X}_{\tau_i}^\top \mathbf{X}_{\tau_i})\bigl\|^2_2\text{tr}((\mathbf{X}_{\tau_i}^\top \mathbf{X}_{\tau_i})^+)
       \nonumber\\
        &+ \sigma^2\text{tr}(\tilde {\mathbf{X}}_{T+1}^{-1}),\label{eT2}
    \end{align}}where $\tau_i$ for $i\in[K]$ is the time index of poisoned tasks.\end{theorem}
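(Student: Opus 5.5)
The plan is to write the clean iterate in closed form, check that a benign run reproduces the Bayesian ridge estimator, and then bound each surviving attack's contribution using the threshold of Lemma~\ref{L4}.

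\emph{Step 1: closed form.} With the squared loss and the regularizer \eqref{H_t}, the first-order condition of \eqref{alg:generalized_l2} gives, in vectorized form,
\[
(\mathbf{X}_t^\top\mathbf{X}_t+n_tH_t)\operatorname{vec}(w_t)=\mathbf{X}_t^\top\operatorname{vec}(Y_t)+n_tH_t\operatorname{vec}(w_{t-1}).
\]
Since $\mathbf{X}_t^\top\mathbf{X}_t+n_tH_t=\tilde{\mathbf{X}}_{t+1}$, this becomes the recursive least-squares form
\[
\tilde{\mathbf{X}}_{t+1}w_t=\tilde{\mathbf{X}}_t w_{t-1}+\mathbf{X}_t^\top Y_t .
\]
Telescoping from $w_0=0$ yields
\[
w_T=\tilde{\mathbf{X}}_{T+1}^{-1}\sum_{s\le T}\mathbf{X}_s^\top\bigl(\mathbf{X}_s w^*+\varepsilon_s+\eta_s\bigr),
\]
which is a generalized ridge estimator with prior precision $\sigma^2/W$. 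When a task pair is rejected, Alg.~\ref{M1} restores $(w_{t-2},H_{t-2})$, so the discarded tasks simply drop out of the sums. I will therefore index the sums by the accepted set $\mathcal{A}$ and interpret $\tilde{\mathbf{X}}_{T+1}$ over $\mathcal{A}$.

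\emph{Step 2: decompose the excess risk.} Write $w_T-w^*$ as a bias term, a noise term and an attack term:
\[
w_T-w^*=-\tfrac{\sigma^2}{W}\tilde{\mathbf{X}}_{T+1}^{-1}w^*+\tilde{\mathbf{X}}_{T+1}^{-1}\sum_{s\in\mathcal{A}}\mathbf{X}_s^\top\varepsilon_s+\tilde{\mathbf{X}}_{T+1}^{-1}\sum_{i}\mathbf{X}_{\tau_i}^\top\eta_{\tau_i}.
\]
The cross terms with $\varepsilon$ vanish in expectation. Averaging the bias term over the prior $\|w^*\|^2\le W$, in the standard Bayesian way, and adding the noise variance $\sigma^2\operatorname{tr}(\tilde{\mathbf{X}}_{T+1}^{-1}\sum\mathbf{X}_s^\top\mathbf{X}_s\tilde{\mathbf{X}}_{T+1}^{-1})$, combines into $\sigma^2\operatorname{tr}(\tilde{\mathbf{X}}_{T+1}^{-1})$. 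This is the second term of \eqref{eT2}. The discarded benign tasks (at most $2K$ of them, plus those lost to false alarms with probability $\le\epsilon$) only slightly shrink $\tilde{\mathbf{X}}_{T+1}$. Under the informativeness assumption, every eigenvalue of $\tilde{\mathbf{X}}_{T+1}$ grows like $\Theta(T)$, so this term is $O(1/T)$.

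\emph{Step 3: the attack term, which is the main obstacle.} For the attacks that survive detection, I would apply Cauchy--Schwarz over the at most $K$ undetected attacks:
\[
\Bigl\|\sum_i v_i\Bigr\|^2\le K\sum_i\|v_i\|^2 .
\]
This produces the leading factor $K$. Next I write
\[
\mathbf{X}_{\tau}^\top\eta_\tau=(\mathbf{X}_\tau^\top\mathbf{X}_\tau)(\mathbf{X}_\tau^\top\mathbf{X}_\tau)^+\mathbf{X}_\tau^\top\eta_\tau,
\]
so that
\[
\|v_i\|\le\|\tilde{\mathbf{X}}_{T+1}^{-1}\mathbf{X}_{\tau_i}^\top\mathbf{X}_{\tau_i}\|_2\,\|(\mathbf{X}_{\tau_i}^\top\mathbf{X}_{\tau_i})^+\mathbf{X}_{\tau_i}^\top\eta_{\tau_i}\|.
\]
The projected perturbation $(\mathbf{X}^\top\mathbf{X})^+\mathbf{X}^\top\eta$ is exactly the component of $\eta$ that moves the single-task estimator. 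I then need to show that non-detection ($d_t\le\theta_t$), combined with Lemma~\ref{iso}, forces
\[
\mathbb{E}\|(\mathbf{X}^\top\mathbf{X})^+\mathbf{X}^\top\eta\|^2\lesssim\frac{\sigma^2T}{\epsilon}\operatorname{tr}\bigl((\mathbf{X}^\top\mathbf{X})^+\bigr).
\]
The idea is that $E_t^3\operatorname{vec}(\eta_t)$ is bounded by $\theta_t$ plus the benign parts. In the linear setting the remainder $\mathcal{E}_t$ vanishes and $w_t^*-w_{t-1}^*$ is pure noise, so I invert $E_t^3$ on its range and match $\operatorname{tr}((E^3)^\top E^3)$ against $\operatorname{tr}((\mathbf{X}^\top\mathbf{X})^+)$.

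This inversion is where the difficulty concentrates. It requires showing that the $E$-matrices act as the least-squares projection up to constants, and it requires handling the case where both $\tau$ and $\tau\pm1$ are attacked, in which the two attacks can partially cancel in $d_t$. For the cancellation case, I would first check the adjacent score $d_{t+1}$ and use whichever of the two scores isolates the attack.

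\emph{Step 4: rates.} By informativeness, $\|\tilde{\mathbf{X}}_{T+1}^{-1}\mathbf{X}^\top\mathbf{X}\|_2=O(1/T)$, and $\operatorname{tr}((\mathbf{X}^\top\mathbf{X})^+)=O(1)$ under normalization. Each attack term is therefore
\[
\frac{T}{\epsilon}\cdot\frac{1}{T^2}=O\!\left(\frac{1}{\epsilon T}\right).
\]
Summing $K$ such terms with the factor $K$ from Step 3 gives $O\!\left(\frac{K^2}{\epsilon T}\right)$, which dominates the $O(1/T)$ term from Step 2. Finally, a union bound on the $\epsilon$-probability event from Lemma~\ref{L4} handles false rejections.
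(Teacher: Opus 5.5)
Your Steps 1, 2 and 4 follow the paper's proof: the recursive least-squares closed form, the benign term $\sigma^2\operatorname{tr}(\tilde{\mathbf{X}}_{T+1}^{-1})$, the factor $K$ from Cauchy--Schwarz, and the $\frac{T}{\epsilon}\cdot\frac{1}{T^2}$ rate count. Step 3, however, has a genuine gap. Inverting $E_t^3$ cannot produce the undetected-attack constraint once individual tasks are rank-deficient, and the theorem allows this: it is stated with pseudo-inverses and assumes only informativeness.

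Your plan does work when every $\mathbf{X}_t^\top\mathbf{X}_t$ is invertible. Then $A_t^+A_t=B_t^+B_t=I$ and $D_t^1=(I-B_t)A_t^{-1}$. Using $I-B_t=S_{t-1}^{-1}H_{t-1}$, one gets exactly $E_t^3=(\mathbf{X}_t^\top\mathbf{X}_t)^{-1}\mathbf{X}_t^\top$ and $E_t^2=-(\mathbf{X}_{t-1}^\top\mathbf{X}_{t-1})^{-1}\mathbf{X}_{t-1}^\top$. So $d_t$ is the distance between consecutive per-task least-squares fits, and $d_t\le\theta_t$ bounds the task-$t$ shift by $O(\theta_t)$ with high probability.

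In general, though, $D_t^1$ and $D_t^2$ retain only the common subspace of $A_t$ and $B_t$. Suppose $\mathbf{X}_{t-1}^\top\mathbf{X}_{t-1}$ and $\mathbf{X}_t^\top\mathbf{X}_t$ are diagonal with disjoint supports; this is compatible with informativeness if the supports alternate across tasks. Then $A_t^+A_t\,B_t^+B_t=0$, and the closed form gives $D_t^1=D_t^2=0$, i.e.\ $d_t\equiv 0$. The adversary choosing $\eta_t$ already knows $X_{t-1}$, $X_t$ and the model state. It can therefore place an unbounded perturbation in $\ker E_t^3$ without moving $d_t$. Non-detection at time $t$ then says nothing about $(\mathbf{X}_t^\top\mathbf{X}_t)^+\mathbf{X}_t^\top\eta_t$, and ``invert $E_t^3$ on its range'' fails exactly where you need it.

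The missing idea is the information-structure argument the paper uses. The attack on task $\tau$ is caught one step later, through the component $E_{\tau+1}^2(\mathbf{n}_\tau+\mathbf{e}_\tau)$ of $d_{\tau+1}$. Crucially, $\eta_\tau$ is chosen before $X_{\tau+1}$ is revealed. The paper therefore requires a surviving attack to survive the worst-aligned admissible $X_{\tau+1}$, for which $A_{\tau+1}^+A_{\tau+1}=B_{\tau+1}^+B_{\tau+1}$. Lemma~\ref{eeeem} then reduces $E_{\tau+1}^2$ to $(I-B)B^+B(I-B)^{-1}$ times the least-squares map $(\mathbf{X}_\tau^\top\mathbf{X}_\tau)^+\mathbf{X}_\tau^\top$, with the oblique factor acting as the identity on the relevant row space. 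This yields \eqref{adversarystrategy} with exactly $\operatorname{tr}((\mathbf{X}_\tau^\top\mathbf{X}_\tau)^+)$. Note that this is a premise about what an undetected attack must withstand, not a consequence of $d_t\le\theta_t$ alone.

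Your remark about consulting $d_{t+1}$ in the adjacent-attack case points in this direction. Without the worst-aligned-future step, however, $d_{t+1}$ suffers the same common-subspace blindness.

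A smaller point in Step 2: $\mathbf{w}^*$ is deterministic, so there is no prior to average over. Instead, bound the bias deterministically by $\frac{\sigma^4}{W^2}\mathbf{w}^{*\top}\tilde{\mathbf{X}}_{T+1}^{-2}\mathbf{w}^*\le\frac{\sigma^4}{W}\operatorname{tr}(\tilde{\mathbf{X}}_{T+1}^{-2})$, as the paper does.
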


The proof of Theorem \ref{T2} is given in Appendix \ref{aT4.4}. The second term in \eqref{eT2} represents the excess risk for benign tasks, while the first term quantifies the error introduced by the shifted attack, which scales as $O(K^2/T)$. Consequently, we provide a defense mechanism with provable performance guarantees.


\section{Robust Feature Defense Against Frequent, Bounded, and Non-shifted Attacks}\label{S5}

In this section, we investigate defense mechanisms against the second category of possibly defensible attacks: frequent, bounded, and non-shifted poisoning. This attack type is relatively more tractable than unbounded or shifted attacks, because the perturbations are bounded and do not induce distributional shifts. Hence, an algorithm that converges under stochastic noise should, in principle, still be able to learn under such poisoning.

However, unlike inherent stochastic noise, which is typically independent and non-adaptive, an adversary strategically crafts $\tilde\eta_t$ to maximize its impact on the learning trajectory, as in \eqref{eq:app_adversarial_obj}. This shift from random noise to targeted intervention necessitates a framework that explicitly quantifies how adversarial perturbations slow convergence, moving beyond passive noise robustness to account for strategic attacks.

Under a regime of pervasive poisoning, the task-to-task verification mechanism in Alg. \ref{M1} becomes untenable. In scenarios where nearly every task is compromised, a rejection-based strategy would lead to the discarding of the entire task sequence, resulting in a total loss of system utility. This necessitates an intrinsic redesign of the regularization framework in \eqref{alg:generalized_l2} by optimizing the regularization term $H_t$ in the attack-defense model in \eqref{eq:app_defense}. Based on Proposition~\ref{proposition2.5} and further incorporating the system's objective of choosing $H_t$ to minimize the excess risk in~\eqref{generalization_loss}, we derive the following tractable surrogate optimization problem:
\begin{align}
&\min_{H_t}\max_{\tilde\eta_t}\;\frac{1}{2}\mathbb{E}\bigl[\|(\nabla^2_{ww}\mathcal{L}_t\!+\!H_t)^+\nabla^2_{wZ}\mathcal{L}_t\operatorname{vec}(\tilde\eta_t)\|_{P_t}^2\bigl]+ \nonumber\\
&\frac{1}{2}\sum\nolimits_{s=1}^t\mathbb{E}\left[\!\|\nabla^2_{ww}\mathcal{L}_t\!\cdot(w_t^* - w_s^*)\!+\!H_t(w_{t-1}-w_s^*)\|^2_{\tilde P_s} \right]\nonumber\\
    &+\mathbb{E}\left[O(\|\tilde{\eta}_t\|_F^2+\|w_t-w_t^*\|_2^2)\right]\nonumber\\
    &+\mathbb{E}\left[O\left(\sum\nolimits_{s=1}^t\|w_t-w_s^*\|_2^3\right)\right],\nonumber\\
    &\text{s.t.} \;\mathbb{E}[\|\tilde\eta_t\|_F^2] \leq M,\label{eq:app_defense}
\end{align}

where $\tilde P_s:=(\nabla^2_{ww}\mathcal{L}_t+H_t)^+\nabla^2_{ww}\mathcal{L}_s
    (\nabla^2_{ww}\mathcal{L}_t+H_t)^+.$

The detailed derivation of this surrogate problem is provided in Appendix \ref{appproposition2.5}. This defense model suggests how the system must proactively reshape the regularization term $H_t$ to account for potential perturbations within the dataset $D_t$ under adversarial awareness. Given that the adversary’s optimal strategy in the inner maximization problem is to concentrate the poisoning budget along the principal eigenvector to maximize the degradation of the generalization risk, an effective defense should prioritize suppressing the maximum singular value of the sensitivity matrix $ P_t^{1/2}\!\left(\nabla_{ww}^2 \mathcal{L}_t + H_t\right)^+ \nabla_{wZ}^2 \mathcal{L}_t$. In this regime, the system must optimize $H_t$ by balancing three competing error sources: task dissimilarity ($w_t^* - w_s^*$), historical model drift $w_{t-1} - w_s^*$, and adversarial noise $\eta_t$. 

Directly solving the approximate problem in \eqref{eq:app_defense} is non-trivial, as it requires the estimation of multiple high-dimensional objective terms. To derive actionable insights and provide rigorous performance guarantees, we further analyze this defense framework under the linear CL.

\subsection{Theoretical Analysis of Our Robust Feature Defense}
Under Assumption~\ref{assum1}, we vectorize the variables and, with a slight abuse of notation, write
$w_t\gets \operatorname{vec}(w_t),\;
    \tilde\eta_t\gets \operatorname{vec}(\tilde\eta_t),\;
    X_t\gets I_{p_2}\otimes X_t .$
Then, the system's optimization in~\eqref{eq:app_defense} reduces to the following under Assumption~\ref{assum1} for minimizing
$\mathcal R(w_t)=\mathbb E[\|w_t-w_*\|_2^2]$:
\begin{align}
    \min_{H_t}\max_{\mathrm{Cov}(\tilde{\eta}_t)\succeq0}
    \!&
    \left\|
    S_t^{-1}H_t\mathbb{E}[w_{t-1}-w_*]
    \right\|_2^2
    \nonumber\\
    &+
    \mathrm{tr}\left(
    S_t^{-1}\frac{X_t^\top}{n_t}
    (\mathrm{Cov}(\tilde{\eta}_t)+\sigma^2 I)
    \frac{X_t}{n_t}S_t^{-1}
    \right)
    \nonumber\\
    &+
    \mathrm{tr}\left(
    S_t^{-1}H_t\mathrm{Cov}(w_{t-1}-w_*)H_t^\top S_t^{-1}
    \right)
    \nonumber\\
    \mathrm{s.t.}\;&
    \mathrm{tr}\big(\mathrm{Cov}(\tilde{\eta}_t)\big)\le M,
    \label{eq:opt_linear}
\end{align}
where
$S_t=\frac{X_t^\top X_t}{n_t}+H_t$ is symmetric and invertible by choosing $H_t$.
The corresponding recursions for the mean and covariance under $H_t$ and
$\tilde{\eta}_t$ are
\begin{align}
    \mathbb{E}[w_t-w_*]
    =&
    S_t^{-1}H_t\mathbb{E}[w_{t-1}-w_*],
    \nonumber\\
    \mathrm{Cov}(w_t-w_*)
    =&
    S_t^{-1}\frac{X_t^\top}{n_t}
    (\mathrm{Cov}(\tilde{\eta}_t)+\sigma^2 I)
    \frac{X_t}{n_t}S_t^{-1}\nonumber\\
    &+
    S_t^{-1}H_t\mathrm{Cov}(w_{t-1}-w_*)H_t^\top S_t^{-1}
    \label{eq:opt_linear_M_C}
\end{align}
for $t=1,\dots,T$. If $w_0=0$, then
$\|\mathbb{E}[w_0-w_*]\|_2^2=\|w_*\|_2^2\le W$ and
$\mathrm{Cov}(w_0-w_*)=\mathbf 0$.

The derivations of~\eqref{eq:opt_linear} and~\eqref{eq:opt_linear_M_C} are detailed
in Appendix~\ref{app4.1}. Under this recursion, the defense can be implemented recursively: at each time \(t\), the
system solves \eqref{eq:opt_linear} to determine the defender's regularizer
\(H_t\) together with the worst-case adversarial strategy, and then substitutes
both strategies into the recursions to update the estimates of
\(\mathbb{E}[w_t-w_*]\) and \(\mathrm{Cov}(w_t-w_*)\). As a result, even though
the system does not know which tasks are attacked or which admissible
perturbations are used, it obtains a uniform excess-risk upper bound by
optimizing against the worst-case scenario in which every task is treated as
potentially attacked and the adversary maximizes the excess-risk bound at each
time.

To derive a closed-form solution for $H_t$ and provide further
insight, we impose the following standard assumption in addition to
Assumption~\ref{assum1}, consistent with the existing theoretical literature on CL
\cite{li2023fixed, NEURIPS2022_d5c04aa7, zhao2024statistical}.

\begin{assumption}\label{AS5}
    The matrices $\{\mathbf{X}_s^\top\mathbf{X}_s\}_{s=1}^T$ pairwise commute.
\end{assumption}

Under Assumption~\ref{AS5}, the task Hessians are simultaneously diagonalizable.
Thus, there exists an orthogonal matrix $U=[u_1,\dots,u_p]$ such that
\[
    \frac{X_t^\top X_t}{n_t}
    =
    U\Gamma_tU^\top,
    \;
    \Gamma_t=\operatorname{diag}(\gamma_1^t,\dots,\gamma_p^t).
\]
It is therefore sufficient to restrict the regularizer to the same eigenbasis,
$H_t=U\Lambda_tU^\top,
    \;
    \Lambda_t=\operatorname{diag}(\lambda_1^t,\dots,\lambda_p^t),$
so that the defense design reduces to choosing the scalar regularization strengths
$\{\lambda_j^t\}_{j=1}^p$.

This diagonalization also decomposes the parameter-space excess risk into independent eigendirections:
\(
    \mathcal R(w_t)=\sum_{j=1}^p \mathcal R_j^t,
\)
where $\mathcal R_j^t=\mathbb{E}[(u_j^\top(w_t-w_*))^2]$ denotes the contribution along eigenvector $u_j$. Under the worst-case setting where every task may be poisoned, the minimax problem in~\eqref{eq:opt_linear} reduces to a scalar per-direction recursion for $\mathcal R_j^t$. The full recursion and derivation are deferred to Appendix~\ref{app4.1}.

We derive the optimal eigenvalue design
$\Lambda_t=\operatorname{diag}(\lambda_1^t,\dots,\lambda_p^t)$, equivalently the Nash equilibrium of the minimax recursion in~\eqref{eq:opt_linear}, in the following proposition.

\begin{proposition}\label{T3}
Let \(\tilde{\gamma}_j^t := \gamma_j^t n_t\) for all \(t\in[T]\), and define
\(
\mathcal I_t^+ := \{j\in[p]: \tilde\gamma_j^t>0,\ \mathcal R_j^{t-1}>0\}.
\)
For \(j\in\mathcal I_t^+\), define
\(
b_j^t :=
\frac{\mathcal R_j^{t-1}\tilde\gamma_j^t+\sigma^2}
{\mathcal R_j^{t-1}\sqrt{\tilde\gamma_j^t}} .
\)
Reorder the indices in \(\mathcal I_t^+\) such that
\(b_1^t\le\cdots\le b_{|\mathcal I_t^+|}^t\), breaking ties arbitrarily. For
\(j\le |\mathcal I_t^+|\), define
\(
A_j^t :=
\frac{M+j\sigma^2+\sum_{k=1}^j\mathcal R_k^{t-1}\tilde\gamma_k^t}
{\sum_{k=1}^j\mathcal R_k^{t-1}\sqrt{\tilde\gamma_k^t}} .
\)
If \(\mathcal I_t^+=\emptyset\), set \(j_t=0\). Otherwise, define \(j_t\) as the
maximum value in the following set:
\begin{align}
J_t
=
\{j\in\{1,\dots,|\mathcal I_t^+|\}: A_j^t>b_j^t\}.
\label{J_t}
\end{align}
If the set above is empty, set \(j_t=0\). The system's best defense strategy of
\(\lambda_j^t\) is
\begin{align}
\lambda_j^t
=
\begin{cases}
\displaystyle
\frac{\sigma^2}{n_t\mathcal R_j^{t-1}},
& j\in\mathcal I_t^+,\ j>j_t,\\[0.9em]
\displaystyle
A_{j_t}^t\sqrt{\frac{\gamma_j^t}{n_t}}-\gamma_j^t,
& j\in\mathcal I_t^+,\ j\leq j_t,\\[0.9em]
\displaystyle
+\infty,
& \gamma_j^t>0,\ \mathcal R_j^{t-1}=0,\\[0.4em]
\displaystyle
\text{any positive value},
& \gamma_j^t=0.
\end{cases}
\label{RFD}
\end{align}
Here the first two cases are stated after the above reordering of the indices in
\(\mathcal I_t^+\).
\end{proposition}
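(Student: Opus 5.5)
The plan is to reduce \eqref{eq:opt_linear} to a separable scalar minimax problem in the common eigenbasis. I would then solve it by computing the adversary's best response to the ridge-optimal defender, which is a water-filling problem, and finally check that the resulting pair is a saddle point.

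\textbf{Step 1 (scalarization).} Under Assumption~\ref{AS5}, write $H_t=U\Lambda_tU^\top$ and $S_t=U(\Gamma_t+\Lambda_t)U^\top$. Inductively, $\mathrm{Cov}(w_{t-1}-w_*)$ and the bias can be taken aligned with $U$, so that only $\mathcal R_j^{t-1}$ matters per direction. The adversary's covariance enters only through $\frac{X_t^\top}{n_t}\mathrm{Cov}(\tilde\eta_t)\frac{X_t}{n_t}$. Setting $c_j:=u_j^\top X_t^\top \mathrm{Cov}(\tilde\eta_t)X_tu_j/(n_t\gamma_j^t n_t)$ (up to the right normalization, with $c_j\ge 0$ and $\sum_j c_j\le M$ attainable by placing covariance along the left singular vectors of $X_t$), the objective becomes
\begin{align*}
\mathcal R_j^t=\frac{(\lambda_j^t)^2\mathcal R_j^{t-1}+\frac{\gamma_j^t}{n_t}(\sigma^2+c_j)}{(\gamma_j^t+\lambda_j^t)^2},\qquad \mathcal R(w_t)=\sum_j\mathcal R_j^t .
\end{align*}
The degenerate cases are handled directly. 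If $\gamma_j^t=0$, then $\mathcal R_j^t=\mathcal R_j^{t-1}$ for any $\lambda_j^t>0$. If $\mathcal R_j^{t-1}=0$ and $\gamma_j^t>0$, the term is decreasing in $\lambda_j^t$, so the infimum is at $\lambda_j^t=+\infty$; this freezes the direction and makes any $c_j$ useless to the adversary.

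\textbf{Step 2 (inner problem in $\lambda$ for fixed $c$).} For $j\in\mathcal I_t^+$, the map $\lambda\mapsto\mathcal R_j^t$ is minimized at the ridge value $\lambda_j=(\sigma^2+c_j)/(n_t\mathcal R_j^{t-1})$. The minimal value is
\begin{align*}
V_j(c_j)=\frac{\mathcal R_j^{t-1}(\sigma^2+c_j)}{\mathcal R_j^{t-1}\tilde\gamma_j^t+\sigma^2+c_j}.
\end{align*}
This is concave and increasing in $c_j$, with derivative $(\mathcal R_j^{t-1})^2\tilde\gamma_j^t/(\mathcal R_j^{t-1}\tilde\gamma_j^t+\sigma^2+c_j)^2$.

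\textbf{Step 3 (adversary's water-filling).} The adversary maximizes $\sum_jV_j(c_j)$ subject to $\sum_j c_j\le M$ and $c\ge0$. By concavity, the KKT conditions are sufficient. They require equal marginal gains on the support, namely $(\mathcal R_j^{t-1}\tilde\gamma_j^t+\sigma^2+c_j)/(\mathcal R_j^{t-1}\sqrt{\tilde\gamma_j^t})=A$ whenever $c_j>0$. Off the support, the derivative at $c_j=0$ must be no larger than the common level, i.e.\ $b_j^t\ge A$. Hence the support consists of the indices with smallest $b_j^t$. Summing $c_j=A\mathcal R_j^{t-1}\sqrt{\tilde\gamma_j^t}-\mathcal R_j^{t-1}\tilde\gamma_j^t-\sigma^2$ over a support of size $j$ and imposing $\sum c_j=M$ gives $A=A_j^t$. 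Consistency ($c_k>0$ for $k\le j$) means $A_j^t>b_j^t$, and because the $b$'s are sorted this selects $j_t=\max J_t$. I would verify this with the standard monotonicity lemma for water-filling: $A_j^t>b_j^t$ iff $A_{j}^t$ lies in the right interval $(b_j^t,b_{j+1}^t]$, using that $A_j^t$ is a weighted average update of $A_{j-1}^t$ and $b_j^t$. Substituting $c_j$ into the ridge formula gives $\lambda_j^t=A_{j_t}^t\sqrt{\gamma_j^t/n_t}-\gamma_j^t$ for $j\le j_t$ and $\sigma^2/(n_t\mathcal R_j^{t-1})$ otherwise, which matches \eqref{RFD}.

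\textbf{Step 4 (saddle point), the expected main obstacle.} The objective is linear, hence concave, in $c$, but it is not jointly convex in $\lambda$. I would therefore not invoke Sion's theorem directly. Instead I would verify the saddle inequalities. First, by Step 2, $\lambda^\star$ is a best response to $c^\star$. Second, $c^\star$ must be a best response to $\lambda^\star$. Since the objective is linear in $c$ for fixed $\lambda$, this amounts to showing that $\gamma_j^t/(n_t(\gamma_j^t+\lambda_j^\star)^2)$ is maximal and equal across the support and no larger off it. A direct computation gives $\gamma_j/(n_t(\gamma_j+\lambda_j)^2)=1/A^2$ on the support. Off the support it gives $1/(b_j^t)^2\le 1/A^2$, because $b_j^t\ge A$. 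The equality of the value from both sides then gives the Nash equilibrium, and min-max equals max-min.
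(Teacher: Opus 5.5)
Your argument is correct, and it is essentially the dual (max--min) side of the paper's proof. The paper first substitutes the adversary's best response to a fixed $\lambda$, which puts all budget on the single most sensitive direction. It then writes the defender's problem in the shrinkage variables $\lambda_j^t/(\lambda_j^t+\gamma_j^t)$ as a convex epigraph program with auxiliary variable $z$ and solves the KKT system. The multipliers $\chi_j$ of the constraints $z\ge(1-c_j^t)^2/\tilde\gamma_j^t$ satisfy $\sum_j\chi_j=M$ and enter stationarity as an inflated noise level $\sigma^2+\chi_j$. Your allocations $c_j$ are exactly these multipliers, and the steps line up as follows:
\begin{itemize}
\item your ridge best response in Step 2 is the paper's stationarity condition;
\item your water-filling KKT in Step 3 reproduces its dual-feasibility and complementary-slackness analysis, including $c_j^\star=a_j(A^t_{j_t}-b_j^t)$ and the prefix structure;
\item your Step 4 replaces the appeal to KKT sufficiency for a convex program with an explicit saddle-point check.
\end{itemize}
Your route makes the Nash equilibrium explicit, which the paper only asserts. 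In equilibrium the adversary cannot concentrate on one direction, because the defender would deviate. Instead it spreads variance over the protected set so that the sensitivities $\gamma_j^t/(n_t(\gamma_j^t+\lambda_j^\star)^2)$ all equal $1/(A^t_{j_t})^2$. Your check also certifies min--max $=$ max--min without any convexity in $\lambda$. The paper's primal formulation is strictly convex in the shrinkage variables on $\mathcal I_t^+$, so the same KKT computation also pins down the defender's optimum as unique. Your saddle argument only shows that $\lambda^\star$ attains the minimax value, which is all the statement requires.

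One slip in Step 3: the lemma ``$A_j^t>b_j^t$ iff $A_j^t\in(b_j^t,b_{j+1}^t]$'' is false as stated. For large $M$ one has $A_1^t>b_2^t$, so $A_1^t>b_1^t$ holds while $A_1^t\notin(b_1^t,b_2^t]$. What the weighted-average identity actually gives is two facts:
\begin{itemize}
\item $J_t$ is a prefix, since $A_j^t>b_j^t$ forces $A_{j-1}^t>b_j^t\ge b_{j-1}^t$;
\item $\max J_t$ is the unique $m$ with $b_m^t<A_m^t\le b_{m+1}^t$.
\end{itemize}
This is exactly the consistency condition your water-filling KKT point needs, so the argument goes through once the lemma is restated this way.
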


The proof of Proposition \ref{T3} is given in Appendix \ref{appendix5}. Under this strategy, the set of dimensions $\{1, \dots, j_t\}$ in the Hessian space represents the "Protected Set," where the adversary allocates most of their attack budget; consequently, the system specifically tailors the regularization parameters $\{\lambda_j^t\}$ to bolster the defenses of these vulnerable dimensions. 

\citet{zhao2024statistical} proves that the regularization term $H_t$ defined in \eqref{H_t}, equivalent to the EWC algorithm \cite{kirkpatrick2017overcoming}, minimizes the upper bound of $\mathcal{R}(w_t)$ for each task without attacks under Assumption~\ref{assum1}. We adopt this formulation as a benchmark to illustrate how our method accelerates the convergence of $\mathcal{R}(w_T)$ by reducing the adversarial sensitivity of the data features.
While the per-dimension iteration of $\mathcal{R}_j^t$ in \eqref{S5update} is complex and lacks analytical expression, we establish an asymptotic upper bound for the total excess risk $\sum_{j=1}^p \mathcal{R}_j^T$ under a special case.


\begin{theorem}\label{T4}
Consider the special stationary problem instance specified in
Theorem~\ref{T4app}, in which the protected dimensions
\(\{1,\dots,j_T\}\) selected by Proposition~\ref{T3} remain fixed over time.
Then our robust feature defense in~\eqref{RFD} ensures fast convergence with
the following excess risk for some constants \(\{f_j\}_{j=1}^{j_T}\) as \(T\to\infty\):
\begin{align}
    \mathcal{R}(w_T) \lesssim
    &\frac{M+j_T\sigma^2}
    {T \left(\sum_{j=1}^{j_T}f_j\sqrt{\tilde\gamma_j^T}\right)^2}
    \nonumber\\
    &+
    \sum_{j=j_T+1}^p
    \frac{\sigma^2}
    {\sigma^2/W+\sum_{\tau=1}^{T}\tilde\gamma_j^\tau},
    \label{WG0}
\end{align}
which improves over the following excess risk of the latest
regularization-based method in~\cite{zhao2024statistical} using \(H_t\)
in~\eqref{H_t}:
\begin{align}
\mathcal{R}(w_T)
   \lesssim
   &\max_j \frac{M}{T\tilde\gamma_j^T}
   +
   \sum_{j=1}^{p}
   \frac{\sigma^2}
   {\sigma^2/W+\sum_{\tau=1}^{T}\tilde\gamma_j^\tau}.
   \label{WG}
\end{align}
\end{theorem}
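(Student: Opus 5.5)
The plan is to work with the per-direction scalar recursion from Appendix~\ref{app4.1} and treat the two groups of eigendirections separately, using the equilibrium of Proposition~\ref{T3}. In eigendirection $u_j$, write $\mathcal R_j^t$ for the risk. With $\lambda_j^t$ and the adversary's covariance allocation $c_j^t$ (with $\sum_j c_j^t\le M$), the recursion \eqref{eq:opt_linear_M_C} reads
\[
\mathcal R_j^t=\frac{(\lambda_j^t)^2\mathcal R_j^{t-1}+(\gamma_j^t/n_t)(c_j^t+\sigma^2)}{(\gamma_j^t+\lambda_j^t)^2}.
\]
Throughout I use the stationary instance of Theorem~\ref{T4app}: the protected set $\{1,\dots,j_T\}$ is fixed, and $\tilde\gamma_j^t$ is (asymptotically) constant in $t$. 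I will bound the two groups of directions in turn.

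\textbf{Unprotected directions ($j>j_T$).} Here the adversary places no budget at equilibrium, so $c_j^t=0$, and the defender plays $\lambda_j^t=\sigma^2/(n_t\mathcal R_j^{t-1})$. Substituting gives the Bayesian/EWC update
\[
\frac{1}{\mathcal R_j^t}=\frac{1}{\mathcal R_j^{t-1}}+\frac{\tilde\gamma_j^t}{\sigma^2}.
\]
Initializing at $\mathcal R_j^0\le W$ and telescoping yields
\[
\mathcal R_j^T\le \frac{\sigma^2}{\sigma^2/W+\sum_{\tau\le T}\tilde\gamma_j^\tau},
\]
which is the second term of \eqref{WG0}. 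I will check that $j>j_T$ indeed forces $c_j^t=0$. This follows from the water-filling structure behind $J_t$: the condition $A^t_{j}\le b^t_{j}$ means the equalized marginal value lies below that direction's threshold.

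\textbf{Protected directions ($j\le j_T$).} The defender equalizes $\gamma_j+\lambda_j=A^t\sqrt{\gamma_j/n_t}$, so the adversary is indifferent across these directions. Plugging in, each protected risk becomes an explicit function of $A^t$. Summing over the protected directions and using the definition of $A^t$, the total protected risk should satisfy
\[
\sum_{j\le j_T}\mathcal R_j^t\approx \sum_{j\le j_T}\mathcal R_j^{t-1}-\frac{\bigl(\sum_{j\le j_T}\mathcal R_j^{t-1}\sqrt{\tilde\gamma_j}\bigr)^2}{M+j_T\sigma^2+\sum_{j\le j_T}\mathcal R_j^{t-1}\tilde\gamma_j}.
\]
This is a Cauchy--Schwarz-type identity that I would verify directly.

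Next, in the stationary regime I expect the ratios $\mathcal R_j^{t}/\sum_k\mathcal R_k^t$ to converge to constants, and these define the $f_j$. Then $S_t:=\sum_{j\le j_T}\mathcal R_j^t$ obeys
\[
S_t\approx S_{t-1}-\frac{S_{t-1}^2F^2}{M+j_T\sigma^2},\qquad F=\sum_j f_j\sqrt{\tilde\gamma_j},
\]
once $S_{t-1}\tilde\gamma\to0$. The standard comparison $1/S_t\ge 1/S_{t-1}+F^2/(M+j_T\sigma^2)$ then gives $S_T\lesssim (M+j_T\sigma^2)/(TF^2)$, which is the first term of \eqref{WG0}.

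\textbf{Main obstacle.} The hard step is justifying the convergence of the ratios (the constants $f_j$), because the recursion couples all protected directions through $A^t$. I would first try to exhibit a fixed point of the normalized map and show it is locally contracting. Failing that, I would use $f_j=\min_t \mathcal R_j^t/S_t$, which gives a lower bound and therefore still yields an upper bound on $S_T$.

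\textbf{Comparison with the benchmark.} I compute \eqref{WG} for $H_t$ in \eqref{H_t}. In that case $\lambda_j^t$ is blind to the attack, so the adversary places all of $M$ on the single most damaging direction. Each such direction contributes roughly $M/(T\tilde\gamma_j)$ on top of the noise term, which gives \eqref{WG}. The improvement then comes from the inequality
\[
\frac{M+j_T\sigma^2}{\bigl(\sum_j f_j\sqrt{\tilde\gamma_j}\bigr)^2}\le \max_j\frac{M}{\tilde\gamma_j}+\sum_{j\le j_T}\frac{\sigma^2}{\tilde\gamma_j},
\]
which I would derive from the minimax optimality of Proposition~\ref{T3} evaluated against the benchmark strategy, together with the dropped noise terms $j\le j_T$ of \eqref{WG}.
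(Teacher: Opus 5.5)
Your proposed route to the ``improves over'' claim does not work. The inequality you want,
\[
\frac{M+j_T\sigma^2}{\bigl(\sum_{j\le j_T} f_j\sqrt{\tilde\gamma_j}\bigr)^2}\le\max_j\frac{M}{\tilde\gamma_j}+\sum_{j\le j_T}\frac{\sigma^2}{\tilde\gamma_j},
\]
is false for general constants $f_j$. Take $p=j_T=2$, $\sigma^2=1$, $\tilde\gamma_1=1$, $\tilde\gamma_2=10^4$ and $f$ close to $(1,0)$: the left side is close to $M+2$, while the right side is $M+1+10^{-4}$. So the inequality would need specific information about the limiting shares, and the minimax optimality of Proposition~\ref{T3} cannot supply it. That optimality is myopic: it minimizes the one-step worst-case risk from a given state $(\mathcal R_j^{t-1})_j$. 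The EWC trajectory passes through different per-direction states, and a smaller one-step total does not order the state vectors componentwise, so nothing propagates to time $T$.

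The paper does not attempt such an inequality. It obtains \eqref{WG} by unrolling the EWC recursion: task-$t$ noise in direction $j$ enters with coefficient $\sqrt{\tilde\gamma_j^t}/(\sigma^2/W+\sum_{\tau\le T}\tilde\gamma_j^\tau)$, and the adversary loads $M$ on the worst direction. The improvement is then stated only qualitatively. If you want a rigorous comparison, let the adversary play its equilibrium mixed allocation, i.e.\ the KKT multipliers $\chi_j^t$ with $\sum_j\chi_j^t=M$. Substituting \eqref{chij} into \eqref{S5update} gives, for every non-degenerate direction, $1/\mathcal R_j^t=1/\mathcal R_j^{t-1}+\tilde\gamma_j^t/(\sigma^2+\chi_j^t)$. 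In the stationary case the AM--HM inequality then yields $\mathcal R(w_T)\le\sum_j\sigma^2/(T\tilde\gamma_j)+M/(T\min_j\tilde\gamma_j)$, which is the right-hand side of \eqref{WG} up to the prior term.

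For \eqref{WG0} itself your route is the paper's: EWC telescoping for $j>j_T$, and the aggregate recursion for $\Phi_t=\sum_{j\le j_T}\mathcal R_j^t$, which is exact because it is a completed square in $\sqrt{z_t}$. Two adjustments are needed.

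First, the convergence $f_j^t\to f_j$ that you call the main obstacle is a hypothesis of the stationary instance in Theorem~\ref{T4app}. The paper does not prove it.

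Second, your comparison $1/S_t\ge 1/S_{t-1}+F^2/(M+j_T\sigma^2)$ points the wrong way for the exact recursion
\[
\frac{1}{\Phi_t}-\frac{1}{\Phi_{t-1}}=\frac{F_t^2}{M+j_T\sigma^2+\Phi_{t-1}\bigl[\sum_{j\le j_T}f_j^{t-1}\tilde\gamma_j^t-F_t^2\bigr]},\qquad F_t:=\sum_{j\le j_T}f_j^{t-1}\sqrt{\tilde\gamma_j^t}.
\]
The bracket is nonnegative by Cauchy--Schwarz, so each increment is at most $F_t^2/(M+j_T\sigma^2)$. You must prove $\Phi_t\to0$ rather than posit it. The paper does this by noting $\Phi_t$ is nonincreasing, and a positive limit would keep the increments bounded below, forcing $1/\Phi_t\to\infty$. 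Then the increments converge to $C_\star/(M+j_T\sigma^2)$ with $C_\star=\bigl(\sum_j f_j\sqrt{\tilde\gamma_j^T}\bigr)^2$, and Ces\`aro gives $\Phi_T\sim(M+j_T\sigma^2)/(TC_\star)$, which is how the paper concludes.
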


The complete version of Theorem \ref{T4} and the proof are given in Appendix \ref{appendix6}. In~\eqref{WG}, the attack-induced term is determined by a worst-direction
maximum, \(\max_j M/(T\tilde\gamma_j^T)\). Thus the convergence rate can be
dominated by a single highly sensitive eigendirection with small
\(\tilde\gamma_j^T\). Our robust feature defense effectively spreads this
worst-case sensitivity over the protected set. 
This redistribution prevents the attack-induced loss from being dictated by a 
single small-eigenvalue direction and yields a more stable convergence rate under
imbalanced spectra.


\section{Experimental Validation}\label{S6}

In this section, we evaluate our attack-defense framework across linear and non-linear models to reflect real-world scenarios. Following a widely adopted protocol \cite{Peng-ICLR2025}, we utilize CIFAR-100 with a pretrained Vision Transformer (ViT) \cite{dosovitskiy2021an}, where only an additional linear layer is trained. To assess non-linear dynamics, we train a Convolutional Neural Network from scratch on CIFAR-10 due to computational constraints. Both datasets are partitioned into 100 tasks for CL. We benchmark our proposed defense against two established baselines: regularization-based EWC \cite{kirkpatrick2017overcoming} and memorization-based iCaRL \cite{rebuffi2017icarl}. Implementation details are provided in Appendix~\ref{App:experiment}.

We first simulate the infrequent shifted attacks in Section \ref{S4} by randomly poisoning 10 out of 100 tasks. For the CIFAR-100 task, the lower subfigure of Fig. \ref{non_generalization_loss_R_pre} shows that the test accuracy under our Alg. \ref{M1} remains comparable to the no-attack baseline after discarding the detected tasks. Conversely, pure EWC and iCaRL, lacking carefully designed defense mechanisms, converge slower under shifted attacks.

Moreover, the upper subfigure of Fig. \ref{non_generalization_loss_R_pre} shows that the detection scores exhibit prominent peaks upon attack, all of which are successfully identified by the criterion established in Alg. \ref{M1}, while the accuracy in the lower subfigure of Fig. \ref{non_generalization_loss_R_pre} remains deceptively stable under attack and lacks a signature of malicious intervention. This shows that although data poisoning is often imperceptible through macroscopic metrics like accuracy, our Alg. \ref{M1} effectively exposes these threats by providing a clear microscopic indicator.

\begin{figure}[H]
\centering
\begin{minipage}{0.513\columnwidth}
    \centering
    \includegraphics[width=\linewidth]{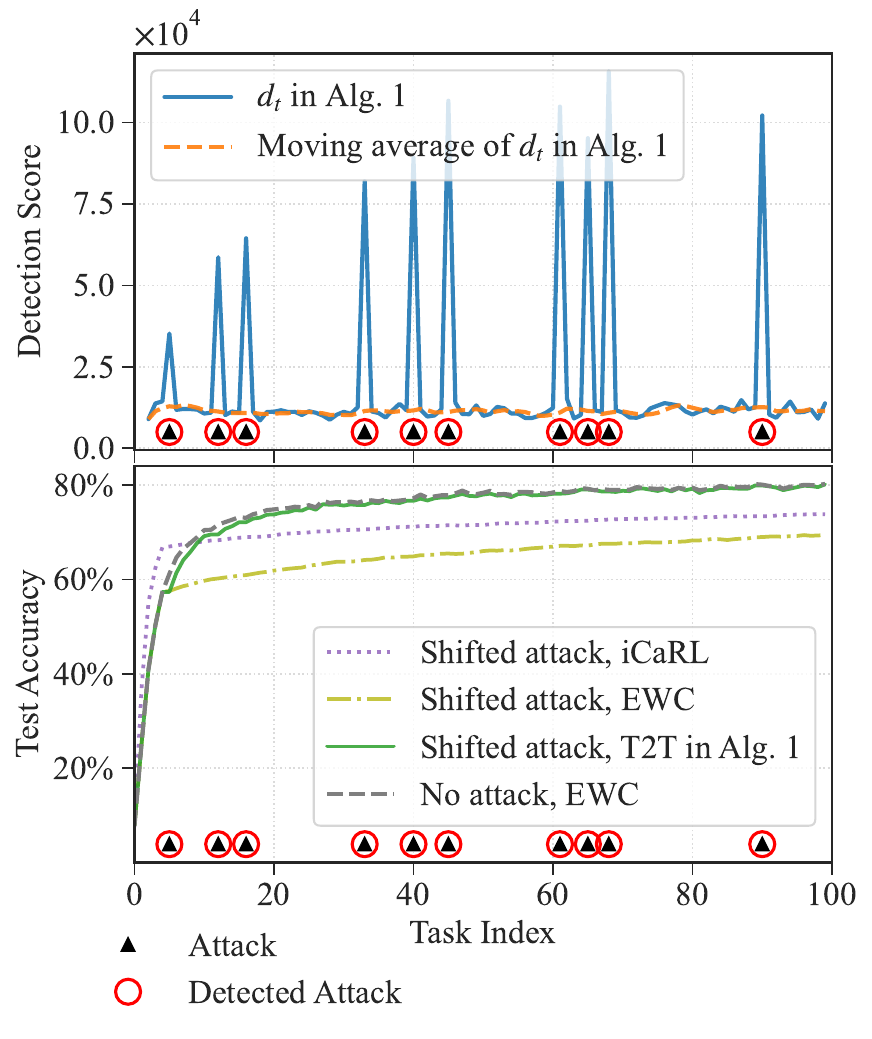}
    \subcaption{CIFAR-100}\label{non_generalization_loss_R_pre}
\end{minipage}\hfill
\begin{minipage}{0.487\columnwidth}
    \centering
    \includegraphics[width=\linewidth]{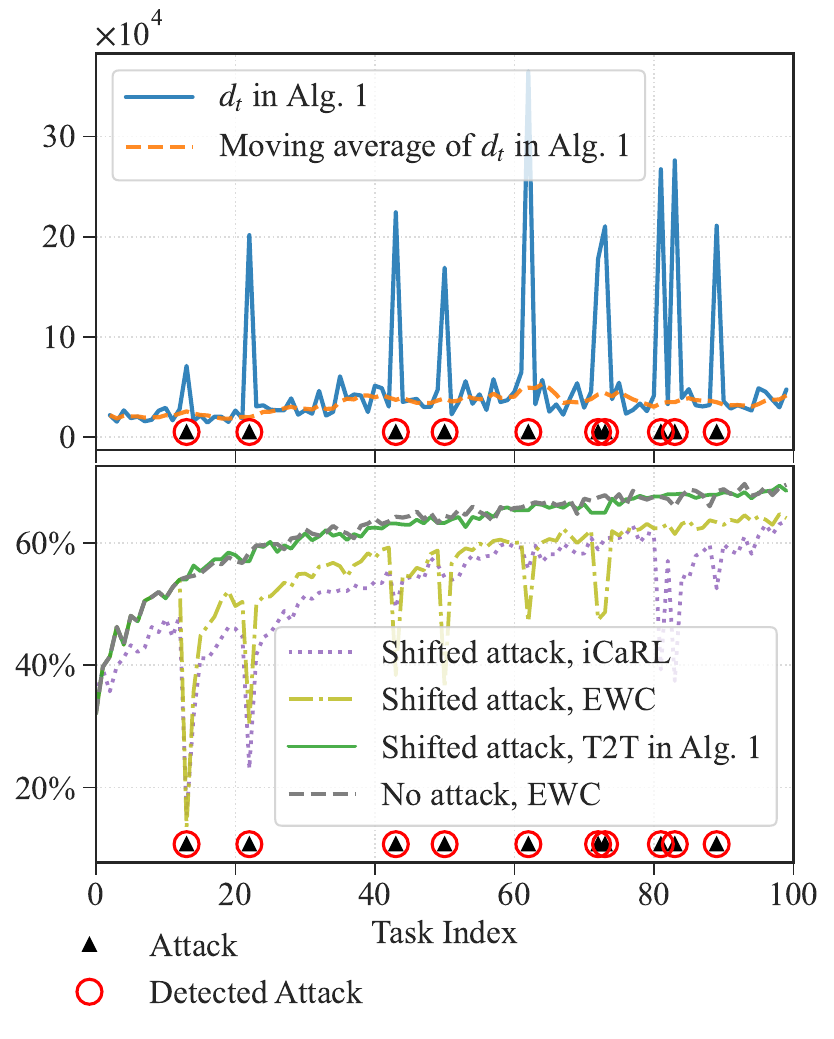}
    \subcaption{CIFAR-10}\label{non_generalization_loss_R}
\end{minipage}

\caption{Performance evaluation under infrequent shifted attacks on CIFAR-100 (left) and CIFAR-10 (right). The upper panels show the evolution of the detection score \(d_t\) in~\eqref{dt} across tasks, and the lower panels show the test accuracies of the attack-free baseline, EWC, iCaRL, and EWC integrated with our T2T defense. Black triangles indicate detected attacks, while red circles indicate the ground-truth attacked tasks.}
\label{fig:icml_four_panel}
\end{figure}
Fig. \ref{non_generalization_loss_R} further illustrates the results on the CIFAR-10 dataset with a non-linear model. In this non-linear setting, all adversarial interventions are successfully identified by our mechanism, validating that our Alg. \ref{M1} remains effective across both linear and complex non-linear models. Note that the test accuracy for CIFAR-100 remains more stable under attack than that of CIFAR-10. This is because the frozen parameters in the pretrained model already possess feature-extraction capabilities, whereas the CIFAR-10 model trained from scratch lacks this prior knowledge.

Figs. \ref{non_generalization_loss_R_pre} and \ref{non_generalization_loss_R} show that iCaRL outperforms EWC under shifted attacks. This resilience occurs because iCaRL trains on a mix of the current task and clean exemplars. Since the attack only targets the current task, these benign exemplars prevent the model from fully drifting in the adversarial direction. This highlights a potential advantage for memorization-based methods against poisoning.


We next evaluate CL under the frequent and bounded non-shifted attacks described in Section~\ref{S5}. On synthetic data, we simulate the Monte Carlo attack multiple times and report the average performance of our robust feature defense, which recursively solves~\eqref{eq:opt_linear} without Assumption~\ref{AS5}. Fig.~\ref{experiment3_log_scale} in Appendix~\ref{App:experiment} confirms the faster excess-risk convergence predicted by Theorem~\ref{T4}, compared with the benchmark of $H_t$ in~\eqref{H_t} from \cite{zhao2024statistical}.

We then evaluate the CIFAR-100 setting and assess the efficacy of our proposed defense mechanism derived from \eqref{eq:app_defense} in comparison to the standard EWC and iCaRL algorithms. The results in Fig. \ref{nonshifted_comparison_pre_acc} demonstrate that our defense effectively accelerates convergence and mitigates accuracy degradation under the strategic attack, confirming its ability to stabilize the learning trajectory against pervasive adversarial noise.

In contrast to its success against infrequent attacks in Figs. \ref{non_generalization_loss_R_pre} and \ref{non_generalization_loss_R}, iCaRL performs significantly worse than regularization-based methods under frequent attacks (Fig. \ref{nonshifted_comparison_pre_acc}). Without a regularization term, iCaRL is more sensitive to the adversarial objective in \eqref{eq:app_adversarial_obj}. This allows the adversary to disrupt training more effectively by targeting the model's most sensitive features with strategic noise.

\begin{figure}[H]
\centering
\includegraphics[width=0.8\columnwidth]{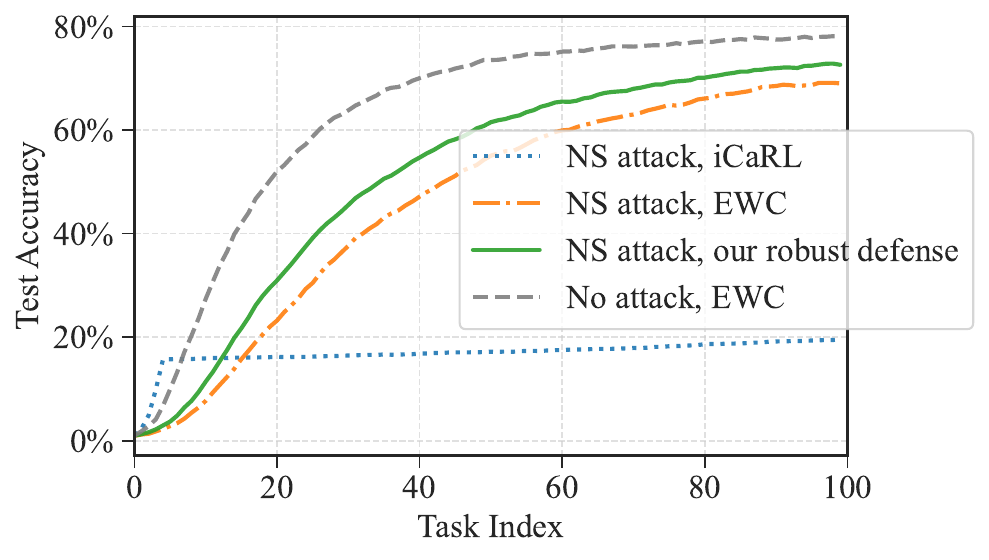}
\caption{Test accuracy during CL on CIFAR-100 under four scenarios: (i) the strategic attack targeting iCaRL; (ii) the strategic attack targeting EWC; (iii) the strategic attack countered by robust feature defense in \eqref{eq:app_defense}; and (iv) attack-free.}
\label{nonshifted_comparison_pre_acc}
\end{figure}
\section{Conclusion}
In this work, we establish a first theoretical foundation for data poisoning attacks and defenses in regularization-based CL. By modeling the adversary-defender interaction as an online zero-sum game, we propose a T2T defense against infrequent attacks and a robust feature defense against frequent non-shifted attacks. In future work, we will further investigate how to extend our theory to large-scale neural networks for more practical implementations, and broaden the study of data poisoning attacks to other CL frameworks, such as replay-based CL.

\section*{Acknowledgments}
This work was supported in part by the Guangdong Provincial Key Lab of Integrated Communication, Sensing and Computation for Ubiquitous Internet of Things (No. 2023B1212010007).

\section*{Impact Statement}
This paper presents work whose goal is to advance the field of machine learning. There are many potential societal consequences of our work, none of which we feel must be specifically highlighted here.

\bibliography{example_paper}
\bibliographystyle{icml2026}

\newpage
\appendix
\onecolumn
\section{Missing details from Section \ref{S2} and \eqref{eq:app_defense}}\label{appproposition2.5}

\begin{proposition}\label{complete_proposition2.5}
Define $Z_t$ as the matrix formed by concatenating all samples in the dataset, and define
$\mathcal{L}(w,Z_t)=\sum_{z\in D_t}\ell(w,z)/n_t$. Let $w_t^*$ be a minimizer of
the training loss for task $t$, such that
$w_t^* \in \arg\min_w \mathcal{L}(w, Z_t)$. At time $t$, the adversarial objective
defined in \eqref{eq:de_adversarial_obj} can be approximated as:
\begin{align}
    \max_{\hat{\eta}_t,\tilde{\eta}_t} \; &
    \mathbb{E}\biggl[
    \frac{1}{2}
    \|(\nabla_{ww}^2\mathcal{L}(w_t^*,Z_t)+H_t)^+
    \nabla_{wZ}^2\mathcal{L}(w_t^*,Z_t)
    \operatorname{vec}(\tilde{\eta}_t+\hat{\eta}_t)\|_{P_t}^2
    \biggr] \nonumber\\
    &-
    \mathbb{E}\biggl[\operatorname{vec}(\hat\eta_t)^\top
    \nabla^2_{wZ}\mathcal{L}(w_t^*,Z_t)^\top
    (\nabla_{ww}^2 \mathcal{L}(w_t^*,Z_t)+H_t)^+ \nonumber\\
    &\quad\cdot
    \bigg(\sum_{\tau=1}^t
    \nabla_{ww}^2 \mathcal{L}(w_\tau^*,Z_\tau)
    (\nabla_{ww}^2 \mathcal{L}(w_t^*,Z_t)+H_t)^+
    \Big(
    \nabla_{ww}^2\mathcal{L}(w_t^*,Z_t)(w_t^*-w_\tau^*)
    +H_t(w_{t-1}-w_\tau^*)
    \Big)
    \bigg)\biggr] \nonumber\\
    &+
    \mathbb{E}\left[O\!\left(\|\tilde{\eta}_t+\hat{\eta}_t\|_F^2+\|w_t-w_t^*\|_2^2\right)
    +O\!\left(\sum_{\tau=1}^t\|w_t-w_\tau^*\|_2^3\right)\right]
     \nonumber\\
    \text{s.t.} \; &
    \mathbb{E}[\|\tilde{\eta}_t+\hat{\eta}_t\|_F^2] \leq M,\label{eq:app_adversarial_obj2}
\end{align}
where $P_t = \sum_{s=1}^t \nabla_{ww}^2\mathcal{L}(w_s^*,Z_s)$ represents the cumulative Hessian across all observed tasks, and $H_t$ is the regularization term from \eqref{alg:generalized_l2}.
\end{proposition}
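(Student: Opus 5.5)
We approximate the cumulative excess risk $\mathcal R_t(w_t)$ by its empirical counterpart $\sum_{\tau=1}^t[\mathcal L(w_t,Z_\tau)-\mathcal L(w_\tau^*,Z_\tau)]$. The minimizer subtracted in \eqref{generalization_loss} does not depend on $\eta_t$, and the population/empirical gap is absorbed into the remainder. For each $\tau$ we Taylor-expand $\mathcal L(\cdot,Z_\tau)$ around its minimizer $w_\tau^*$. The gradient term vanishes because $\nabla_w\mathcal L(w_\tau^*,Z_\tau)=0$, so
\[
\mathcal L(w_t,Z_\tau)-\mathcal L(w_\tau^*,Z_\tau)=\tfrac12\|w_t-w_\tau^*\|^2_{\nabla^2_{ww}\mathcal L_\tau}+O(\|w_t-w_\tau^*\|_2^3).
\]
Summing over $\tau$ yields the cubic remainder $O(\sum_\tau\|w_t-w_\tau^*\|^3)$ in \eqref{eq:app_adversarial_obj2}. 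Everything then reduces to an accurate first-order expression for $w_t-w_\tau^*$ as a function of $\eta_t$.

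\textbf{Step 1: linearize the optimality condition.} The update \eqref{alg:generalized_l2} on poisoned data $Z_t+\eta_t$ gives
\[
\nabla_w\mathcal L(w_t,Z_t+\eta_t)+H_t(w_t-w_{t-1})=0 .
\]
We expand this jointly in $(w,Z)$ around $(w_t^*,Z_t)$, using $\nabla_w\mathcal L(w_t^*,Z_t)=0$:
\[
0=\nabla^2_{ww}\mathcal L_t(w_t-w_t^*)+\nabla^2_{wZ}\mathcal L_t\operatorname{vec}(\eta_t)+H_t(w_t-w_{t-1})+O(\|\eta_t\|_F^2+\|w_t-w_t^*\|^2).
\]
Writing $w_t-w_{t-1}=(w_t-w_\tau^*)-(w_{t-1}-w_\tau^*)$ and $w_t-w_t^*=(w_t-w_\tau^*)-(w_t^*-w_\tau^*)$, and applying $G_t:=(\nabla^2_{ww}\mathcal L_t+H_t)^+$, we obtain
\[
w_t-w_\tau^*=-G_t\nabla^2_{wZ}\mathcal L_t\operatorname{vec}(\eta_t)+G_t\big(\nabla^2_{ww}\mathcal L_t(w_t^*-w_\tau^*)+H_t(w_{t-1}-w_\tau^*)\big)+\text{rem}.
\]
Here rem is of the stated order. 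When the pseudo-inverse is not an inverse, the component in its kernel is independent of $\eta_t$ and can be dropped.

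\textbf{Step 2: substitute and sort terms by their dependence on $\eta_t$.} We plug this expression into $\frac12\sum_\tau\|w_t-w_\tau^*\|^2_{\nabla^2_{ww}\mathcal L_\tau}$ and expand the square into three pieces.

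The pure-$\eta$ quadratic is $\frac12\|G_t\nabla^2_{wZ}\mathcal L_t\operatorname{vec}(\eta_t)\|^2_{P_t}$, using $P_t=\sum_\tau\nabla^2_{ww}\mathcal L_\tau$.

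The cross term is linear in $\operatorname{vec}(\eta_t)$, namely
\[
-\operatorname{vec}(\eta_t)^\top\nabla^2_{wZ}\mathcal L_t^\top G_t\sum_\tau\nabla^2_{ww}\mathcal L_\tau G_t(\cdots).
\]

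The pure drift term does not depend on $\eta_t$, so it is irrelevant to the maximization over $\eta_t$. It is retained as the second line of \eqref{eq:app_defense}.

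Now decompose $\eta_t=\hat\eta_t+\tilde\eta_t$ and take the expectation conditional on $\mathcal I_t$. The vectors $w_{t-1}$ and $w_\tau^*$ for $\tau<t$ are $\mathcal I_t$-measurable. Since $\mathbb E[\tilde\eta_t\mid\mathcal I_t]=0$, the cross term involving $\tilde\eta_t$ vanishes, leaving only the $\hat\eta_t$ cross term. This reproduces \eqref{eq:app_adversarial_obj2}. Setting $\hat\eta_t=0$ then gives Proposition~\ref{proposition2.5}.

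\textbf{Main obstacle.} The delicate point is that $w_t^*$ (and, in the remainder, $w_t$) is defined from the dataset and is correlated with $\tilde\eta_t$ only through second-order effects. We must verify that all such dependencies are genuinely absorbed into $O(\|\eta_t\|_F^2+\|w_t-w_t^*\|^2)$ and the cubic terms. We would handle this by treating $w_t^*$ as the clean-data minimizer, which is $\mathcal I_t$-measurable, so that the conditional zero-mean argument applies exactly. Products of the remainders with first-order terms would be bounded by Cauchy–Schwarz, which places them at the stated orders.
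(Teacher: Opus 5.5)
Your proposal is correct and follows essentially the same route as the paper: you replace the excess risk by the empirical sum and expand each $\mathcal L(\cdot,Z_\tau)$ to second order around $w_\tau^*$. You then linearize the poisoned optimality condition, solve it with $(\nabla^2_{ww}\mathcal L_t+H_t)^+$, expand the resulting quadratic form, and eliminate the $\tilde\eta_t$ cross term using $\mathbb E[\tilde\eta_t\mid\mathcal I_t]=0$.

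One nuance concerns your remark on singular $\nabla^2_{ww}\mathcal L_t+H_t$. The paper does not claim that the kernel component can be dropped; it simply adopts, as a convention, the minimum-norm solution selected by the pseudoinverse. That is the safer phrasing. A nonzero $\mathcal I_t$-measurable kernel component would still pair with $\hat\eta_t$ through $\nabla^2_{ww}\mathcal L_\tau$ for $\tau\neq t$, so it cannot be discarded for free.
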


\begin{proof}[Proof of Proposition \ref{complete_proposition2.5}]
For notational simplicity, denote
\[
Q_s=\nabla_{ww}^2\mathcal{L}(w_s^*,Z_s),\quad
G_t=\nabla_{wZ}^2\mathcal{L}(w_t^*,Z_t),\quad
S_t=Q_t+H_t,
\]
and let \(\eta_t=\hat\eta_t+\tilde\eta_t\). Throughout the proof, \(A^+\) denotes the Moore--Penrose pseudoinverse; when \(S_t\) is singular, the local linearized update is understood as the minimum-norm solution selected by \(S_t^+\).

We first replace the excess risk \(\mathcal{R}_t(w_t)\) by its empirical counterpart. Hence the adversary's inner objective can be approximated by
\[
    \max_{\eta_t}\sum_{\tau=1}^t\mathcal{L}(w_t,Z_\tau)
    \quad \text{s.t. } \mathbb{E}[\|\eta_t\|_F^2]\leq M .
\]
By a second-order Taylor expansion around \(w_\tau^*\), we have
\begin{align}
    &\sum_{\tau=1}^t\mathcal{L}(w_t,Z_\tau)\nonumber\\
    =&\sum_{\tau=1}^t\bigg(
    \mathcal{L}(w_\tau^*,Z_\tau)
    +\nabla_w\mathcal{L}(w_\tau^*,Z_\tau)^\top(w_t-w_\tau^*) 
    +\frac{1}{2}(w_t-w_\tau^*)^\top Q_\tau(w_t-w_\tau^*)
    \bigg)
    +O\!\left(\sum_{\tau=1}^t\|w_t-w_\tau^*\|_2^3\right) \nonumber\\
    =&\sum_{\tau=1}^t\mathcal{L}(w_\tau^*,Z_\tau)
    +\frac{1}{2}\sum_{\tau=1}^t
    (w_t-w_\tau^*)^\top Q_\tau(w_t-w_\tau^*)
    +O\!\left(\sum_{\tau=1}^t\|w_t-w_\tau^*\|_2^3\right),
    \label{eqqqqq}
\end{align}
where the last equality follows from the first-order optimality condition of \(w_\tau^*\).

When task \(t\) is poisoned by \(\eta_t\), the update rule in \eqref{alg:generalized_l2} gives
\[
   \nabla_w \mathcal{L}(w_t,Z_t+\eta_t)+H_t(w_t-w_{t-1})=0 .
\]
Expanding the first term around \((w_t^*,Z_t)\), and using
\(\nabla_w\mathcal{L}(w_t^*,Z_t)=0\), yields
\begin{align}
    Q_t(w_t-w_t^*)+G_t\operatorname{vec}(\eta_t)
    +H_t(w_t-w_{t-1})
    +O(\|w_t-w_t^*\|_2^2+\|\eta_t\|_F^2)
    =0 .\label{w_tw_t-11}
\end{align}
Therefore,
\begin{align}
    w_t-w_\tau^*
    =&S_t^+\Big(
    Q_t(w_t^*-w_\tau^*)+H_t(w_{t-1}-w_\tau^*)
    -G_t\operatorname{vec}(\eta_t)
    \Big)+O(\|w_t-w_t^*\|_2^2+\|\eta_t\|_F^2).
    \label{w_tw_t-1}
\end{align}

Substituting \eqref{w_tw_t-1} into \eqref{eqqqqq} and discarding terms independent of \(\eta_t\), we obtain
\begin{align}
    &\frac{1}{2}
    \left\|S_t^+G_t\operatorname{vec}(\eta_t)\right\|_{P_t}^2 -\operatorname{vec}(\eta_t)^\top G_t^\top S_t^+
    \bigg(
    \sum_{\tau=1}^t
    Q_\tau S_t^+
    \Big(
    Q_t(w_t^*-w_\tau^*)+H_t(w_{t-1}-w_\tau^*)
    \Big)
    \bigg) \nonumber\\
    &
    +O(\|\eta_t\|_F^2+\|w_t-w_t^*\|_2^2)
    +O\!\left(\sum_{\tau=1}^t\|w_t-w_\tau^*\|_2^3\right),
    \label{hateta_t}
\end{align}
where \(P_t=\sum_{\tau=1}^t Q_\tau\). 

Finally, decompose \(\eta_t=\hat\eta_t+\tilde\eta_t\). Since the non-shifted perturbation satisfies
\[
\mathbb{E}\left[\operatorname{vec}(\tilde\eta_t)\mid Z_{1:t},w_{t-1}\right]=0,
\]
the linear term involving \(\tilde\eta_t\) vanishes after taking expectation. Keeping the attack-dependent terms gives the desired approximation in \eqref{eq:app_adversarial_obj2}. 
\end{proof}

\textit{Derivation of defense optimization problem \eqref{eq:app_defense}}:
By substituting the update rule \eqref{w_tw_t-1} with only $\tilde{\eta}_t$, we have
\begin{align*}
    &\mathbb{E}\left[\sum_{\tau=1}^t\mathcal{L}(w_t,Z_\tau)\right]\\
    =&\frac{1}{2}\sum_{\tau=1}^t
    \mathbb{E}\left[
    \left\|S_t^+\Big(
    Q_t(w_t^*-w_\tau^*)+H_t(w_{t-1}-w_\tau^*)
    -G_t\operatorname{vec}(\tilde\eta_t)
    \Big)+O(\|w_t-w_t^*\|_2^2+\|\tilde\eta_t\|_F^2)\right\|_{Q_\tau}^2
    \right]\\
    &
    +\mathbb{E}\left[O\!\left(\sum_{\tau=1}^t\|w_t-w_\tau^*\|_2^3\right)+\sum_{\tau=1}^t\mathcal{L}(w_\tau^*,Z_\tau)\right]
    \\
    =&\mathbb{E}\left[\frac{1}{2}\sum_{\tau=1}^t
    \left\|
    Q_t(w_t^*-w_\tau^*)+H_t(w_{t-1}-w_\tau^*)
    \right\|_{(S_t^+)^\top Q_\tau S_t^+}^2+\frac{1}{2}\sum_{\tau=1}^t
    \left\|S_t^+G_t\operatorname{vec}(\tilde\eta_t)\right\|_{(S_t^+)^\top Q_\tau S_t^+}^2\right]\\
    &+\mathbb{E}\left[O(\|w_t-w_t^*\|_2^2+\|\tilde\eta_t\|_F^2)+O\!\left(\sum_{\tau=1}^t\|w_t-w_\tau^*\|_2^3\right)+\sum_{\tau=1}^t\mathcal{L}(w_\tau^*,Z_\tau)\right]
    .
\end{align*}
After taking expectation, the cross term vanishes because
\(\mathbb{E}[\operatorname{vec}(\tilde\eta_t)\mid Z_{1:t},w_{t-1}]=0\).
Discarding the last term, which is independent of $H_t$ and $\eta_t$, gives~\eqref{eq:app_defense}.

\section{Proof of Theorem \ref{TD1}}\label{appTD1}
\begin{proof}[Proof of Theorem \ref{TD1}]
It suffices to construct a one-dimensional continual linear regression instance.
Let \(p_1=p_2=n_t=1\), \(x_t=1\), and
\[
    y_t=w^*+\varepsilon_t ,
\]
where \(\mathbb E[\varepsilon_t]=0\) and
\(\mathbb E[\varepsilon_t^2]=\sigma^2\). Choose the squared loss
\(
    \ell(w,z_t)=(w-y_t)^2 .
\)
In this scalar case, the regularization-based update in \eqref{alg:generalized_l2}
takes the form
\[
    w_t
    =
    \frac{2}{2+H_t}(y_t+\eta_t)
    +
    \frac{H_t}{2+H_t}w_{t-1}.
\]
Absorbing the constant factor into the regularization parameter, we equivalently write
\[
    w_t
    =
    \frac{1}{1+H_t}(y_t+\eta_t)
    +
    \frac{H_t}{1+H_t}w_{t-1}.
\]
Define
\[
    c_t:=\frac{H_t}{1+H_t}\in[0,1].
\]
Then
\begin{align}
        w_t-w^*
    =
    c_t(w_{t-1}-w^*)+(1-c_t)(\varepsilon_t+\eta_t).
    \label{eq:scalar_recursion}
\end{align}

We first consider frequent shifted attacks. Let the adversary poison every task,
which is a frequent attack, and set \(\eta_t=\Delta\) for a fixed nonzero constant
\(\Delta\). This attack has bounded per-task budget \(M=\Delta^2\). Taking
expectation in \eqref{eq:scalar_recursion} and denoting
\(m_t:=\mathbb E[w_t-w^*]\), gives
\begin{align*}
    &m_t=c_t m_{t-1}+(1-c_t)\Delta \Rightarrow\\
    &m_t-\Delta=c_t (m_{t-1}-\Delta)\Rightarrow\\
    &m_T=\Delta+\prod_{t=1}^Tc_t(w_0-w^*-\Delta).
\end{align*}
If \(\prod_{t=1}^T c_t\not\to0\), then even in the attack-free noiseless case
the initial bias $w_0-w^*$ cannot be eliminated, so convergence cannot be guaranteed.
If \(\prod_{t=1}^T c_t\to0\), then under the shifted attack
\[
    m_T\to \Delta ,
\]
and therefore, by Jensen's inequality,
\[
    \mathbb E[(w_T-w^*)^2]\geq m_T^2\to \Delta^2>0 .
\]
Thus no regularization-based method can guarantee convergence under frequent
shifted attacks.

We next consider frequent, unbounded, non-shifted attacks. Again let the
adversary poison every task, but now choose independent zero-mean perturbations
\(\tilde\eta_t\) with
\[
    \mathbb E[\tilde\eta_t]=0,
    \qquad
    \mathbb E[\tilde\eta_t^2]=M_T,
\]
where \(M_T=\Omega(T)\). This is a non-shifted attack with unbounded per-task
budget. Unrolling \eqref{eq:scalar_recursion} gives
\[
    w_T-w^*
    =
    \left(\prod_{r=1}^T c_r\right)(w_0-w^*)
    +
    \sum_{s=1}^T
    (1-c_s)\left(\prod_{r=s+1}^T c_r\right)(\varepsilon_s+\tilde\eta_s).
\]
Let
\[
    \beta_{s,T}:=
    (1-c_s)\prod_{r=s+1}^T c_r .
\]
The weights satisfy the telescoping identity
\[
    \sum_{s=1}^T \beta_{s,T}
    =
    1-\prod_{r=1}^T c_r .
\]
If \(\prod_{r=1}^T c_r\not\to0\), convergence already fails because the initial
bias $w_0-w^*$ remains. Otherwise,
\[
    \sum_{s=1}^T \beta_{s,T}\to1 .
\]
By Cauchy's inequality,
\[
    \sum_{s=1}^T \beta_{s,T}^2
    \ge
    \frac{(1-\prod_{r=1}^T c_r)^2}{T}.
\]
Therefore, the contribution of the adversarial non-shifted noise to the final
risk satisfies
\[
    \mathbb E[(w_T-w^*)^2]\geq \sum_{s=1}^T \beta_{s,T}^2(\sigma^2+M_T)
    \ge
    M_T\sum_{s=1}^T\beta_{s,T}^2
    \ge
    M_T\frac{(1-\prod_{r=1}^T c_r)^2}{T}.
\]
Since \(M_T=\Omega(T)\) and \(\prod_{r=1}^T c_r\to0\), the right-hand side is
bounded away from zero. Hence convergence cannot be guaranteed under frequent,
unbounded, non-shifted attacks.
\end{proof}

\section{Missing details from Section \ref{S4}}\label{AS4}
\subsection{Proof of Lemma \ref{iso}}\label{AAiso}

We first present the complete version of Lemma \ref{iso}

\begin{lemma}\label{Aiso}
Define for any $s=1,\dots,t$:
\[
Q_s:=\nabla_{ww}^2\mathcal{L}(w_s^*,Z_s),\;
G_s:=\nabla_{wZ}^2\mathcal{L}(w_s^*,Z_s),\;
S_s:=Q_s+H_s .
\]
The detection score computed from \eqref{dt} satisfies
\begin{align*}
D_t^1(w_t - w_{t-1}) - D_t^2(w_{t-1} - w_{t-2})
=&E_t^1(w_t^*-w_{t-1}^*)
+E_t^2\operatorname{vec}(\eta_{t-1})
+E_t^3\operatorname{vec}(\eta_t)\\
&+O(\|w_{t-1}-w_{t-1}^*\|_F^2)
+O(\|w_t-w_t^*\|_F^2)
+O(\|\eta_t\|_F^2+\|\eta_{t-1}\|_F^2),
\end{align*}
where
\begin{align}
E_t^1=&D_t^1A_t+D_t^1S_t^+Q_tS_{t-1}^+Q_{t-1},\nonumber\\
E_t^2=&D_t^1S_t^+Q_tS_{t-1}^+G_{t-1}
      +D_t^2S_{t-1}^+G_{t-1},\nonumber\\
E_t^3=&-D_t^1S_t^+G_t,\nonumber\\
D_t^1 = &(I-B_t)(A_t^+-(A_t^+A_t-B_t^+B_t)(A_t^\top A_t+B_t^\top B_t)^+A_t^\top),\nonumber\\
D_t^2 = &(I-B_t)(B_t^++(A_t^+A_t-B_t^+B_t)(A_t^\top A_t+B_t^\top B_t)^+B_t^\top),\nonumber\\
    A_t=&S_t^+Q_tS_{t-1}^+H_{t-1},\nonumber\\
    B_t=&S_{t-1}^+Q_{t-1}.
\label{eem}
\end{align}
\end{lemma}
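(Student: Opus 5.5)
The argument rests on the same first-order linearization of the update \eqref{alg:generalized_l2} used in the proof of Proposition~\ref{complete_proposition2.5}. From \eqref{w_tw_t-11}, for any task $s$ (with $\eta_s=\mathbf 0$ if benign),
\begin{align*}
w_s-w_s^* = S_s^+\big(H_s(w_{s-1}-w_s^*)-G_s\operatorname{vec}(\eta_s)\big)+O(\|w_s-w_s^*\|^2+\|\eta_s\|^2).
\end{align*}
It is convenient to rewrite this as an expression for the increment $w_s-w_{s-1}$. Since $S_s^+H_s = I-S_s^+Q_s$ on the range of $S_s$, we get
\begin{align*}
w_s-w_{s-1}=S_s^+Q_s(w_s^*-w_{s-1})-S_s^+G_s\operatorname{vec}(\eta_s)+O(\cdot).
\end{align*}
Apply this at $s=t-1$ and $s=t$. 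At $s=t-1$ it gives $w_{t-1}-w_{t-2}=B_t(w_{t-1}^*-w_{t-2})-S_{t-1}^+G_{t-1}\operatorname{vec}(\eta_{t-1})+O(\cdot)$. At $s=t$, write $w_t^*-w_{t-1}=(w_t^*-w_{t-1}^*)+(w_{t-1}^*-w_{t-1})$, and replace $w_{t-1}^*-w_{t-1}$ by $-S_{t-1}^+H_{t-1}(w_{t-2}-w_{t-1}^*)+S_{t-1}^+G_{t-1}\operatorname{vec}(\eta_{t-1})$. This expresses $w_t-w_{t-1}$ as
\begin{align*}
A_t(w_{t-1}^*-w_{t-2}) + S_t^+Q_t(w_t^*-w_{t-1}^*) + S_t^+Q_tS_{t-1}^+G_{t-1}\operatorname{vec}(\eta_{t-1}) - S_t^+G_t\operatorname{vec}(\eta_t),
\end{align*}
up to the quadratic remainders. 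So both increments are affine in the single unknown historical vector $v:=w_{t-1}^*-w_{t-2}$, with coefficient matrices $A_t$ and $B_t$ respectively.

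Next form $D_t^1(w_t-w_{t-1})-D_t^2(w_{t-1}-w_{t-2})$. The $v$-terms combine to $(D_t^1A_t-D_t^2B_t)v$. The main algebraic step is to verify that
\begin{align*}
D_t^1A_t-D_t^2B_t=(I-B_t)\big(A_t^+A_t-B_t^+B_t-C_t(A_t^\top A_t+B_t^\top B_t)\big).
\end{align*}
Using $C_t=(A_t^+A_t-B_t^+B_t)(A_t^\top A_t+B_t^\top B_t)^+$, the bracket reduces to $(A_t^+A_t-B_t^+B_t)(I-\Pi)$, where $\Pi$ is the projector onto $\operatorname{range}(A_t^\top A_t+B_t^\top B_t)=\operatorname{range}(A_t^\top)+\operatorname{range}(B_t^\top)$. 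Both $A_t^+A_t$ and $B_t^+B_t$ are projectors onto subspaces contained in $\operatorname{range}(\Pi)$, so each of them annihilates $I-\Pi$. Hence the bracket vanishes and the dependence on $w_{t-2}$ is eliminated exactly at linear order.

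It remains to collect the surviving terms. If the paper's $E_t^1$ is not exactly the coefficient produced above, I would absorb the discrepancy by rewriting $v=(w_{t-1}^*-w_{t-1})+(w_{t-1}-w_{t-2})$; this is where that reconciliation is handled. The coefficient of $w_t^*-w_{t-1}^*$ is $D_t^1S_t^+Q_t$, which should match $E_t^1$ after this rewriting. The coefficient of $\operatorname{vec}(\eta_{t-1})$ is $D_t^1S_t^+Q_tS_{t-1}^+G_{t-1}+D_t^2S_{t-1}^+G_{t-1}$, where the sign of the second term comes from the minus sign in front of $D_t^2$. The coefficient of $\operatorname{vec}(\eta_t)$ is $-D_t^1S_t^+G_t$. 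Finally, multiplying by the bounded matrices $D_t^i,S_s^+$ preserves the orders of the Taylor remainders, which gives $\mathcal E_t$.

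The step I expect to be the main obstacle is the projector identity for $C_t$, together with the bookkeeping of pseudoinverses when $S_s$ is singular: $S_s^+H_s=I-S_s^+Q_s$ holds only on $\operatorname{range}(S_s)$. I would first prove everything assuming $S_s$ is invertible, and then argue that the minimum-norm convention keeps all vectors in the relevant ranges.
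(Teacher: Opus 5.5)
Your derivation is sound, and it reaches the statement through slightly different bookkeeping from the paper's.

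\textbf{How the routes differ.} The paper expands $w_{t-1}-w_t^*$ via \eqref{w_tw_t-1} with $\tau=t$. As a result, the first increment carries $-A_t(w_{t-2}-w_t^*)$, while the second carries $-B_t(w_{t-2}-w_{t-1}^*)$. After imposing $D_t^1A_t=D_t^2B_t$, the mismatch between these two anchors leaves a residual $D_t^1A_t(w_t^*-w_{t-1}^*)$. That residual is why $E_t^1$ has two terms. You anchor both increments at the same $v=w_{t-1}^*-w_{t-2}$, so the cancellation is exact and the surviving coefficient is the compact $D_t^1S_t^+Q_t$.

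You also actually verify $D_t^1A_t=D_t^2B_t$ via $(A_t^+A_t-B_t^+B_t)(I-\Pi)=0$. The paper only asserts that its $\bar D_t^1,\bar D_t^2$ are the closed-form solution of the constrained Frobenius problem. Your projector argument is correct and fills that step. Your coefficients on $\operatorname{vec}(\eta_{t-1})$ and $\operatorname{vec}(\eta_t)$ coincide with $E_t^2$ and $E_t^3$.

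\textbf{The step that does not work.} The reconciliation with the stated $E_t^1$ fails as written. Rewriting $v$ cannot change anything, because you have just shown that the coefficient of $v$ is zero. So, as it stands, you never show that $D_t^1S_t^+Q_t$ is the $E_t^1$ of the statement. What is needed is the one-line identity
\[
E_t^1=D_t^1S_t^+Q_tS_{t-1}^+(H_{t-1}+Q_{t-1})=D_t^1S_t^+Q_tS_{t-1}^+S_{t-1},
\]
which equals $D_t^1S_t^+Q_t$ when $S_{t-1}$ is invertible. This is the same standing assumption under which you use $S_s^+H_s=I-S_s^+Q_s$.

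In the singular case, the two coefficients differ by $D_t^1S_t^+Q_t(I-S_{t-1}^+S_{t-1})$. This term annihilates $w_t^*-w_{t-1}^*$ whenever that vector lies in the range of $S_{t-1}$. That is covered by the range bookkeeping you already planned for pseudoinverses.
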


\begin{proof}[Proof of Lemma \ref{Aiso}]
By the CL iteration derived in (\ref{w_tw_t-11}) and~\eqref{w_tw_t-1}, we have:
\begin{align*}
    w_t-w_{t-1}
= S_t^+(Q_t(w_t^*-w_{t-1})-G_t\operatorname{vec}(\eta_t))
+O(\|w_t-w_t^*\|_F^2)+O(\|\eta_t\|_F^2).
\end{align*}
and
\begin{align*}
    w_{t-1}-w_t^*
= &S_{t-1}^+(Q_{t-1}(w_{t-1}^*-w_t^*)
+H_{t-1}(w_{t-2}-w_t^*)
-G_{t-1}\operatorname{vec}(\eta_{t-1}))
\\
&+O(\|w_{t-1}-w_{t-1}^*\|_F^2)
+O(\|\eta_{t-1}\|_F^2).
\end{align*}
Combining two equations above, we have
\begin{align}
     w_t-w_{t-1}
= &-A_t(w_{t-2}-w_t^*)
+S_t^+Q_tS_{t-1}^+Q_{t-1}(w_t^*-w_{t-1}^*)
+S_t^+Q_tS_{t-1}^+G_{t-1}\operatorname{vec}(\eta_{t-1})
-S_t^+G_t\operatorname{vec}(\eta_t)
\nonumber\\
&+O(\|w_t-w_t^*\|_F^2+\|w_{t-1}-w_{t-1}^*\|_F^2)
+O(\|\eta_t\|_F^2+\|\eta_{t-1}\|_F^2)\\
     w_{t-1}-w_{t-2}= &-B_t(w_{t-2}-w_{t-1}^*)-S_{t-1}^+G_{t-1}\operatorname{vec}(\eta_{t-1})+O(\|w_{t-1}-w_{t-1}^*\|_F^2)+O(\|\eta_{t-1}\|^2_F)\label{222}
\end{align}

The $w_t-w_{t-1}$ and $w_{t-1}-w_{t-2}$ include \(A_tw_{t-2}\) and \(B_tw_{t-2}\) respectively. Our goal is to find matrices \(D_t^1\) and \(D_t^2\) such that $D_t^1 A_t = D_t^2 B_t$, thereby removing the component involving \(w_{t-2}\). To determine \(D_t^1\) and \(D_t^2\) in the common subspace of \(A_t\) and \(B_t\) while retaining as much information as possible regarding the deviations in tasks \(t-1\) and \(t\), we formulate
\[
\min_{\bar D_t^1, \bar D_t^2} \|\bar D_t^1 - A_t^+\|_F^2 + \|\bar D_t^2 - B_t^+\|_F^2, \; \text{subject to } \bar D_t^1 A_t = \bar D_t^2 B_t,
\]
where \(\|\cdot\|_F\) denotes the Frobenius norm. The closed-form solutions to this problem are
\[
\bar D_t^1=A_t^+-(A_t^+A_t-B_t^+B_t)(A_t^\top A_t+B_t^\top B_t)^+A_t^\top,
\]
\[
\bar D_t^2=B_t^++(A_t^+A_t-B_t^+B_t)(A_t^\top A_t+B_t^\top B_t)^+B_t^\top.
\]
We then define the normalized matrices
\[
D_t^1=(I-B_t)\bar D_t^1,\qquad
D_t^2=(I-B_t)\bar D_t^2 .
\]

Substituting \(D_t^1\) and \(D_t^2\) into
\(D_t^1(w_t-w_{t-1})-D_t^2(w_{t-1}-w_{t-2})\), and using
\(D_t^1A_t=D_t^2B_t\), we obtain the claimed expression.

\end{proof}

\subsection{Proof of Lemma \ref{L4} and Corollary \ref{CoL4}}\label{appendix3}
We first show the following Corollary of Lemma \ref{L4}.
\begin{corollary}\label{CoL4}
If all coordinates of the noise components \(\varepsilon_t^{i,j}\) are sub-Gaussian that satisfy
$$\sup_{p\ge 1} p^{-1/2}(\mathbb{E}(|\varepsilon_t^{i,j}|^p)^{1/p})\le\kappa,$$
the detection score $d_t$ satisfies $d_t \le \theta_t$ for all $t \in [T]$ with probability at least $1-\epsilon$ with the tightened threshold:
\begin{align*}
\theta_t =\sqrt{\sigma^2 \mathrm{tr}\left( (E^2_t)^\top E^2_t \right) + \sigma^2\mathrm{tr}\left( (E^3_t)^\top E^3_t \right)+\max\left\{\kappa^2\|R_t^\top R_t\|_F\sqrt{\frac{1}{c}\log \frac{2T}{\epsilon}},{\frac{\kappa^2\|R_t^\top R_t\|_2}{c}\log \frac{2T}{\epsilon}}\right\}},
\end{align*}
where $R_t=[E_t^2\;\;E_t^3]$, and $c>0$ is a universal constant.
\end{corollary}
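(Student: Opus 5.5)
We need to prove Corollary \ref{CoL4}: under sub-Gaussian noise, $d_t\le\theta_t$ for all $t$ with probability at least $1-\epsilon$.

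\textbf{Reduction to a Gaussian-like quadratic form.} In the linear model of Assumption~\ref{assum1} the loss is quadratic, so every remainder term in Lemma~\ref{Aiso} vanishes and the decomposition is exact. For benign tasks $t-1$ and $t$ we take the label perturbations in Lemma~\ref{Aiso} to be the inherent noises $\varepsilon_{t-1},\varepsilon_t$, measured relative to the common minimizer $w^*$, so that $w_t^*-w_{t-1}^*=0$. (This is the same reading used for Lemma~\ref{L4}, where $\theta_t$ involves only $E_t^2,E_t^3$.) With $\xi_t:=(\operatorname{vec}(\varepsilon_{t-1})^\top,\operatorname{vec}(\varepsilon_t)^\top)^\top$ and $R_t=[E_t^2\;\;E_t^3]$, this gives
\[
d_t^2=\|R_t\xi_t\|_2^2=\xi_t^\top R_t^\top R_t\,\xi_t .
\]

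\textbf{Concentration of the quadratic form.} The coordinates of $\xi_t$ are zero-mean with variance $\sigma^2$. The rows $\varepsilon_t^i$ are independent across samples and across tasks. Within a row, the coordinates have covariance $\sigma^2 I$. To use Hanson--Wright I will assume the coordinates $\varepsilon_t^{i,j}$ are independent; the corollary's coordinatewise $\psi_2$ bound $\kappa$ implicitly requires this. Then
\[
\mathbb E[d_t^2]=\sigma^2\operatorname{tr}(R_t^\top R_t)=\sigma^2\operatorname{tr}((E_t^2)^\top E_t^2)+\sigma^2\operatorname{tr}((E_t^3)^\top E_t^3).
\]
The Hanson--Wright inequality applies with the matrix $R_t^\top R_t$, which is independent of the noise because $E_t^i$ depend only on $X_s$ and $H_s$. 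It yields, for all $u>0$,
\[
\Pr\bigl(d_t^2-\mathbb E d_t^2>u\bigr)\le 2\exp\Bigl(-c\min\Bigl\{\tfrac{u^2}{\kappa^4\|R_t^\top R_t\|_F^2},\tfrac{u}{\kappa^2\|R_t^\top R_t\|_2}\Bigr\}\Bigr).
\]
Conditioning on the features handles the fact that the noise is only conditionally centered given $x_t^i$.

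\textbf{Choosing $u$ and the union bound.} Set the right-hand side equal to $\epsilon/T$. It suffices to take
\[
u=\max\Bigl\{\kappa^2\|R_t^\top R_t\|_F\sqrt{\tfrac1c\log\tfrac{2T}{\epsilon}},\ \tfrac{\kappa^2\|R_t^\top R_t\|_2}{c}\log\tfrac{2T}{\epsilon}\Bigr\}.
\]
This choice makes the minimum in the exponent at least $\tfrac1c\log(2T/\epsilon)$ in both regimes. Hence $d_t^2\le\theta_t^2$ fails with probability at most $\epsilon/T$. A union bound over $t\in[T]$ finishes the proof. The same skeleton with Markov's inequality, $\Pr(d_t^2>T\,\mathbb E d_t^2/\epsilon)\le\epsilon/T$, recovers Lemma~\ref{L4}.

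\textbf{Main obstacle.} The delicate step is justifying that $d_t$ is exactly the linear form $R_t\xi_t$. This requires verifying that the $w_{t-2}$ cancellation $D_t^1A_t=D_t^2B_t$ holds exactly in the quadratic case. It also requires interpreting the benign term $E_t^1(w_t^*-w_{t-1}^*)$ consistently, by attributing the deviation of the per-task empirical minimizers from $w^*$ to the noise terms. I would make this attribution explicit by rewriting the update around $w^*$ rather than around $w_s^*$.
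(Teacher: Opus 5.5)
Your proposal is correct and follows the paper's proof. The paper likewise rewrites the benign recursion around $w^*$, so the cancellation $D_t^1A_t=D_t^2B_t$ leaves exactly $d_t^2=\xi_t^\top R_t^\top R_t\xi_t$; it then applies Hanson--Wright with failure probability $\epsilon/T$ per task and takes a union bound. Your explicit assumption that the coordinates $\varepsilon_t^{i,j}$ are independent, conditionally on the features, is needed for Hanson--Wright, and the paper uses it only implicitly.
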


\begin{proof}
Under Assumption~\ref{assum1}, we vectorize the parameter and responses as
\[
    \mathbf w_t=\operatorname{vec}(w_t),\qquad
    \mathbf w^*=\operatorname{vec}(w^*),\qquad
    \mathbf y_t=\operatorname{vec}(Y_t),\qquad
    \mathbf e_t=\operatorname{vec}(\varepsilon_t).
\]
Let
\[
    \mathbf X_t=I_{p_2}\otimes X_t,\qquad
    Q_t=\frac{\mathbf X_t^\top \mathbf X_t}{n_t},\qquad
    G_t=-\frac{\mathbf X_t^\top}{n_t},\qquad
    S_t=Q_t+H_t .
\]
Then the linear model can be written as
\[
    \mathbf y_t=\mathbf X_t\mathbf w^*+\mathbf e_t.
\]
For benign task $t$, the regularization-based update satisfies
\[
    \mathbf w_t
    =
    \arg\min_{\mathbf w}
    \left\{
    \frac{1}{2n_t}\|\mathbf X_t\mathbf w-\mathbf y_t\|_2^2
    +\frac{1}{2}\|\mathbf w-\mathbf w_{t-1}\|_{H_t}^2
    \right\}.
\]
By the first-order optimality condition,
\[
    Q_t(\mathbf w_t-\mathbf w^*)+G_t\mathbf e_t
    +H_t(\mathbf w_t-\mathbf w_{t-1})=0.
\]
Then, we obtain the recursion
\begin{align}
    \mathbf w_t-\mathbf w^*
    =
    S_t^+\left(H_t(\mathbf w_{t-1}-\mathbf w^*)-G_t\mathbf e_t\right).\label{recc}
\end{align}
Combining $\mathbf w_t-\mathbf w^*$ and $\mathbf w_{t-1}-\mathbf w^*$ gives,
\[
    \mathbf w_t-\mathbf w_{t-1}
    =
    -S_t^+Q_t(\mathbf w_{t-1}-\mathbf w^*)-S_t^+G_t\mathbf e_t .
\]
Applying the same relation to task $t-1$ gives
\[
    \mathbf w_{t-1}-\mathbf w^*
    =
    S_{t-1}^+\left(H_{t-1}(\mathbf w_{t-2}-\mathbf w^*)-G_{t-1}\mathbf e_{t-1}\right),
\]
and hence
\[
\begin{aligned}
    \mathbf w_t-\mathbf w_{t-1}
    =
    &-S_t^+Q_tS_{t-1}^+H_{t-1}(\mathbf w_{t-2}-\mathbf w^*)
   +S_t^+Q_tS_{t-1}^+G_{t-1}\mathbf e_{t-1} -S_t^+G_t\mathbf e_t .
\end{aligned}
\]
Also,
\[
    \mathbf w_{t-1}-\mathbf w_{t-2}
    =
    -S_{t-1}^+Q_{t-1}(\mathbf w_{t-2}-\mathbf w^*)
    -S_{t-1}^+G_{t-1}\mathbf e_{t-1}.
\]
Recall that
\[
    A_t=S_t^+Q_tS_{t-1}^+H_{t-1},
    \qquad
    B_t=S_{t-1}^+Q_{t-1},
\]
and that the construction of $D_t^1$ and $D_t^2$ ensures
\[
    D_t^1A_t=D_t^2B_t .
\]
Therefore, the historical term involving $\mathbf w_{t-2}-\mathbf w^*$ cancels in
\[
    D_t^1(\mathbf w_t-\mathbf w_{t-1})
    -
    D_t^2(\mathbf w_{t-1}-\mathbf w_{t-2}),
\]
which yields
\[
    D_t^1(\mathbf w_t-\mathbf w_{t-1})
    -
    D_t^2(\mathbf w_{t-1}-\mathbf w_{t-2})
    =
    E_t^2\mathbf e_{t-1}+E_t^3\mathbf e_t,
\]
where
\[
    E_t^2
    =
    D_t^1S_t^+Q_tS_{t-1}^+G_{t-1}
    +D_t^2S_{t-1}^+G_{t-1},
    \qquad
    E_t^3
    =
    -D_t^1S_t^+G_t .
\]
These are the linear-regression specializations of the matrices defined in Appendix~\ref{AAiso}.

Thus, when tasks $t-1$ and $t$ are benign,
\[
    d_t^2
    =
    \left\|E_t^2\mathbf e_{t-1}+E_t^3\mathbf e_t\right\|_2^2.
\]
Conditioned on the feature matrices, $\mathbf e_{t-1}$ and $\mathbf e_t$ are independent, zero-mean, and satisfy
\[
    \mathbb E[\mathbf e_s\mathbf e_s^\top\mid X_s]
    =
    \sigma^2 I .
\]
Therefore, the cross term vanishes and
\[
\begin{aligned}
    \mathbb E[d_t^2\mid X_{t-1},X_t]
    &=
    \sigma^2\operatorname{tr}\left((E_t^2)^\top E_t^2\right)
    +
    \sigma^2\operatorname{tr}\left((E_t^3)^\top E_t^3\right).
\end{aligned}
\]
By Markov's inequality,
\[
    \mathbb P\left(
    d_t^2
    >
    \frac{T}{\epsilon}
    \sigma^2
    \left[
    \operatorname{tr}\left((E_t^2)^\top E_t^2\right)
    +
    \operatorname{tr}\left((E_t^3)^\top E_t^3\right)
    \right]
    \right)
    \leq \frac{\epsilon}{T}.
\]
Taking a union bound over all $t\in[T]$ for which $d_t$ is computed gives, with probability at least $1-\epsilon$,
\[
    d_t
    \leq
    \sqrt{
    \frac{\sigma^2T}{\epsilon}
    \left[
    \operatorname{tr}\left((E_t^2)^\top E_t^2\right)
    +
    \operatorname{tr}\left((E_t^3)^\top E_t^3\right)
    \right]
    }
\]
simultaneously for all such $t$. This proves Lemma \ref{L4}.

Let
\[
\xi_t=\begin{bmatrix}\mathbf e_{t-1}\\ \mathbf e_t\end{bmatrix},
\qquad
R_t=[E_t^2\;\;E_t^3].
\]
Then \(d_t^2=\|R_t\xi_t\|^2=\xi_t^\top R_t^\top R_t\xi_t\).
When the noise $\bf e_t$ and $\bf e_{t-1}$ are sub-Gaussian, we have the following Hanson-Wright inequality:
\[
\mathbb P\left(
d_t^2\le \mathbb{E}[d_t^2]+\max\left\{\sqrt{\frac{\kappa^4\|R_t^\top R_t\|_F^2}{c}\log \frac{2T}{\epsilon}},{\frac{\kappa^2\|R_t^\top R_t\|_2}{c}\log \frac{2T}{\epsilon}}\right\}
\right)
\ge 1-\frac{\epsilon}{T},
\]
where $c$ is a universal constant. A union bound over \(t\in[T]\) gives Corollary \ref{CoL4}.
\end{proof}

\subsection{Proof of Theorem \ref{T2}}\label{aT4.4}

We first separate detected and undetected attacks. If an attack is detected,
Alg.~\ref{M1} rolls the model back and discards the poisoned update, so its
adversarial bias does not enter the subsequent recursion. Since the attack is
infrequent, discarding these finitely many updates does not change the
asymptotic convergence rate. Hence the only attacks contributing to the non-benign term in the final recursion
are undetected attacks. We therefore focus on such attacks without loss of
generality.

We show that any attack that remains undetected under the online
information structure of CL must satisfy
\begin{align}
\left\|
\left(
\frac{\mathbf{X}_{t}^\top \mathbf{X}_{t}}{n_{t}}
\right)^+
\frac{\mathbf{X}_{t}^\top}{n_{t}}
(\mathbf{n}_{t}+\mathbf{e}_{t})
\right\|_2^2
\leq
\frac{\sigma^2 T}{\epsilon}
\operatorname{tr}
\left(
\left(
{\mathbf{X}_{t}^\top \mathbf{X}_{t}}
\right)^+
\right).
\label{adversarystrategy}
\end{align}

To see this, consider an adversarial perturbation injected at task \(t-1\).
By extending the derivation in Appendix~\ref{appendix3}, its contribution to the next verification score \(d_t\)
is through the term \(E_t^2(\mathbf n_{t-1}+\mathbf e_{t-1})\). At the time of
choosing \(\mathbf n_{t-1}\), the adversary has observed the history and the
current task, but not the future feature matrix \(\mathbf X_t\). Therefore, an
attack that can remain undetected for all admissible future task realizations
must in particular remain undetected under the admissible realization whose row
space gives the largest effective response in the T2T projection.

For this worst-aligned realization, Lemma~\ref{eeeem} shows that,
\(A_t^+A_t=B_t^+B_t\) and the oblique projection factor
\((I-B_t)B_t^+B_t(I-B_t)^{-1}\) acts as the identity on the relevant row space, and
the task-\((t-1)\) component of the score reduces, up to sign, to
\[
\left(
{\mathbf X_{t-1}^\top \mathbf X_{t-1}}
\right)^+
{\mathbf X_{t-1}^\top}
(\mathbf n_{t-1}+\mathbf e_{t-1}).
\]
If \eqref{adversarystrategy} were violated for task \(t-1\), then under this
admissible future realization the resulting T2T score can exceed the
threshold \(\theta_t\) with high probability,
and the attack would be detected and discarded by Alg.~\ref{M1}. Hence every
attack that is not discarded, and thus can enter the final recursion below,
must satisfy \eqref{adversarystrategy}.


Then, extending the recursion in~\eqref{recc} to account for the attack
$\mathbf n_t$ added to $\mathbf y_t$, we obtain
\begin{align*}
    \mathbf{w}_{t}-\mathbf{w}^*
    =
    \left(
    \frac{\mathbf{X}_{t}^\top \mathbf{X}_{t}}{n_{t}}+H_{t}
    \right)^{-1}
    \left(
    \frac{\mathbf{X}_{t}^\top(\mathbf{e}_{t}+\mathbf{n}_{t})}{n_{t}}
    +H_{t}(\mathbf{w}_{t-1}-\mathbf{w}^*)
    \right),
    \quad t=1,\dots,T,
\end{align*}
where \(\mathbf n_t=\mathbf 0\) if \(t\) is not attacked. Substituting \(H_t\) from \eqref{H_t} gives
\begin{align*}
    \mathbf{w}_{t}-\mathbf{w}^*
    =&
    \left(
    \frac{\sigma^2}{W}I+\sum_{s=1}^{t}\mathbf{X}_s^\top\mathbf{X}_s
    \right)^{-1}
    \left(
    \mathbf{X}_{t}^\top(\mathbf{e}_{t}+\mathbf{n}_{t})
    +
    \left(
    \frac{\sigma^2}{W}I+\sum_{s=1}^{t-1}\mathbf{X}_s^\top\mathbf{X}_s
    \right)
    (\mathbf{w}_{t-1}-\mathbf{w}^*)
    \right).
\end{align*}
Unrolling the recursion yields
\begin{align*}
    \mathbf{w}_{T}-\mathbf{w}^*
    =
    \left(
    \frac{\sigma^2}{W}I+\sum_{s=1}^{T}\mathbf{X}_s^\top\mathbf{X}_s
    \right)^{-1}
    \left(
    \sum_{s=1}^{T}\mathbf{X}_{s}^\top(\mathbf{e}_{s}+\mathbf{n}_{s})
    +\frac{\sigma^2}{W}I(\mathbf{w}_{0}-\mathbf{w}^*)
    \right).
\end{align*}

Let \(\mathcal K=\{t_1,\ldots,t_K\}\) be the set of attacked tasks, and define
\[
\tilde{\mathbf X}_{T+1}
:=
\frac{\sigma^2}{W}I+\sum_{s=1}^{T}\mathbf X_s^\top\mathbf X_s .
\]
We choose $w_0=\mathbf{0}$ in the algorithm. The final excess risk satisfies
\begin{align}
   \mathbb E\left[ \|\mathbf{w}_T-\mathbf{w}^*\|^2\right]
    =&
    \mathbb E\left[\left\|
    \tilde{\mathbf X}_{T+1}^{-1}
    \left(
    \frac{\sigma^2}{W}(-\mathbf w^*)
    +\sum_{s\notin\mathcal K}\mathbf X_s^\top\mathbf e_s
    +\sum_{k=1}^K\mathbf X_{t_k}^\top(\mathbf n_{t_k}+\mathbf e_{t_k})
    \right)
    \right\|^2\right]
    \nonumber\\
    \leq&
     \sigma^2\operatorname{tr}\left(
    \tilde{\mathbf X}_{T+1}^{-1}
    \right)
    +\mathbb E\left[K\sum_{k=1}^K
    \left\|
    \tilde{\mathbf X}_{T+1}^{-1}
    \mathbf X_{t_k}^\top(\mathbf n_{t_k}+\mathbf e_{t_k})
    \right\|^2 \right],
    \label{eq:T2_split}
\end{align}
where we only take expectation over the term of $
    \left\|
    \tilde{\mathbf X}_{T+1}^{-1}
    \left(
    \frac{\sigma^2}{W}(-\mathbf w^*)
    +\sum_{s\notin\mathcal K}\mathbf X_s^\top\mathbf e_s
    \right)
    \right\|^2$ by the following:
\begin{align}
    \mathbb E\left[
    \left\|
    \tilde{\mathbf X}_{T+1}^{-1}
    \left(
    \frac{\sigma^2}{W}(-\mathbf w^*)
    +\sum_{s\notin\mathcal K}\mathbf X_s^\top\mathbf e_s
    \right)
    \right\|^2
    \right]
    =&
    \left\|
    \tilde{\mathbf X}_{T+1}^{-1}
    \frac{\sigma^2}{W}(-\mathbf w^*)
    \right\|^2
    +
    \sigma^2\operatorname{tr}\left(
    \tilde{\mathbf X}_{T+1}^{-1}
    \left(\sum_{s\notin\mathcal K}\mathbf X_s^\top\mathbf X_s\right)
    \tilde{\mathbf X}_{T+1}^{-1}
    \right)
    \nonumber\\
    \leq&
    \operatorname{tr}\left(
    \tilde{\mathbf X}_{T+1}^{-1}
    \left(
    \frac{\sigma^4}{W}I
    +\sigma^2\sum_{s\notin\mathcal K}\mathbf X_s^\top\mathbf X_s
    \right)
    \tilde{\mathbf X}_{T+1}^{-1}
    \right)
    \nonumber\\
    \leq&
    \operatorname{tr}\left(
    \tilde{\mathbf X}_{T+1}^{-1}
    \left(
    \frac{\sigma^4}{W}I
    +\sigma^2\sum_{s=1}^{T}\mathbf X_s^\top\mathbf X_s
    \right)
    \tilde{\mathbf X}_{T+1}^{-1}
    \right)
    \nonumber\\
    =&
    \sigma^2\operatorname{tr}\left(
    \tilde{\mathbf X}_{T+1}^{-1}
    \right),
    \label{eq:T2_benign}
\end{align}
where we used \(\|\mathbf w^*\|^2\le W\) and
\[
\frac{\sigma^4}{W}I+\sigma^2\sum_{s=1}^{T}\mathbf X_s^\top\mathbf X_s
=
\sigma^2\tilde{\mathbf X}_{T+1}.
\]
For each attacked task \(t_k\), we further have
\begin{align}
    \left\|
    \tilde{\mathbf X}_{T+1}^{-1}
    \mathbf X_{t_k}^\top(\mathbf n_{t_k}+\mathbf e_{t_k})
    \right\|^2
    =&
    \left\|
    \tilde{\mathbf X}_{T+1}^{-1}
    (\mathbf X_{t_k}^\top\mathbf X_{t_k})
    (\mathbf X_{t_k}^\top\mathbf X_{t_k})^+
    \mathbf X_{t_k}^\top(\mathbf n_{t_k}+\mathbf e_{t_k})
    \right\|^2
    \nonumber\\
    \leq&
    \left\|
    \tilde{\mathbf X}_{T+1}^{-1}
    (\mathbf X_{t_k}^\top\mathbf X_{t_k})
    \right\|_2^2
    \left\|
    (\mathbf X_{t_k}^\top\mathbf X_{t_k})^+
    \mathbf X_{t_k}^\top(\mathbf n_{t_k}+\mathbf e_{t_k})
    \right\|^2
    \nonumber\\
    \leq&
    \left\|
    \tilde{\mathbf X}_{T+1}^{-1}
    (\mathbf X_{t_k}^\top\mathbf X_{t_k})
    \right\|_2^2
    \frac{\sigma^2T}{\epsilon}
    \operatorname{tr}\left(
    \left(
    {\mathbf X_{t_k}^\top\mathbf X_{t_k}}
    \right)^+
    \right).
    \label{eq:T2_attack}
\end{align}
Combining \eqref{eq:T2_split}, \eqref{eq:T2_benign}, and \eqref{eq:T2_attack}, we obtain
\begin{align}
   \mathbb E[ \|\mathbf{w}_T-\mathbf{w}^*\|^2]
    \leq&
    \sigma^2\operatorname{tr}\left(\tilde{\mathbf X}_{T+1}^{-1}\right)
+K\sum_{k=1}^K
    \left\|
    \tilde{\mathbf X}_{T+1}^{-1}
    (\mathbf X_{t_k}^\top\mathbf X_{t_k})
    \right\|_2^2
    \frac{\sigma^2T}{\epsilon}
    \operatorname{tr}\left(
    \left({\mathbf X_{t_k}^\top\mathbf X_{t_k}}
    \right)^+
    \right),
\end{align}
where the last inequality follows from the undetected-task constraint in
\eqref{adversarystrategy}.

When the tasks are informative, the eigenvalues of \(\tilde{\mathbf X}_{T+1}\) grow on the order of \(\Theta(T)\). Since the eigenvalues of each \(\mathbf X_{t_k}^\top\mathbf X_{t_k}\) are uniformly bounded, the bound above yields the convergence rate \(O(K^2/(T\epsilon))\).

\begin{lemma}\label{eeeem}
For \(E_t^2\) defined in \eqref{eem}, under Assumption~\ref{assum1}, suppose
\(H_t\) is chosen as in \eqref{H_t} and \(A_t^+A_t=B_t^+B_t\). Let
\(\mathbf n_{t-1}:=\operatorname{vec}(\eta_{t-1})\) and \(\mathbf e_{t-1}:=\operatorname{vec}(\varepsilon_{t-1})\). Then
\begin{align*}
E_t^2(\mathbf e_{t-1}+\mathbf n_{t-1})
=-(I-B_t)B_t^+B_t(I-B_t)^{-1}
({\mathbf X_{t-1}^\top \mathbf X_{t-1}})^+
{\mathbf X_{t-1}^\top}
(\mathbf e_{t-1}+\mathbf n_{t-1}),
\end{align*}
up to the sign convention in \(G_{t-1}\).
\end{lemma}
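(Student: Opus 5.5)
The plan is to reduce $E_t^2$ to closed form in three algebraic steps, using only the hypothesis $A_t^+A_t=B_t^+B_t$, the invertibility supplied by the choice of $H_{t-1}$ in \eqref{H_t}, and the linear-regression specializations $Q_{t-1}=\mathbf X_{t-1}^\top\mathbf X_{t-1}/n_{t-1}$ and $G_{t-1}=-\mathbf X_{t-1}^\top/n_{t-1}$ from Appendix~\ref{appendix3}. No probabilistic argument is needed; the identity is deterministic.

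\emph{Step 1: collapse $D_t^1$ and $D_t^2$.} Under $A_t^+A_t=B_t^+B_t$, the coupling matrix in \eqref{DD} is $C_t=0$. Hence $D_t^1=(I-B_t)A_t^+$ and $D_t^2=(I-B_t)B_t^+$, and therefore
\[
E_t^2=(I-B_t)\bigl(A_t^+S_t^+Q_t+B_t^+\bigr)S_{t-1}^+G_{t-1}.
\]
\emph{Step 2: rewrite $A_t$ and $S_{t-1}^+G_{t-1}$ in terms of $B_t$.} Since $H_{t-1}\succ0$ by \eqref{H_t}, $S_{t-1}$ is invertible and $S_{t-1}^{-1}H_{t-1}=I-S_{t-1}^{-1}Q_{t-1}=I-B_t$. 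Moreover $I-B_t=S_{t-1}^{-1}H_{t-1}$ is invertible. This gives $A_t=S_t^+Q_t(I-B_t)$. Next I use $Q_{t-1}(\mathbf X_{t-1}^\top\mathbf X_{t-1})^+\mathbf X_{t-1}^\top=\mathbf X_{t-1}^\top/n_{t-1}$, which holds because $\mathbf X^\top\mathbf X(\mathbf X^\top\mathbf X)^+$ is the projector onto the range of $\mathbf X^\top$. It yields
\[
S_{t-1}^{-1}G_{t-1}=-B_t y,\qquad y:=(\mathbf X_{t-1}^\top\mathbf X_{t-1})^+\mathbf X_{t-1}^\top(\mathbf e_{t-1}+\mathbf n_{t-1}).
\]
This is where the sign convention of $G_{t-1}$ enters.

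\emph{Step 3: simplify the bracket.} Write $S_t^+Q_tB_t=S_t^+Q_t(I-B_t)(I-B_t)^{-1}B_t=A_t(I-B_t)^{-1}B_t$. Then $A_t^+S_t^+Q_tB_t=A_t^+A_t(I-B_t)^{-1}B_t=B_t^+B_t(I-B_t)^{-1}B_t$ by the hypothesis. Since $B_t$ commutes with $(I-B_t)^{-1}$ and $B_t^+B_tB_t=B_t$, this equals $B_t(I-B_t)^{-1}$. Therefore
\[
E_t^2(\mathbf e_{t-1}+\mathbf n_{t-1})=-(I-B_t)\bigl(B_t(I-B_t)^{-1}+B_t^+B_t\bigr)y.
\]
Finally, from $(I-B_t)^{-1}=I+B_t(I-B_t)^{-1}$ and $B_t^+B_tB_t=B_t$ we get $B_t^+B_t(I-B_t)^{-1}=B_t^+B_t+B_t(I-B_t)^{-1}$. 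The bracket thus equals $B_t^+B_t(I-B_t)^{-1}$, which gives the claimed form.

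The only delicate points are the two pseudoinverse manipulations: $Q(\mathbf X^\top\mathbf X)^+\mathbf X^\top=\mathbf X^\top/n$, and the identity $B_t^+B_tB_t=B_t$. The latter holds for any matrix ($BB^+B=B$ is standard, but here the factors appear in the order $B^+BB$). I expect that to be the main obstacle. Since $B_t=S_{t-1}^{-1}Q_{t-1}$ need not be symmetric, I will justify it by checking that the range of $B_t$ lies in the row space of $B_t$. I plan to do this by noting that $B_t$ is similar, via $S_{t-1}^{1/2}$, to the symmetric matrix $S_{t-1}^{-1/2}Q_{t-1}S_{t-1}^{-1/2}$. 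If that route fails, I will instead use the form $B_t^+B_tB_t=B_t$ only after multiplying on the left by $(I-B_t)$, where the discrepancy can be absorbed.
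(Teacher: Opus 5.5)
Your Steps 1--2 and the first half of Step 3 are correct and match the paper's computation. Your $S_t^+Q_t=A_t(I-B_t)^{-1}$ is the paper's insertion of $S_{t-1}^{-1}H_{t-1}H_{t-1}^{-1}S_{t-1}$, and your $S_{t-1}^{-1}G_{t-1}(\mathbf e_{t-1}+\mathbf n_{t-1})=-B_ty$ is its use of $G_{t-1}=Q_{t-1}Q_{t-1}^+G_{t-1}$. The gap is the identity $B_t^+B_tB_t=B_t$, which is false in this setting. The matrix $B_t^+B_t$ is the orthogonal projector onto $\operatorname{range}(B_t^\top)=\operatorname{range}(Q_{t-1})$, while $\operatorname{range}(B_t)=S_{t-1}^{-1}\operatorname{range}(Q_{t-1})$. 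These coincide only if $\operatorname{range}(Q_{t-1})$ is invariant under $H_{t-1}$, which would need something like the commutativity Assumption~\ref{AS5}; the lemma does not assume it. For instance, take $p_1=2$, $p_2=1$, $n_s=1$, $\sigma^2/W=1$, $X_1=(1,1)$, $X_2=X_3=(1,0)$ and $t=3$. Then $B_3=\frac15\begin{pmatrix}2&0\\-1&0\end{pmatrix}$ and $A_3^+A_3=B_3^+B_3=\operatorname{diag}(1,0)$, so the hypothesis holds, yet $B_3^+B_3B_3=\frac15\operatorname{diag}(2,0)\neq B_3$. Neither proposed justification rescues this. The Moore--Penrose inverse is not covariant under the non-orthogonal similarity by $S_{t-1}^{1/2}$. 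Your fallback also fails: $I-B_t$ is invertible, so left-multiplying by it cannot absorb a nonzero discrepancy. Consequently your intermediate display $-(I-B_t)\bigl(B_t(I-B_t)^{-1}+B_t^+B_t\bigr)y$ is not equal to $E_t^2(\mathbf e_{t-1}+\mathbf n_{t-1})$ in general. You reach the correct final formula only because the two invalid uses of the identity cancel.

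The repair is one line and needs no such identity. After obtaining $A_t^+S_t^+Q_tB_t=B_t^+B_t(I-B_t)^{-1}B_t$, factor the projector out on the left:
\[
B_t^+B_t(I-B_t)^{-1}B_t+B_t^+B_t=B_t^+B_t\bigl[(I-B_t)^{-1}B_t+I\bigr]=B_t^+B_t(I-B_t)^{-1},
\]
using $(I-B_t)^{-1}B_t+I=(I-B_t)^{-1}\bigl(B_t+I-B_t\bigr)$. This is exactly the paper's last step, $I_1(H_{t-1}^{-1}Q_{t-1}+I)=I_1H_{t-1}^{-1}S_{t-1}=I_1(I-B_t)^{-1}$. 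With this change your argument coincides with the paper's proof.
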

\begin{proof}[Proof of Lemma~\ref{eeeem}]
Firstly, note that $H_t$ in~\eqref{H_t} is full rank for any $t$, thus $S_t$ defined in Lemma~\ref{Aiso} is also full rank. Since $A_t^+ A_t=B_t^+ B_t$, we have
\[
D_t^1=(I-B_t)A_t^+,\qquad D_t^2=(I-B_t)B_t^+.
\]
Note that \(S_s\) and \(H_s\) are invertible for all \(s\). Therefore, $E_t^2$ in Lemma~\ref{Aiso} simplifies as follows:
\begin{align*}
E_t^2
&= \bigl[D_t^1 S_t^{-1}Q_t + D_t^2\bigr]S_{t-1}^{-1}G_{t-1} \\
&= (I-B_t)\bigl(A_t^+S_t^{-1}Q_t+B_t^+\bigr)
S_{t-1}^{-1}Q_{t-1}Q_{t-1}^+G_{t-1} \\
&= (I-B_t)
\bigl(
(S_t^{-1}Q_tS_{t-1}^{-1}H_{t-1})^+S_t^{-1}Q_t
+(S_{t-1}^{-1}Q_{t-1})^+
\bigr)
S_{t-1}^{-1}Q_{t-1}Q_{t-1}^+G_{t-1} \\
&= (I-B_t)
\bigl(
(S_t^{-1}Q_tS_{t-1}^{-1}H_{t-1})^+S_t^{-1}Q_t
S_{t-1}^{-1}H_{t-1}H_{t-1}^{-1}S_{t-1}
+(S_{t-1}^{-1}Q_{t-1})^+
\bigr)
S_{t-1}^{-1}Q_{t-1}Q_{t-1}^+G_{t-1} \\
&= (I-B_t)
\bigl(
I_1H_{t-1}^{-1}S_{t-1}S_{t-1}^{-1}Q_{t-1}
+I_2
\bigr)
Q_{t-1}^+G_{t-1} \\
&= (I-B_t)
\bigl(
I_1H_{t-1}^{-1}Q_{t-1}
+I_2
\bigr)
Q_{t-1}^+G_{t-1},
\end{align*}
where we use $G_{t-1}=Q_{t-1}Q_{t-1}^+G_{t-1}$ in the second equation because $\mathbf{X}_{t-1}^\top$ is in the same space as $\mathbf{X}_{t-1}^\top\mathbf{X}_{t-1}$, and
\[
I_1=A_t^+A_t,\qquad I_2=B_t^+B_t.
\]
Since \(I_1=I_2\), we have
\begin{align*}
(I-B_t)
\bigl(
I_1H_{t-1}^{-1}Q_{t-1}
+I_2
\bigr)
Q_{t-1}^+G_{t-1}
&= (I-B_t)I_1H_{t-1}^{-1}S_{t-1}Q_{t-1}^+G_{t-1} \\
&= (I-B_t)I_1(I-B_t)^{-1}Q_{t-1}^+G_{t-1} \\
&= (I-B_t)B_t^+B_t(I-B_t)^{-1}Q_{t-1}^+G_{t-1},
\end{align*}
where the second equality follows from \(I-B_t=S_{t-1}^{-1}H_{t-1}\). Under Assumption~\ref{assum1}, \(Q_{t-1}^+G_{t-1}\) equals
\[
-\bigg({\mathbf{X}_{t-1}^\top \mathbf{X}_{t-1}}\bigg)^+
{\mathbf{X}_{t-1}^\top}.
\]
Multiplying both sides by
\(\mathbf e_{t-1}+\mathbf n_{t-1}\) completes the proof.

\end{proof}

\section{Missing details of Section \ref{S5}}\label{appendix4}
\subsection{Extension of~\eqref{eq:app_defense} under Assumptions~\ref{assum1} and~\ref{AS5}}\label{app4.1}
\textit{Derivation of~\eqref{eq:opt_linear} and~\eqref{eq:opt_linear_M_C}}:
For simplicity, in this subsection we drop the boldface notation and write
$w_t\gets \mathbf w_t$, $y_t\gets \mathbf y_t$, $X_t\gets \mathbf X_t$,
$\tilde{\eta}_t\gets \mathbf n_t$, and $\varepsilon_t\gets \mathbf e_t$, where these
quantities are defined in Appendix~\ref{appendix3}, and $\mathbf n_t$ contains only stochastic terms with conditional zero-mean. Then, extending the recursion
in~\eqref{recc} to account for the non-shifted attack $\tilde{\eta}_t$ added to $y_t$, we obtain
\begin{align}
   {w}_{t}-{w}^*
    =
    S_t^{-1}
    \left(
    \frac{{X}_{t}^\top(\varepsilon_{t}+\tilde{\eta}_{t})}{n_{t}}
    +H_{t}(w_{t-1}-w_*)
    \right),
    \quad t=1,\dots,T,
    \label{eq:linear_attack_recursion}
\end{align}
where
\[
    S_t=
    \frac{{X}_{t}^\top {X}_{t}}{n_{t}}+H_{t}.
\]
Since the attack is non-shifted and the inherent noise satisfies
$\mathbb E[\varepsilon_t]=0$ and
$\mathbb E[\varepsilon_t\varepsilon_t^\top]=\sigma^2 I$, the mean recursion is
\begin{align}
    \mathbb E[w_t-w_*]
    =
    S_t^{-1}H_t\mathbb E[w_{t-1}-w_*].
    \label{eq:mean_recursion_linear}
\end{align}
Moreover, since the current noise terms are conditionally uncorrelated with
$w_{t-1}-w_*$, the covariance recursion is
\begin{align}
    \mathrm{Cov}(w_t-w_*)
    =
    &S_t^{-1}\frac{X_t^\top}{n_t}
    \mathrm{Cov}(\tilde{\eta}_t)
    \frac{X_t}{n_t}S_t^{-1}
    +
    \sigma^2S_t^{-1}\frac{X_t^\top X_t}{n_t^2}S_t^{-1}
+
    S_t^{-1}H_t\mathrm{Cov}(w_{t-1}-w_*)H_t^\top S_t^{-1} ,
    \label{eq:cov_recursion_linear}
\end{align}
this proves~\eqref{eq:opt_linear_M_C}.
Therefore,
\begin{align}
    &\mathcal R(w_t)\\
    =&\;
    \mathbb E[\|w_t-w_*\|_2^2]
    \nonumber\\
        =&\;
    \|\mathbb E[w_t-w_*]\|_2^2+\mathrm{tr}(\mathrm{Cov}(w_t-w_*))
    \nonumber\\
    =&\;
    \left\|
    S_t^{-1}H_t\mathbb E[w_{t-1}-w_*]
    \right\|_2^2
    +
    \mathrm{tr}\!\left(
    S_t^{-1}\frac{X_t^\top}{n_t}
    \mathrm{Cov}(\tilde{\eta}_t)
    \frac{X_t}{n_t}S_t^{-1}
    \right)+
    \mathrm{tr}\!\left(
    \sigma^2S_t^{-1}\frac{X_t^\top X_t}{n_t^2}S_t^{-1}
    \right)\nonumber\\
    &
    +
    \mathrm{tr}\!\left(
    S_t^{-1}H_t\mathrm{Cov}(w_{t-1}-w_*)H_t^\top S_t^{-1}
    \right).
    \label{eq:linear_risk_decomposition}
\end{align}
Here we used $S_t^\top=S_t$, since $H_t$ and $X_t^\top X_t$ are symmetric.

Thus, at each task $t$, the defender can choose $H_t$ against the worst-case non-shifted attack covariance by solving $ \min_{H_t}\max_{\mathrm{Cov}(\tilde{\eta}_t)\succeq 0}\mathcal R(w_t)$ as
\begin{align}
    \min_{H_t}\max_{\mathrm{Cov}(\tilde{\eta}_t)\succeq 0}
    \;&
    \left\|
    S_t^{-1}H_t\mathbb E[w_{t-1}-w_*]
    \right\|_2^2
    +
    \mathrm{tr}\!\left(
    S_t^{-1}\frac{X_t^\top}{n_t}
    \mathrm{Cov}(\tilde{\eta}_t)
    \frac{X_t}{n_t}S_t^{-1}
    \right)
    \nonumber\\
    &+
    \mathrm{tr}\!\left(
    \sigma^2S_t^{-1}\frac{X_t^\top X_t}{n_t^2}S_t^{-1}
    \right)
    +
    \mathrm{tr}\!\left(
    S_t^{-1}H_t\mathrm{Cov}(w_{t-1}-w_*)H_t^\top S_t^{-1}
    \right)
    \nonumber\\
    \mathrm{s.t.}\;&
    \mathrm{tr}\big(\mathrm{Cov}(\tilde{\eta}_t)\big)\le M .
    \label{eq:opt_linear_appendix}
\end{align}
This gives the optimization problem in~\eqref{eq:opt_linear}. Since $w=0$, $\mathrm{Cov}(w_0-w_*)=0$ and
$\|\mathbb E[w_0-w_*]\|_2^2=\|w_*\|_2^2\le W$.

Under Assumption~\ref{AS5}, write
\[
    \frac{X_t^\top X_t}{n_t}
    =
    U\Gamma_tU^\top,
    \qquad
    \Gamma_t=\operatorname{diag}(\gamma_1^t,\dots,\gamma_p^t).
\]
For each direction $j$ with $\gamma_j^t>0$, define
\[
    v_j^t
    :=
    \frac{X_tu_j}{\sqrt{n_t\gamma_j^t}},
\]
so that $\|v_j^t\|_2=1$ and
\[
    X_tu_j=\sqrt{n_t\gamma_j^t}\,v_j^t.
\]
For directions with $\gamma_j^t=0$, we set the corresponding noise-sensitivity term to zero.

Projecting the recursion in~\eqref{eq:linear_attack_recursion} onto $u_j$ gives, for $\gamma_j^t>0$,
\begin{align}
   u_j^\top(w_t-w_*)
    =
    \frac{1}{\lambda_j^t+\gamma_j^t}
    \left(
    \frac{\sqrt{\gamma_j^t}}{\sqrt{n_t}}(v_j^t)^\top(\varepsilon_t+\eta_t)
    +\lambda_j^t u_j^\top(w_{t-1}-w_*)
    \right).
    \label{eq:perlinear_attack_recursion}
\end{align}
For $\gamma_j^t=0$, the same formula holds.
Since both the inherent noise and the non-shifted attack have zero mean and are conditionally uncorrelated with $w_{t-1}-w_*$, the cross terms vanish. Therefore,
\begin{align}
   \mathcal R_j^t
   :=
   \mathbb E\!\left[(u_j^\top(w_t-w_*))^2\right]
    =
    \left(\frac{\lambda_j^t}{\lambda_j^t+\gamma_j^t}\right)^2
    \mathcal R_j^{t-1}
    +
    \frac{\gamma_j^t\left(\sigma^2+(v_j^t)^\top
    \mathrm{Cov}(\tilde{\eta}_t)v_j^t\right)}
    {n_t(\lambda_j^t+\gamma_j^t)^2}.
    \label{eq:perlinear_attack_recursionE}
\end{align}
Thus, the minimax problem in~\eqref{eq:opt_linear} becomes
\begin{align}
    \min_{\lambda_1^t,\dots,\lambda_p^t}
    \max_{\mathrm{Cov}(\tilde{\eta}_t)\succeq 0}
    \quad&
    \sum_{j=1}^p
    \left[
    \left(\frac{\lambda_j^t}{\lambda_j^t+\gamma_j^t}\right)^2
    \mathcal R_j^{t-1}
    +
    \frac{\gamma_j^t\left(\sigma^2+(v_j^t)^\top
    \mathrm{Cov}(\tilde{\eta}_t)v_j^t\right)}
    {n_t(\lambda_j^t+\gamma_j^t)^2}
    \right]
    \nonumber\\
    \mathrm{s.t.}\quad&
    \mathrm{tr}\big(\mathrm{Cov}(\tilde{\eta}_t)\big)\le M .
    \label{eq:scalar_minimax}
\end{align}
For fixed $\{\lambda_j^t\}_{j=1}^p$, the adversary's inner problem is linear in
$\mathrm{Cov}(\tilde{\eta}_t)$. Since $\{v_j^t\}$ are orthonormal, the optimal adversary allocates its entire covariance budget to the most sensitive direction:
\[
    j^\star
    \in
    \arg\max_{j\in[p]}
    \frac{\gamma_j^t}{n_t(\lambda_j^t+\gamma_j^t)^2},
    \qquad
    \mathrm{Cov}(\tilde{\eta}_t)
    =
    M v_{j^\star}^t(v_{j^\star}^t)^\top .
\]
Substituting this worst-case covariance into~\eqref{eq:scalar_minimax}, we obtain
\begin{align}
    \min_{\lambda_1^t,\dots,\lambda_p^t}
    \quad&
    \max_{j\in[p]}
    \frac{M\gamma_j^t}{n_t(\lambda_j^t+\gamma_j^t)^2}
    +
    \sum_{j=1}^p
    \left[
    \left(\frac{\lambda_j^t}{\lambda_j^t+\gamma_j^t}\right)^2
    \mathcal R_j^{t-1}
    +
    \frac{\gamma_j^t\sigma^2}
    {n_t(\lambda_j^t+\gamma_j^t)^2}
    \right],
    \label{AD}\\
    \mathrm{s.t.}\quad&
    \mathcal R_j^{t}
    =
    \left(\frac{\lambda_j^t}{\lambda_j^t+\gamma_j^t}\right)^2
    \mathcal R_j^{t-1}
    +
    \frac{\gamma_j^t\left(\sigma^2+(v_j^t)^\top
    \mathrm{Cov}(\tilde{\eta}_t)v_j^t\right)}
    {n_t(\lambda_j^t+\gamma_j^t)^2},
    \nonumber\\
    &
    \mathcal R_j^0\leq W,\qquad j=1,\dots,p .
    \label{S5update}
\end{align}

\subsection{Proof of Proposition \ref{T3}}\label{appendix5}

To prove the closed-form solution of problem~\eqref{S5update}, we take
\[
c_j^t=\frac{\lambda_j^t}{\lambda_j^t+\gamma_j^t}.
\]
We first handle two degenerate cases. If \(\gamma_j^t=0\), then the current task has no signal in direction \(j\), and the attack sensitivity in this direction is zero. Thus this direction does not enter the inner maximization. For any \(\lambda_j^t>0\), we already have \(c_j^t=1\) and \(\mathcal R_j^t=\mathcal R_j^{t-1}\). If \(\mathcal R_j^{t-1}=0\) and \(\gamma_j^t>0\), then the objective contribution and the attack sensitivity in direction \(j\) are minimized by taking \(c_j^t=1\), equivalently \(\lambda_j^t=+\infty\) in the limiting sense. 

Therefore, it remains to solve the non-degenerate problem over
\[
\mathcal I_t^+:=\{j\in[p]:\gamma_j^t>0,\ \mathcal R_j^{t-1}>0\}.
\]
If \(\mathcal I_t^+=\emptyset\), the above degenerate choices are optimal. In the following, assume \(\mathcal I_t^+\neq\emptyset\). After omitting terms independent of the optimization over \(\mathcal I_t^+\), problem~\eqref{S5update} becomes
\begin{align*}
\min_{\{c_j^t\}_{j\in\mathcal I_t^+}}
\quad
&\max_{j\in\mathcal I_t^+}
\frac{(1-c_j^t)^2}{\gamma_j^tn_t}M
+\sum_{j\in\mathcal I_t^+}
\left[
(c_j^t)^2\mathcal R_j^{t-1}
+\frac{(1-c_j^t)^2\sigma^2}{n_t\gamma_j^t}
\right].
\end{align*}

We introduce an auxiliary variable \(z\) to handle the maximum term. The problem can be reformulated as
\begin{align}
\min_{\{c_j^t\}_{j\in\mathcal I_t^+}, z}\quad
&
\sum_{j\in\mathcal I_t^+}
\left[
(c_j^t)^2\mathcal R_j^{t-1}
+
\frac{(1-c_j^t)^2\sigma^2}{n_t\gamma_j^t}
\right]
+zM
\nonumber\\
\text{s.t.}\quad
&
z\geq \frac{(1-c_j^t)^2}{\gamma_j^t n_t},
\qquad \forall j\in\mathcal I_t^+ .
\label{eq:reformulated_constraint}
\end{align}
This is a convex optimization problem since the objective function is a sum of convex functions of \(c_j^t\) and a linear function of \(z\), and the feasible set is the epigraph of convex functions.

Let \(\chi_j\geq0\) be the Lagrange multiplier associated with the \(j\)-th constraint
\[
\frac{(1-c_j^t)^2}{\gamma_j^t n_t}-z\leq0.
\]
The Lagrangian is
\begin{align*}
g(c^t,z,\chi)
=&
\sum_{j\in\mathcal I_t^+}
\left[
(c_j^t)^2\mathcal R_j^{t-1}
+
\frac{(1-c_j^t)^2\sigma^2}{n_t\gamma_j^t}
\right]
+zM
+
\sum_{j\in\mathcal I_t^+}
\chi_j
\left(
\frac{(1-c_j^t)^2}{\gamma_j^t n_t}-z
\right).
\end{align*}

The KKT conditions for optimality are:
\begin{itemize}
    \item Stationarity:
    \begin{align}
    \frac{\partial L}{\partial z}
    &=M-\sum_{j\in\mathcal I_t^+}\chi_j=0
    \implies
    \sum_{j\in\mathcal I_t^+}\chi_j=M,
    \nonumber\\
    \frac{\partial L}{\partial c_j^t}
    &=
    2c_j^t\mathcal R_j^{t-1}
    -
    \frac{2(1-c_j^t)(\sigma^2+\chi_j)}{n_t\gamma_j^t}
    =0
    \nonumber\\
    &\implies
    c_j^t
    =
    \frac{(\sigma^2+\chi_j)/(n_t\gamma_j^t)}
    {\mathcal R_j^{t-1}+(\sigma^2+\chi_j)/(n_t\gamma_j^t)}.
    \label{ccjt}
    \end{align}

    \item Primal feasibility:
    \[
    z\geq \frac{(1-c_j^t)^2}{\gamma_j^t n_t},
    \qquad \forall j\in\mathcal I_t^+.
    \]

    \item Dual feasibility:
    \[
    \chi_j\geq0,\qquad \forall j\in\mathcal I_t^+.
    \]

    \item Complementary slackness:
    \[
    \chi_j
    \left(
    \frac{(1-c_j^t)^2}{\gamma_j^t n_t}-z
    \right)
    =0,
    \qquad \forall j\in\mathcal I_t^+.
    \]
\end{itemize}

From complementary slackness, if \(\chi_j>0\), then the corresponding constraint is active:
\[
z=\frac{(1-c_j^t)^2}{\gamma_j^t n_t}.
\]
Using~\eqref{ccjt}, we obtain
\begin{align*}
1-c_j^t
=
\frac{\mathcal R_j^{t-1}}
{\mathcal R_j^{t-1}+(\sigma^2+\chi_j)/(n_t\gamma_j^t)}.
\end{align*}
Substituting this into the active constraint gives
\begin{align}
z
&=
\frac{1}{\gamma_j^t n_t}
\left(
\frac{\mathcal R_j^{t-1}}
{\mathcal R_j^{t-1}+(\sigma^2+\chi_j)/(n_t\gamma_j^t)}
\right)^2
\nonumber\\
\sqrt{z\gamma_j^t n_t}
&=
\frac{\mathcal R_j^{t-1}}
{\mathcal R_j^{t-1}+(\sigma^2+\chi_j)/(n_t\gamma_j^t)}
\nonumber\\
\chi_j
&=
\frac{\mathcal R_j^{t-1}\sqrt{\gamma_j^t n_t}}{\sqrt z}
-\sigma^2
-\gamma_j^t n_t\mathcal R_j^{t-1}.
\label{chij}
\end{align}
This expression holds for every \(j\) with \(\chi_j>0\).

Let
$J_t:=\{j\in\mathcal I_t^+:\chi_j>0\}$
be the active set in the optimal solution. Using the stationarity condition
\(\sum_{j\in\mathcal I_t^+}\chi_j=M\), summing~\eqref{chij} gives
\begin{align}
M
&=
\sum_{j\in J_t}
\left(
\frac{\mathcal R_j^{t-1}\sqrt{\gamma_j^t n_t}}{\sqrt z}
-\sigma^2
-\gamma_j^t n_t\mathcal R_j^{t-1}
\right)
\nonumber\\
\frac{1}{\sqrt z}
&=
\frac{
M+|J_t|\sigma^2
+\sum_{j\in J_t}\gamma_j^t n_t\mathcal R_j^{t-1}
}
{
\sum_{j\in J_t}\mathcal R_j^{t-1}\sqrt{\gamma_j^t n_t}
}.
\label{zz}
\end{align}

We next prove that the final active set \(J_t\) must be the set identified in~\eqref{J_t}. Define, for \(j\in\mathcal I_t^+\),
\[
a_j:=\mathcal R_j^{t-1}\sqrt{\gamma_j^t n_t}>0,
\qquad
b_j:=
\frac{\gamma_j^t n_t\mathcal R_j^{t-1}+\sigma^2}
{\mathcal R_j^{t-1}\sqrt{\gamma_j^t n_t}}.
\]
For any candidate active set \(J\subseteq\mathcal I_t^+\), the KKT conditions in~\eqref{zz} give
\[
\frac{1}{\sqrt z}
=
A_J
:=
\frac{M+\sum_{j\in J}a_jb_j}{\sum_{j\in J}a_j}.
\]
Moreover, for every \(k\in J\),
\[
\chi_k=a_k(A_J-b_k).
\]
Hence the dual feasibility condition \(\chi_k>0\) for active constraints is equivalent to
\[
b_k<A_J,\qquad k\in J.
\]

For any inactive index \(k\in\mathcal I_t^+\setminus J\), we have \(\chi_k=0\). In this case, the unconstrained optimizer for $(c_k^t)^2\mathcal R_k^{t-1}
+
\frac{(1-c_k^t)^2\sigma^2}{n_t\gamma_k^t}$ satisfies
\[
\frac{(1-c_k^t)^2}{\gamma_k^t n_t}
=
\frac{1}{b_k^2}.
\]
Thus the primal feasibility condition
\[
z\geq\frac{(1-c_k^t)^2}{\gamma_k^t n_t}
\]
is equivalent to
\[
A_J\leq b_k,\qquad k\in\mathcal I_t^+\setminus J.
\]
Consequently, \(J\) can be the final active set only if
\[
b_k<A_J\quad \text{for all }k\in J,
\qquad
b_k\geq A_J\quad \text{for all }k\in\mathcal I_t^+\setminus J.
\]

Now reorder the non-degenerate indices in \(\mathcal I_t^+\) such that
\[
b_1\leq b_2\leq\cdots\leq b_{|\mathcal I_t^+|}.
\]
Degenerate indices with \(\gamma_j^t=0\) or \(\mathcal R_j^{t-1}=0\) are treated as having \(b_j=+\infty\), so they never enter the active set. Suppose \(k\in J\). Then \(b_k<A_J\). For any \(i<k\), we have
\(b_i\leq b_k<A_J\). If \(i\notin J\), inactive feasibility would require \(b_i\geq A_J\), a contradiction. Thus every \(i<k\) must also belong to \(J\). Therefore, the final active set must be a prefix of the sorted index set:
\[
J=\{1,\dots,m\}
\]
for some \(m\).

It remains to identify \(m\). For a prefix \(J_m=\{1,\dots,m\}\), define
\[
A_m
:=
\frac{M+\sum_{k=1}^m a_kb_k}{\sum_{k=1}^m a_k}
=
\frac{
M+m\sigma^2+\sum_{k=1}^m\gamma_k^t n_t\mathcal R_k^{t-1}
}
{
\sum_{k=1}^m\mathcal R_k^{t-1}\sqrt{\gamma_k^t n_t}
}.
\]
The KKT conditions for the prefix \(J_m\) require
\[
A_m>b_m
\]
for the last active index, and
\[
A_m\leq b_{m+1}
\]
for the first inactive index, with the second condition omitted when \(m=|\mathcal I_t^+|\). Indeed, if \(A_m>b_{m+1}\), then
\[
A_{m+1}
=
\frac{
A_m\sum_{k=1}^m a_k+a_{m+1}b_{m+1}
}
{
\sum_{k=1}^{m+1}a_k
}
>
b_{m+1},
\]
so \(m+1\) would also satisfy the active condition. Conversely, if
\(A_m\le b_{m+1}\), then
\[
A_{m+1}
=
\frac{
A_m\sum_{k=1}^m a_k+a_{m+1}b_{m+1}
}
{
\sum_{k=1}^{m+1}a_k
}
\le b_{m+1}\le b_{m+2}.
\]
Repeating this argument gives \(A_\ell\le b_\ell\) for all \(\ell>m\). Therefore
no larger index can satisfy \(A_\ell>b_\ell\), and \(m\) is exactly the largest
index satisfying \(A_m>b_m\). Equivalently,
\[
m
=
\max\left\{
j:
\frac{
M+j\sigma^2+\sum_{k=1}^j\gamma_k^t n_t\mathcal R_k^{t-1}
}
{
\sum_{k=1}^j\mathcal R_k^{t-1}\sqrt{\gamma_k^t n_t}
}
>
\frac{
\gamma_j^t n_t\mathcal R_j^{t-1}+\sigma^2
}
{
\mathcal R_j^{t-1}\sqrt{\gamma_j^t n_t}
}
\right\}.
\]
This is precisely the index \(j_t\) defined in Proposition~\ref{T3}, after sorting the indices by increasing \(b_j\). Therefore, the final active set is
$J_t=\{1,\dots,j_t\}.$

Finally, substituting the expression for \(\chi_j\) into~\eqref{ccjt} gives \(c_j^t\), and using
\[
\lambda_j^t=\frac{c_j^t\gamma_j^t}{1-c_j^t}
\]
yields
\[
\lambda_j^t
=
A_{j_t}\sqrt{\frac{\gamma_j^t}{n_t}}-\gamma_j^t,
\qquad j\leq j_t,
\]
where
\[
A_{j_t}
=
\frac{
M+j_t\sigma^2+\sum_{k=1}^{j_t}\gamma_k^t n_t\mathcal R_k^{t-1}
}
{
\sum_{k=1}^{j_t}\mathcal R_k^{t-1}\sqrt{\gamma_k^t n_t}
}.
\]
For inactive non-degenerate directions \(j>j_t\), we have \(\chi_j=0\), and~\eqref{ccjt} gives
\[
\lambda_j^t=\frac{\sigma^2}{n_t\mathcal R_j^{t-1}}.
\]
For directions with \(\mathcal R_j^{t-1}=0\), this formula is understood in the limiting sense as \(\lambda_j^t=+\infty\). Directions with \(\gamma_j^t=0\) do not affect the current minimax problem. Any positive \(\lambda_j^t\) is optimal for the current task, and we may use the same inactive-direction convention above when \(\mathcal R_j^{t-1}>0\). This completes the proof of~\eqref{RFD}.

\subsection{Proof of Theorem \ref{T4}}\label{appendix6}
\begin{theorem}\label{T4app}
Suppose the protected set selected by Proposition~\ref{T3} is stationary over
time and equals \(\{1,\dots,j_T\}\). For \(j\le j_T\), define
\[
f_j^t:=
\frac{\mathcal R_j^t}{\sum_{k=1}^{j_T}\mathcal R_k^t},
\]
and assume \(f_j^t\to f_j\) as \(t\to\infty\). Assume also that, for
\(j\le j_T\), $\tilde\gamma_j^t\to \tilde\gamma_j^T$
along the stationary problem instance. Then our robust feature defense
in~\eqref{RFD} satisfies
\begin{align}
\mathcal R(w_T)
\lesssim
&
\frac{M+j_T\sigma^2}
{
T\left(\sum_{j=1}^{j_T}f_j\sqrt{\tilde\gamma_j^T}\right)^2
}
+
\sum_{j=j_T+1}^{p}
\frac{\sigma^2}
{\sigma^2/W+\sum_{\tau=1}^{T}\tilde\gamma_j^\tau}.
\label{WG0a}
\end{align}
For the baseline regularizer \(H_t\) in~\eqref{H_t}, if the maximally sensitive
direction is stationary over time, then
\begin{align}
\mathcal R(w_T)
\lesssim
&
M\max_j
\frac{\sum_{\tau=1}^{T}\tilde\gamma_j^\tau}
{\left(\sigma^2/W+\sum_{\tau=1}^{T}\tilde\gamma_j^\tau\right)^2}
+
\sum_{j=1}^{p}
\frac{\sigma^2}
{\sigma^2/W+\sum_{\tau=1}^{T}\tilde\gamma_j^\tau}.
\label{WGa}
\end{align}
In particular, when
\(\sum_{\tau=1}^{T}\tilde\gamma_j^\tau\sim T\tilde\gamma_j^T\), the first term in
\eqref{WGa} reduces to
\[
\max_j\frac{M}{T\tilde\gamma_j^T}.
\]
\end{theorem}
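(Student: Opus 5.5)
The proof works with the per-direction recursion \eqref{S5update} and splits the directions into the protected set $\{1,\dots,j_T\}$ and its complement. For the complement I would show that, under the stationary protected set, the defense in \eqref{RFD} coincides with an EWC-type Bayesian update. In those directions $\lambda_j^t=\sigma^2/(n_t\mathcal R_j^{t-1})$, so $c_j^t=\sigma^2/(\sigma^2+\tilde\gamma_j^t\mathcal R_j^{t-1})$. Because the adversary's worst-case covariance puts no mass on an inactive direction (its sensitivity sits strictly below the level $z$ set by the active ones), the recursion becomes
\[
\mathcal R_j^t=(c_j^t)^2\mathcal R_j^{t-1}+(1-c_j^t)^2\frac{\sigma^2}{\tilde\gamma_j^t}.
\]
Substituting $c_j^t$ gives $1/\mathcal R_j^t=1/\mathcal R_j^{t-1}+\tilde\gamma_j^t/\sigma^2$. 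Starting from $\mathcal R_j^0\le W$ and telescoping yields $\mathcal R_j^T\le \sigma^2/(\sigma^2/W+\sum_{\tau\le T}\tilde\gamma_j^\tau)$, which is exactly the second term of \eqref{WG0a}.

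For the protected directions I would use the active constraint from the proof of Proposition~\ref{T3}. For each $j\le j_T$ it reads $(1-c_j^t)^2/\tilde\gamma_j^t=z_t=1/A_{j_T}^2$, and with the formula for $A_{j_T}$ this gives
\[
1-c_j^t=\frac{\sqrt{\tilde\gamma_j^t}\,\sum_k\mathcal R_k^{t-1}\sqrt{\tilde\gamma_k^t}}{M+j_T\sigma^2+\sum_k\tilde\gamma_k^t\mathcal R_k^{t-1}}.
\]
Write $S_t:=\sum_{k\le j_T}\mathcal R_k^t$ and $\mathcal R_k^t=f_k^tS_t$. Summing the recursion over the protected set gives a scalar recursion for $S_t$. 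The attack term is $Mz_t$, because the adversary's budget concentrates on the equalized sensitivities, and the noise term is $\sum_j\sigma^2 z_t$. Using $f_k^t\to f_k$ and $\tilde\gamma_k^t\to\tilde\gamma_k^T$, I would show $S_t\to0$ and hence $1-c_j^t\approx \sqrt{\tilde\gamma_j}\,S_{t-1}\Phi/(M+j_T\sigma^2)$ with $\Phi:=\sum_kf_k\sqrt{\tilde\gamma_k}$. To leading order in $S_{t-1}$, the summed recursion should then become
\[
S_t\approx S_{t-1}-\frac{S_{t-1}^2\Phi^2}{M+j_T\sigma^2}.
\]
The quadratic term comes from $-2\sum_j(1-c_j)\mathcal R_j^{t-1}+(M+j_T\sigma^2)z_t$, which evaluates to $-2S^2\Phi^2/(M+j_T\sigma^2)+S^2\Phi^2/(M+j_T\sigma^2)$. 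Equivalently, $1/S_t-1/S_{t-1}\to\Phi^2/(M+j_T\sigma^2)$. A Cesàro argument then gives $S_T\lesssim (M+j_T\sigma^2)/(T\Phi^2)$, which is the first term of \eqref{WG0a}.

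For the baseline \eqref{WGa} I would unroll the closed form as in the proof of Theorem~\ref{T2}, but with a non-shifted attack. Direction $j$ collects $\sigma^2/(\sigma^2/W+\sum_\tau\tilde\gamma_j^\tau)$ from noise and prior. The adversary, whose maximally sensitive direction is stationary, injects $M$ along that direction every step, contributing $M\sum_\tau\tilde\gamma_j^\tau/(\sigma^2/W+\sum_\tau\tilde\gamma_j^\tau)^2$. Maximizing over $j$ gives \eqref{WGa}, and $\sum_\tau\tilde\gamma_j^\tau\sim T\tilde\gamma_j^T$ reduces the attack term to $M/(T\tilde\gamma_j^T)$.

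The main obstacle is the asymptotic analysis of the coupled protected-set recursion. Two things must be controlled: the $O(S^3)$ remainder, and the errors from $f_j^t\to f_j$, so that the increment of $1/S_t$ genuinely converges. I would first derive exact algebraic identities for $S_t$ in terms of $A_{j_T}$, and only then expand, using that $A_{j_T}\to\infty$ as $S_{t-1}\to0$.
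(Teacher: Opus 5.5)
Your proposal is correct and follows the paper's proof essentially step for step. It uses the reciprocal telescoping $1/\mathcal R_j^t=1/\mathcal R_j^{t-1}+\tilde\gamma_j^t/\sigma^2$ on the inactive directions, the equalized active constraint $(1-c_j^t)^2/\tilde\gamma_j^t=z_t$ plus a Ces\`aro argument on $1/S_t$ for the protected set, and the unrolled closed-form EWC recursion for the baseline. The one difference is that the remainder you flag as the main obstacle never arises: substituting $\sqrt{z_t}$ gives the exact identity $S_t=S_{t-1}-\bigl(\sum_{j\le j_T}\sqrt{\tilde\gamma_j^t}\,\mathcal R_j^{t-1}\bigr)^2/\bigl(M+j_T\sigma^2+\sum_{j\le j_T}\tilde\gamma_j^t\mathcal R_j^{t-1}\bigr)$, which the paper uses directly to obtain monotonicity, $S_t\to0$, and the limit of $1/S_t-1/S_{t-1}$ with no expansion.
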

We prove the theorem under the stationary protected-set regime. Suppose that the
protected set selected by Proposition~\ref{T3} is fixed over time and equals
\(\{1,\dots,j_T\}\). We assume \(j_T\ge 1\); if \(j_T=0\), the protected-set term
below is absent and only the attack-free recursion for the inactive directions
remains. Recall that
\[
\tilde\gamma_j^t=\gamma_j^t n_t .
\]
For \(j\le j_T\), define the relative error share
\[
f_j^t:=
\frac{\mathcal R_j^t}{\sum_{k=1}^{j_T}\mathcal R_k^t},
\]
and assume \(f_j^t\to f_j\) as \(t\to\infty\). In the stationary instance, we also
write the limiting value of \(\tilde\gamma_j^t\) as \(\tilde\gamma_j^T\).

Let
\[
\Phi_t:=\sum_{j=1}^{j_T}\mathcal R_j^t
\]
be the aggregate loss over the protected directions. We first derive the
recursion for \(\Phi_t\). For a fixed task \(t\), the scalar minimax objective in
\eqref{S5update} can be written as
\begin{align}
\sum_{j=1}^{p}\mathcal R_j^t
=
\max_j
\frac{(1-c_j^t)^2}{\tilde\gamma_j^t}M
+
\sum_{j=1}^{p}
\left[
(c_j^t)^2\mathcal R_j^{t-1}
+
\frac{(1-c_j^t)^2\sigma^2}{\tilde\gamma_j^t}
\right],
\label{eq:T4_scalar_obj}
\end{align}
where
\[
c_j^t=\frac{\lambda_j^t}{\lambda_j^t+\gamma_j^t}.
\]
For protected directions \(j\le j_T\), the KKT condition in Proposition~\ref{T3}
makes the constraints active and equalizes the adversarial sensitivity:
\[
\frac{(1-c_j^t)^2}{\tilde\gamma_j^t}=z_t,
\qquad j=1,\dots,j_T.
\]
Since \(0\le c_j^t\le 1\), this gives
\[
1-c_j^t=\sqrt{z_t\tilde\gamma_j^t},
\qquad
c_j^t=1-\sqrt{z_t\tilde\gamma_j^t}.
\]
From the proof of Proposition~\ref{T3}, the active-set KKT condition also gives
\[
\frac{1}{\sqrt{z_t}}
=
\frac{
M+j_T\sigma^2+\sum_{i=1}^{j_T}\tilde\gamma_i^t\mathcal R_i^{t-1}
}
{
\sum_{i=1}^{j_T}\sqrt{\tilde\gamma_i^t}\mathcal R_i^{t-1}
}.
\]
Equivalently,
\[
\sqrt{z_t}
=
\frac{
\sum_{i=1}^{j_T}\sqrt{\tilde\gamma_i^t}\mathcal R_i^{t-1}
}
{
M+j_T\sigma^2+\sum_{i=1}^{j_T}\tilde\gamma_i^t\mathcal R_i^{t-1}
}.
\]

Now sum the risk contributions over \(j\le j_T\). Using
\(c_j^t=1-\sqrt{z_t\tilde\gamma_j^t}\), we have
\begin{align*}
\Phi_t
&=
\sum_{j=1}^{j_T}
(c_j^t)^2\mathcal R_j^{t-1}
+
\sum_{j=1}^{j_T}
\frac{(1-c_j^t)^2\sigma^2}{\tilde\gamma_j^t}
+
Mz_t
\\
&=
\sum_{j=1}^{j_T}
\left(1-\sqrt{z_t\tilde\gamma_j^t}\right)^2
\mathcal R_j^{t-1}
+
\sum_{j=1}^{j_T}
z_t\sigma^2
+
Mz_t
\\
&=
\sum_{j=1}^{j_T}\mathcal R_j^{t-1}
-
2\sqrt{z_t}
\sum_{j=1}^{j_T}\sqrt{\tilde\gamma_j^t}\mathcal R_j^{t-1}
+
z_t\sum_{j=1}^{j_T}\tilde\gamma_j^t\mathcal R_j^{t-1}
+
j_Tz_t\sigma^2
+
Mz_t
\\
&=
\Phi_{t-1}
-
2\sqrt{z_t}
\sum_{j=1}^{j_T}\sqrt{\tilde\gamma_j^t}\mathcal R_j^{t-1}
+
z_t
\left(
M+j_T\sigma^2
+\sum_{j=1}^{j_T}\tilde\gamma_j^t\mathcal R_j^{t-1}
\right).
\end{align*}
Substituting the expression for \(\sqrt{z_t}\) into the last display yields
\begin{align}
\Phi_t
=
\Phi_{t-1}
-
\frac{
\left(\sum_{j=1}^{j_T}\sqrt{\tilde\gamma_j^t}\mathcal R_j^{t-1}\right)^2
}
{
M+j_T\sigma^2+\sum_{j=1}^{j_T}\tilde\gamma_j^t\mathcal R_j^{t-1}
}.
\label{eq:protected_recursion}
\end{align}

Next, rewrite this recursion in terms of the relative shares \(f_j^{t-1}\). Since
\[
\mathcal R_j^{t-1}=f_j^{t-1}\Phi_{t-1},
\]
we have
\[
\sum_{j=1}^{j_T}\sqrt{\tilde\gamma_j^t}\mathcal R_j^{t-1}
=
\Phi_{t-1}
\sum_{j=1}^{j_T}f_j^{t-1}\sqrt{\tilde\gamma_j^t},
\]
and
\[
\sum_{j=1}^{j_T}\tilde\gamma_j^t\mathcal R_j^{t-1}
=
\Phi_{t-1}
\sum_{j=1}^{j_T}f_j^{t-1}\tilde\gamma_j^t.
\]
Therefore \eqref{eq:protected_recursion} becomes
\begin{align}
\Phi_t
=
\Phi_{t-1}
-
\frac{
\Phi_{t-1}^2
\left(\sum_{j=1}^{j_T}f_j^{t-1}\sqrt{\tilde\gamma_j^t}\right)^2
}
{
M+j_T\sigma^2
+
\Phi_{t-1}
\sum_{j=1}^{j_T}f_j^{t-1}\tilde\gamma_j^t
}.
\label{eq:Phi_recursion_expanded}
\end{align}
Equivalently,
\begin{align}
\Phi_t
=
\Phi_{t-1}
\frac{
M+j_T\sigma^2
+
\Phi_{t-1}
\left[
\sum_{j=1}^{j_T}f_j^{t-1}\tilde\gamma_j^t
-
\left(\sum_{j=1}^{j_T}f_j^{t-1}\sqrt{\tilde\gamma_j^t}\right)^2
\right]
}
{
M+j_T\sigma^2
+
\Phi_{t-1}
\sum_{j=1}^{j_T}f_j^{t-1}\tilde\gamma_j^t
}.
\label{eq:Phi_ratio}
\end{align}
Taking reciprocals in \eqref{eq:Phi_ratio}, we obtain
\begin{align}
\frac{1}{\Phi_t}-\frac{1}{\Phi_{t-1}}
=
\frac{
\left(\sum_{j=1}^{j_T}f_j^{t-1}\sqrt{\tilde\gamma_j^t}\right)^2
}
{
M+j_T\sigma^2
+
\Phi_{t-1}
\left[
\sum_{j=1}^{j_T}f_j^{t-1}\tilde\gamma_j^t
-
\left(\sum_{j=1}^{j_T}f_j^{t-1}\sqrt{\tilde\gamma_j^t}\right)^2
\right]
}.
\label{eq:protected_reciprocal}
\end{align}

We now analyze the asymptotic behavior of this recursion. Since
\(\sum_{j=1}^{j_T}f_j^{t-1}=1\), Cauchy's inequality gives
\[
\left(\sum_{j=1}^{j_T}f_j^{t-1}\sqrt{\tilde\gamma_j^t}\right)^2
\le
\sum_{j=1}^{j_T}f_j^{t-1}\tilde\gamma_j^t .
\]
Hence the bracketed term in the denominator of
\eqref{eq:protected_reciprocal} is nonnegative. It follows from
\eqref{eq:Phi_recursion_expanded} that
\[
0\le \Phi_t\le \Phi_{t-1},
\]
so \(\{\Phi_t\}\) is non-increasing and has a limit, say
\(\Phi_\infty\ge0\).

We claim that \(\Phi_\infty=0\). Under the stationarity assumption,
\[
\left(\sum_{j=1}^{j_T}f_j^{t-1}\sqrt{\tilde\gamma_j^t}\right)^2
\to
\left(\sum_{j=1}^{j_T}f_j\sqrt{\tilde\gamma_j^T}\right)^2
=:C_\star>0.
\]
The positivity follows because \(j_T\ge1\), \(\tilde\gamma_j^T>0\) on the
protected set, and \(\sum_{j=1}^{j_T}f_j=1\). The bracketed term in
\eqref{eq:protected_reciprocal} is bounded under the stationary regime. Hence,
if \(\Phi_\infty>0\), then the right-hand side of
\eqref{eq:protected_reciprocal} is eventually bounded below by a positive
constant. This would imply that \(1/\Phi_t\to\infty\), contradicting
\(\Phi_t\to\Phi_\infty>0\). Therefore,
\[
\Phi_t\to0.
\]

Since \(\Phi_{t-1}\to0\), the denominator in
\eqref{eq:protected_reciprocal} converges to \(M+j_T\sigma^2\), and the numerator
converges to \(C_\star\). Thus
\[
\frac{1}{\Phi_t}-\frac{1}{\Phi_{t-1}}
\to
\frac{C_\star}{M+j_T\sigma^2}.
\]
Summing this identity from \(t=1\) to \(T\), we get
\[
\frac{1}{\Phi_T}
=
\frac{1}{\Phi_0}
+
\sum_{t=1}^{T}
\left(
\frac{1}{\Phi_t}-\frac{1}{\Phi_{t-1}}
\right).
\]
Dividing by \(T\) and applying Cesaro's theorem gives
\[
\frac{1}{T\Phi_T}
=
\frac{1}{T\Phi_0}
+
\frac{1}{T}
\sum_{t=1}^{T}
\left(
\frac{1}{\Phi_t}-\frac{1}{\Phi_{t-1}}
\right)
\to
\frac{C_\star}{M+j_T\sigma^2}.
\]
Therefore,
\begin{align}
\Phi_T
\sim
\frac{M+j_T\sigma^2}
{
T\left(\sum_{j=1}^{j_T}f_j\sqrt{\tilde\gamma_j^T}\right)^2
}.
\label{eq:protected_loss_final}
\end{align}

We next consider the directions \(j>j_T\). Under the optimal adversarial response
in Proposition~\ref{T3}, no attack budget is allocated to these inactive
directions. The defense therefore uses
\[
\lambda_j^t=\frac{\sigma^2}{n_t\mathcal R_j^{t-1}}.
\]
For such a direction, the scalar recursion without adversarial noise is
\[
\mathcal R_j^t
=
(c_j^t)^2\mathcal R_j^{t-1}
+
\frac{(1-c_j^t)^2\sigma^2}{\tilde\gamma_j^t}.
\]
Using
\[
c_j^t
=
\frac{\lambda_j^t}{\lambda_j^t+\gamma_j^t}
=
\frac{\sigma^2}{\sigma^2+\tilde\gamma_j^t\mathcal R_j^{t-1}},
\]
we obtain
\[
1-c_j^t
=
\frac{\tilde\gamma_j^t\mathcal R_j^{t-1}}
{\sigma^2+\tilde\gamma_j^t\mathcal R_j^{t-1}}.
\]
Substituting these two expressions gives
\begin{align*}
\mathcal R_j^t
&=
\left(
\frac{\sigma^2}
{\sigma^2+\tilde\gamma_j^t\mathcal R_j^{t-1}}
\right)^2
\mathcal R_j^{t-1}
+
\left(
\frac{\tilde\gamma_j^t\mathcal R_j^{t-1}}
{\sigma^2+\tilde\gamma_j^t\mathcal R_j^{t-1}}
\right)^2
\frac{\sigma^2}{\tilde\gamma_j^t}
\\
&=
\frac{\sigma^4\mathcal R_j^{t-1}}
{(\sigma^2+\tilde\gamma_j^t\mathcal R_j^{t-1})^2}
+
\frac{\sigma^2\tilde\gamma_j^t(\mathcal R_j^{t-1})^2}
{(\sigma^2+\tilde\gamma_j^t\mathcal R_j^{t-1})^2}
\\
&=
\frac{\sigma^2\mathcal R_j^{t-1}
(\sigma^2+\tilde\gamma_j^t\mathcal R_j^{t-1})}
{(\sigma^2+\tilde\gamma_j^t\mathcal R_j^{t-1})^2}
\\
&=
\frac{\sigma^2\mathcal R_j^{t-1}}
{\sigma^2+\tilde\gamma_j^t\mathcal R_j^{t-1}}.
\end{align*}
Taking reciprocals yields
\[
\frac{1}{\mathcal R_j^t}
=
\frac{1}{\mathcal R_j^{t-1}}
+
\frac{\tilde\gamma_j^t}{\sigma^2}.
\]
Unrolling this recursion gives
\[
\frac{1}{\mathcal R_j^T}
=
\frac{1}{\mathcal R_j^0}
+
\sum_{t=1}^{T}\frac{\tilde\gamma_j^t}{\sigma^2}.
\]
Since \(\mathcal R_j^0\le W\), we have \(1/\mathcal R_j^0\ge 1/W\), and thus
\[
\mathcal R_j^T
\le
\frac{\sigma^2}
{\sigma^2/W+\sum_{t=1}^{T}\tilde\gamma_j^t}.
\]
Combining this bound over all inactive directions with
\eqref{eq:protected_loss_final} gives
\begin{align}
\mathcal R(w_T)
\lesssim
\frac{M+j_T\sigma^2}
{
T\left(\sum_{j=1}^{j_T}f_j\sqrt{\tilde\gamma_j^T}\right)^2
}
+
\sum_{j=j_T+1}^{p}
\frac{\sigma^2}
{\sigma^2/W+\sum_{t=1}^{T}\tilde\gamma_j^t}.
\label{eq:T4_ours_final}
\end{align}
This proves the claimed asymptotic bound for our robust feature defense.

Finally, we compare with the baseline regularizer \(H_t\) in~\eqref{H_t}. Under
this baseline, the eigenvalue in direction \(j\) is
\[
\lambda_j^t
=
\frac{\sigma^2/W+\sum_{\tau=1}^{t-1}\tilde\gamma_j^\tau}{n_t}.
\]
Let
\[
A_{j,T}:=\sigma^2/W+\sum_{t=1}^{T}\tilde\gamma_j^t.
\]
Unrolling the corresponding linear recursion in direction \(j\) gives the final
estimation error as a cumulative weighted sum of the initial error, the inherent
noise, and the adversarial noise. The coefficient multiplying the task-\(t\)
noise in direction \(j\) is \(\sqrt{\tilde\gamma_j^t}/A_{j,T}\). Therefore, if
the adversary allocates variance \(M\) to direction \(j\) at each task, the final
attack-induced variance in this direction is
\[
M\sum_{t=1}^{T}
\frac{\tilde\gamma_j^t}{A_{j,T}^2}
=
M
\frac{\sum_{t=1}^{T}\tilde\gamma_j^t}
{\left(\sigma^2/W+\sum_{t=1}^{T}\tilde\gamma_j^t\right)^2}.
\]
For the stationary-maximizer instance considered in the theorem, the adversary
selects the direction maximizing this quantity, so the attack-induced term is
\[
M\max_j
\frac{\sum_{t=1}^{T}\tilde\gamma_j^t}
{\left(\sigma^2/W+\sum_{t=1}^{T}\tilde\gamma_j^t\right)^2}
\sim
\max_j
\frac{M}{\sum_{t=1}^{T}\tilde\gamma_j^t}.
\]
Under the stationary setting
\(\sum_{t=1}^{T}\tilde\gamma_j^t\sim T\tilde\gamma_j^T\), this becomes
\[
\max_j\frac{M}{T\tilde\gamma_j^T}.
\]
The benign noise contribution under \(H_t\) is the standard attack-free term
\[
\sum_{j=1}^{p}
\frac{\sigma^2}
{\sigma^2/W+\sum_{t=1}^{T}\tilde\gamma_j^t}.
\]
Combining the attack-induced term and the benign noise term gives
\[
\mathcal R(w_T)
\lesssim
\max_j\frac{M}{T\tilde\gamma_j^T}
+
\sum_{j=1}^{p}
\frac{\sigma^2}
{\sigma^2/W+\sum_{t=1}^{T}\tilde\gamma_j^t},
\]
which proves the comparison in~\eqref{WG}.

\section{Experiments}\label{App:experiment}

\subsection{Implementation Details}

\paragraph{Datasets and task construction.}
All experiments are implemented in PyTorch. For both CIFAR-10 and CIFAR-100, the
50,000 training examples are randomly permuted once and split into 100 equal
continual-learning tasks, so each task contains 500 training examples. After
each task, we evaluate the current model on the full clean test set. Unless
otherwise stated, all random seeds are fixed across compared methods so that the
task order and attacked tasks are identical within each experiment.

\paragraph{CIFAR-100 with frozen ViT features.}
For the CIFAR-100 experiments, we first extract image features offline using a
pretrained ViT-B/16 model (\texttt{vit\_base\_patch16\_224}) from \texttt{timm} \cite{dosovitskiy2021an}.
Images are resized to \(224\times224\) and normalized with ImageNet statistics.
We remove the classification head and use the pre-logit representation, giving
a 768-dimensional feature vector for each image. The ViT encoder is frozen in
all subsequent CL runs, and only a linear classifier from 768 dimensions to 100
classes is trained.

The linear head is trained with cross-entropy loss using AdamW with learning
rate \(10^{-4}\), batch size 128, and 10 epochs per task in all CIFAR-100
experiments. Since the ViT backbone is frozen, we treat the frozen feature
vectors as the task inputs \(X_t\) in the linear model used by our defense. For
the shifted-attack experiments on CIFAR-100, the adversary poisons 10 out of the
100 tasks. The attacked tasks are sampled once and held fixed across all methods
on the same dataset. On each attacked task, the adversary adds a constant shift
of 10 to every coordinate of the training feature vectors. For the non-shifted
attack experiments, the adversary is allowed to attack every task. We consider a
challenging bounded attack with budget \(M=2000\), where the perturbation is
computed by solving the adversary's inner maximization problem
in~\eqref{eq:app_defense}.



\paragraph{CIFAR-10 with a CNN trained from scratch.}
For CIFAR-10, we train a convolutional network from scratch. The training
transform uses random horizontal flips, random \(32\times32\) crops with padding
4, conversion to tensors, and normalization by mean and standard deviation
\((0.5,0.5,0.5)\). The test transform uses the same normalization without data
augmentation. The CNN has three convolutional stages with channels
\(3\to32\to64\to128\). Each stage consists of a \(3\times3\) convolution,
ReLU, and \(2\times2\) max pooling. The classifier contains a 256-unit hidden
layer, ReLU, dropout with rate 0.2, and a final linear layer. We train with
Adam, learning rate \(2\times10^{-3}\), batch size 128, 10 epochs per task. 

For the shifted, infrequent-attack experiments, the adversary poisons 10 out of
the 100 tasks. In the T2T verification defense, we use a diagonal Hessian
approximation in place of the exact Hessian required to compute the detection
score. A task pair is flagged if the current score is larger than \(2.5\) times
the mean score over the previous five available scores. When a task pair is
flagged, we roll back the model and EWC statistics to the checkpoint before the
current task and discard the poisoned update, matching the rejection step in
Alg.~\ref{M1}.

\paragraph{Memorization baseline.}
For the iCaRL-style memorization baseline, we maintain a fixed exemplar memory
per class and update it after each task by herding in the current feature space.
For CIFAR-100 we keep 5 exemplars per class; for CIFAR-10 we keep 10 exemplars
per class. The model is trained on the union of the current task and the replay
memory, without the EWC penalty, so the comparison isolates the effect of
memorization-based replay.

\paragraph{Synthetic linear experiment.}
For the synthetic experiment in Fig.~\ref{experiment3_log_scale}, we use
dimension \(p=8\), horizon \(T=10\), \(n_t=20\) samples per task, noise variance
\(\sigma^2=1\), attack budget \(M=10\), and initial variance \(W=1\). Each task
matrix is generated with imbalanced singular values: the largest scale is 10,
the smallest is 0.1, and the remaining scales are sampled uniformly from
\([1,3]\), followed by random orthogonal rotations. For the general full-matrix
case, \(H_t\) is optimized as \(LL^\top+\epsilon I\) by Nelder-Mead from multiple
random initializations. We then run 500 Monte Carlo simulations in which the
adversary samples \(\eta_t=\sqrt{M}zv_1\), where \(z\sim\mathcal N(0,1)\) and
\(v_1\) is the top eigenvector of the corresponding sensitivity matrix. The
figure below reports the mean squared parameter error for our defense and for the EWC
regularizer in~\eqref{H_t}.

\begin{figure}[H]
\centering
\includegraphics[width=0.6\columnwidth]{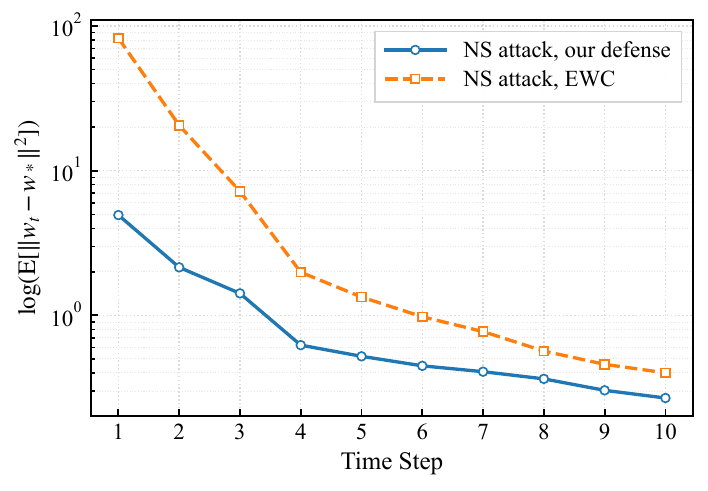}
\caption{Log excess risk $\mathcal{R}(w_t)$ under our robust defense and EWC in~\eqref{H_t} against strategic attack.}
\label{experiment3_log_scale}
\end{figure}


\end{document}